\theoremstyle{plain}
\newtheorem{thm}{Theorem}
\newtheorem{cor}{Corollary}
\newtheorem{assump}{Assumption}
\theoremstyle{definition}
\theoremstyle{remark}
\theoremstyle{plain}
\newtheorem{lem}[thm]{\protect\lemmaname}
  \providecommand{\lemmaname}{Lemma}
  \providecommand{\remarkname}{Remark}
\providecommand{\definitionname}{Definition}
\providecommand{\lemmaname}{Lemma}
\providecommand{\remarkname}{Remark}
\title{Optimal Algorithms for Stochastic Multi-Armed Bandits with Heavy Tailed Rewards}
\author{
  Kyungjae Lee \\
  Department of Electrical and Computer Engineering \\
  Seoul National University \\
  \texttt{kyungjae.lee@rllab.snu.ac.kr} \\
  \And 
  Hongjun Yang \\
  Artificial Intelligence Graduate School\qquad \\
  UNIST \\
  \texttt{hj42@unist.ac.kr} \\
  \And
  Sungbin Lim \\
  Artificial Intelligence Graduate School\qquad \\
  UNIST \\
  \texttt{sungbin@unist.ac.kr}
  \And
  Songhwai Oh \\
  Department of Electrical and Computer Engineering \\
  Seoul National University \\
  \texttt{songhwai@snu.ac.kr} 
}
\begin{document}

\maketitle
\begin{abstract}
In this paper, we consider stochastic multi-armed bandits (MABs) with heavy-tailed rewards, whose $p$-th moment is bounded by a constant $\nu_{p}$ for $1<p\leq2$.
First, we propose a novel robust estimator which does not require $\nu_{p}$ as prior information, while other existing robust estimators demand prior knowledge about $\nu_{p}$.
We show that an error probability of the proposed estimator decays exponentially fast.
Using this estimator, we propose a perturbation-based exploration strategy and develop a generalized regret analysis scheme that provides upper and lower regret bounds by revealing the relationship between the regret and the cumulative density function of the perturbation.
From the proposed analysis scheme, we obtain gap-dependent and gap-independent upper and lower regret bounds of various perturbations.
We also find the optimal hyperparameters for each perturbation, which can achieve the minimax optimal regret bound with respect to total rounds.
In simulation, the proposed estimator shows favorable performance compared to existing robust estimators for various $p$ values and, for MAB problems, the proposed perturbation strategy outperforms existing exploration methods.
\end{abstract}

\section{Introduction}
A multi-armed bandit (MAB) is a fundamental yet powerful framework to model a sequential decision making problem.
In this problem, an intelligent agent continuously chooses an action and receives a noisy feedback in the form of a stochastic reward, but no information is provided for unselected actions.
Then, the goal of the agent is to maximize cumulative rewards over time by identifying an optimal action which has the maximum reward.
However, since MABs often assume that prior knowledge about rewards is not given, the agent faces an innate dilemma between gathering new information by exploring sub-optimal actions (exploration) and choosing the best action based on the collected information (exploitation).
Designing an efficient exploration algorithm for MABs is a long-standing challenging problem.
The efficiency of the exploration method is measured by a cumulative regret which is the sum of differences between the maximum reward and the reward obtained at each round.

Early researches for stochastic MABs have been investigated under the sub-Gaussian assumption on a reward distribution, which has the exponential-decaying behavior.
However, there remains a large class of distributions which are not covered by the sub-Gaussianity and are called heavy-tailed distributions.
While there exist several methods for handling such heavy-tailed rewards \cite{bubeck2013bandits,vakili2013deterministic}, these methods have two main drawbacks.
First, both methods utilize a class of robust reward estimators which require the prior knowledge about the bound on the moments of the rewards distributions, which is hardly available for practical problems.
Furthermore, the algorithm proposed in \cite{vakili2013deterministic} requires the gap information, which is the difference between the maximum and second-largest reward, to balance the exploration and exploitation.
These features make the previous algorithms impractical since information about the bound or the gap is not accessible in general.
Second, both methods have the sub-optimal gap-independent regret bound.
\citeauthor{bubeck2013bandits} \cite{bubeck2013bandits} derive the lower bound of the regret for an arbitrary algorithm.
However, the upper regret bound of the algorithms in \cite{bubeck2013bandits,vakili2013deterministic} does not match the lower regret bound. Thus, there exists a significant gap between the upper and lower bound, which can be reduced potentially.
These drawbacks motivate us to design an algorithm which requires less prior knowledge about rewards yet achieves an optimal efficiency.

In this paper, we propose a novel $p$-robust estimator which does not depend on prior information about the bound on the $p$-th moment $p\in(1,2]$.
Combined with this estimator, we develop a perturbed exploration method for heavy-tailed rewards.
A perturbation-based exploration stochastically smooths a greedy policy by adding a random perturbation to the estimated rewards and selecting a greedy action based on the perturbed estimations;
hence the distribution of the perturbation determines the trade-off between exploration and exploitation \cite{KalaiV05,kim2019optimality}.
We first analyze the regret bound of general perturbation method.
Notably, we show that, if the tail probability of perturbations decays slower than the error probability of the estimator, then the proposed analysis scheme provides both upper and lower regret bounds.
By using this general analysis scheme, we show that the optimal regret bound can be achieved for a broad class of perturbations, including Weibull, generalized extreme value, Gamma, Pareto, and Fr\'echet distributions.
Empirically, the $p$-robust estimator shows favorable performance compared to the truncated mean and median of mean, which belong to the class of robust estimators \cite{bubeck2013bandits}.
For MAB problems, we also show that the proposed perturbation methods generally outperform robust UCB \cite{bubeck2013bandits} and DSEE \cite{vakili2013deterministic}, which is consistent with our theoretical results.

The main contribution of this paper can be summarized in four-folds.
First, we derive the lower regret bound of robust UCB \cite{bubeck2013bandits}, which has the sub-optimal gap-independent regret bound. 
Second, we propose novel $p$-robust estimator which does not rely on prior information about the bound on the $p$-th moment of rewards and prove that its tail probability decays exponentially.
Third, by combining the proposed estimator with the perturbation method, we develop a general regret analysis scheme by revealing the relationship between regret and cumulative density function of the perturbation.
Finally, we show that the proposed strategy can achieve the optimal regret bound in terms of the number of rounds $T$, which is the first algorithm achieving the minimax optimal rate under heavy-tailed rewards.

\section{Preliminaries}\label{sec:pre}
\paragraph{Stochastic Multi-Armed Bandits with Heavy Tailed Rewards}
We consider a stochastic multi-armed bandit problem defined as a tuple $\left(\mathcal{A}, \left\{r_{a}\right\}\right)$ where $\mathcal{A}$ is a set of $K$ actions, and $r_{a}\in[0,1]$ is a mean reward for action $a$.
For each round $t$, the agent chooses an action $a_{t}$ based on its exploration strategy and, then, get a stochastic reward:
    $\mathbf{R}_{t,a}:= r_{a} + \epsilon_{t,a}$
where $\epsilon_{t,a}$ is an independent and identically distributed noise with $\mathbb{E}\left[\epsilon_{t,a}\right]=0$ for all $t$ and $a$.
Note that $r_{a}$ and $\epsilon_{t,a}$ are called the \emph{mean of reward} and the \emph{noise of reward}, respectively.
$r_{a}$ is generally assumed to be unknown.
Then, the goal of the agent is to minimize the cumulative regret over total rounds $T$, defined as 
$\mathcal{R}_{T}:=\sum_{t=1}^{T}r_{a^{\star}}-\mathbb{E}_{a_{1:t}}\left[r_{a_{t}}\right]$,
where $a^{\star}:=\arg\max_{a\in\mathcal{A}}r_{a}$. The cumulative regret over $T$ represents the performance of an exploration strategy.
The smaller $\mathcal{R}_{T}$, the better exploration performance.
To analyze $\mathcal{R}_{T}$,
we consider the heavy-tailed assumption on noises whose $p$-th moment is bounded by a constant $\nu_{p}$ where $p\in(1,2]$, i.e., $\mathbb{E}|\mathbf{R}_{t,a}|^{p} \leq \nu_{p}$ for all $a\in\mathcal{A}$.
Without loss of generality, we regard $p$ as the maximal order of the bounded moment, because, if the $p$-th moment is finite, then the moment with lower order is also finite automatically.

In this paper, we analyze both gap-dependent and gap-independent regret bounds.
The gap-dependent bound is the upper regret bound depending on the gap information $\Delta_{a}:=r_{a^{\star}}-r_{a}$ for $a\neq a^{\star}$
and, on the contrary, the gap-independent bound is the upper regret bound independent of the gap.

\paragraph{Related Work}
While various researches \cite{MedinaY16, ShaoYKL18linbet,LuWHZ19lipschitzbandit,niss2020ucbcorrupt} have investigated heavy-tailed reward setting,
they focused on variants of the MAB such as linear bandit \cite{MedinaY16}, contextual bandit \cite{ShaoYKL18linbet}, Lipschitz bandit \cite{LuWHZ19lipschitzbandit}, or $\epsilon$ contaminated bandit \cite{niss2020ucbcorrupt}.
In this paper, we focus on a conventional MAB problem and provide an optimal algorithm with respect to $T$.
In a conventional MAB setting, few methods have handled heavy-tailed distributions \cite{bubeck2013bandits,vakili2013deterministic,cesa2017boltzmann,KagrechaNJ19}.
\citeauthor{bubeck2013bandits} \cite{bubeck2013bandits} have proposed robust UCB by employing a confidence bound of a class of robust estimators.
Note that this class contains the truncated mean and the median of mean for $p\in(1,2]$ and Catoni's $M$ estimator for $p=2$.
Under these assumptions on rewards and estimators, robust UCB achieves the gap-dependent bound $O\left(\sum_{a}\ln(T)/\Delta_{a}^{1/(p-1)}+\Delta_{a}\right)$ and gap-independent bound $O\left((K\ln(T))^{1-1/p}T^{1/p}\right)$.
However, to achieve this regret bound and to define a confidence bound of the robust estimator, prior knowledge of the bound of moments $\nu_{p}$ is required.
This condition restricts the practical usefulness of robust UCB since $\nu_{p}$ is not accessible for many MAB problems.
Furthermore, while it is proved that the lower regret bound of the MAB with heavy-tailed rewards is $\Omega(K^{1-1/p}T^{1/p})$, the upper regret bound of robust UCB has an additional factor of $\ln(T)^{1-1/p}$.
A similar restriction also appears in \cite{vakili2013deterministic}.
\citeauthor{vakili2013deterministic} \cite{vakili2013deterministic} have proposed a deterministic sequencing of exploration and exploitation (DSEE) by exploring every action uniformly with a deterministic sequence.
It is shown that DSEE has the gap-dependent bound $O(\ln(T))$, but, its result holds when $\nu_{p}$ and the minimum gap $\min_{a\in\mathcal{A}}\Delta_{a}$ are known as prior information.

The dependence on $\nu_{p}$ was first removed in \cite{cesa2017boltzmann} for $p=2$.
\citeauthor{cesa2017boltzmann} \cite{cesa2017boltzmann} have proposed a robust estimator by modifying the Catoni's $M$ estimator and employed the Boltzmann-Gumbel exploration (BGE) with the robust estimator.
In BGE, a Gumbel perturbation is used to encourage exploration instead of using a confidence bound of the robust estimator.
One interesting observation is that the robust estimator proposed in \cite{cesa2017boltzmann} has a weak tail bound, whose error probability decays slower than that of Catoni's $M$ estimator \cite{catoni2012challenging}.
However, BGE achieved gap-dependent bound $O\left(\sum_{a}\ln(T\Delta_{a}^{2})^{2}/\Delta_{a}+\Delta_{a}\right)$ and gap-independent bound $O(\sqrt{KT}\ln(K))$ for $p=2$.
While $\ln(K)$ factor remains, BGE has a better bound than robust UCB in terms of $T$ when $p=2$.
\citeauthor{KagrechaNJ19} \cite{KagrechaNJ19} also tried to remove the dependency on $\nu_{p}$ for $p\in(1,2]$ by proposing a generalized successive rejects (GSR) method.
While GSR does not depend on any prior knowledge of the reward distribution, however, GSR only focuses on identifying the optimal arm, also known as pure exploration \cite{BubeckMS09}, rather than minimizing the cumulative regret.
Hence, GSR lose much reward during the learning process.


\section{Sub-Optimality of Robust Upper Confidence Bounds}
In this section, we discuss the sub-optimality of robust UCB \cite{bubeck2013bandits} by showing the lower bound of robust UCB.
The robust UCB employs a class of robust estimators which satisfies the following assumption.
\begin{assump}\label{asmp:rob_est}
Let $\left\{Y_{k}\right\}_{k=1}^{\infty}$ be i.i.d. random variables with the finite $p$-th moment for $p\in(1,2]$. Let $\nu_{p}$ be a bound of the $p$-th moment and $y$ be the mean of $Y_{k}$. Assume that, for all $\delta\in(0,1)$ and $n$ number of observations, there exists an estimator $\hat{Y}_{n}(\eta,\nu_{p},\delta)$ with a parameter $\eta$ such that
{\footnotesize\begin{align*}
\mathbb{P}\left(\hat{Y}_{n} > y + \nu_{p}^{1/p}\left(\frac{\eta\ln(1/\delta)}{n}\right)^{1-1/p}\right)\leq \delta,\;\;\;
\mathbb{P}\left(y > \hat{Y}_{n} + \nu_{p}^{1/p}\left(\frac{\eta\ln(1/\delta)}{n}\right)^{1-1/p}\right)\leq \delta.
\end{align*}}
\end{assump}
This assumption naturally provides the confidence bound of the estimator $\hat{Y}_{n}$.
\citeauthor{bubeck2013bandits} \cite{bubeck2013bandits} provided several examples satisfying this assumption, such as truncated mean, median of mean, and Catoni's $M$ estimator.
These estimators essentially require $\nu_{p}$ to define $\hat{Y}_{n}$.
Furthermore, $\delta$ should be predefined to bound the tail probability of $\hat{Y}_{n}$ by $\delta$.
By using this confidence bound, at round $t$, robust UCB selects an action based on the following strategy,
\begin{align}\label{eqn:robust_ucb}
    a_{t}:=\arg\max_{a\in\mathcal{A}}\left\{\hat{r}_{t-1,a} + \nu_{p}^{1/p}\left(\eta\ln(t^{2})/n_{t-1,a}\right)^{1-1/p}\right\}
\end{align}
where $\hat{r}_{t-1,a}$ is an estimator which satisfies Assumption \ref{asmp:rob_est} with $\delta=t^{-2}$ and $n_{t-1,a}$ denotes the number of times $a\in\mathcal{A}$ have been selected.
We first show that there exists a multi-armed bandit problem for which strategy (\ref{eqn:robust_ucb}) has the following lower bound of the expected cumulative regret.
\begin{thm}\label{thm:low_bnd_ucb}
There exists a $K$-armed stochastic bandit problem for which the regret of robust UCB has the following lower bound, for $T>\max\left(10,\left[\frac{\nu^{\frac{1}{(p-1)}}}{\eta(K-1)}\right]^{2}\right)$,
{\footnotesize\begin{align}\label{eq:ucb_low_bnd}
\mathbb{E}[\mathcal{R}_{T}] \geq \Omega\left(\left(K\ln(T)\right)^{1-1/p}T^{1/p}\right).
\end{align}}
\end{thm}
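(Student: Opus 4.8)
The plan is to exhibit an explicit $K$-armed instance on which robust UCB over-explores the suboptimal arms by a factor of $(\ln T)^{1-1/p}$ relative to the information-theoretic optimum, and then lower bound the regret by summing the forced pulls of those arms. Concretely, I would take $K-1$ identical suboptimal arms each with gap $\Delta$, and one optimal arm, where $\Delta$ is chosen as a function of $T$, $K$, $\eta$, and $\nu$ to be tuned at the end (the condition $T>\max(10,[\nu^{1/(p-1)}/(\eta(K-1))]^2)$ already hints at the scaling $\Delta\asymp (\eta K\ln T/T)^{1-1/p}$). The rewards should be chosen so that the noise genuinely has $p$-th moment of order $\nu$ but small variance-like behavior is not available — e.g.\ a two-point or Pareto-type noise — so that no faster concentration than Assumption~\ref{asmp:rob_est} can rescue the algorithm; the point is that the confidence width in \eqref{eqn:robust_ucb}, namely $\nu_p^{1/p}(\eta\ln(t^2)/n_{t-1,a})^{1-1/p}$, is forced to be essentially tight.

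The core of the argument is a deterministic-style lower bound on $n_{T,a}$ for each suboptimal arm $a$. The key observation is structural: robust UCB never pulls arm $a$ at round $t$ once its index has dropped below the optimal arm's index, but its index is at least $\hat r_{t-1,a} + \nu_p^{1/p}(\eta\ln(t^2)/n_{t-1,a})^{1-1/p}$, and on a high-probability event (using the lower tail bound in Assumption~\ref{asmp:rob_est} with $\delta = t^{-2}$, union-bounded over rounds so the failure probability is summable) we have $\hat r_{t-1,a}\ge r_a - \nu_p^{1/p}(\eta\ln(t^2)/n_{t-1,a})^{1-1/p}$ and $\hat r_{t-1,a^\star}\le r_{a^\star} + \nu_p^{1/p}(\eta\ln(t^2)/n_{t-1,a^\star})^{1-1/p}$. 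Hence on this good event, if arm $a$ is \emph{not} chosen at round $t$ it must be that its index is dominated, which after rearranging forces $n_{t-1,a}$ to satisfy roughly $\nu_p^{1/p}(\eta\ln(t^2)/n_{t-1,a})^{1-1/p}\gtrsim \Delta$, i.e.\ $n_{t-1,a}\lesssim \nu_p^{1/(p-1)}(\eta\ln(t^2))/\Delta^{p/(p-1)}$ — wait, that is an \emph{upper} bound, so I instead argue the contrapositive: as long as $n_{t-1,a}$ is below that threshold, arm $a$'s index exceeds $r_{a^\star}$ minus a small term and hence (by the same pair of inequalities applied to $a^\star$, whose confidence width shrinks once $a^\star$ has been pulled many times) arm $a$ \emph{must} be pulled. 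This yields $n_{T,a}\ge c\,\nu_p^{1/(p-1)}\eta\ln(T)/\Delta^{p/(p-1)}$ for each of the $K-1$ suboptimal arms, on the good event.

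Summing the contributions to regret, $\mathbb{E}[\mathcal{R}_T]\ge \sum_{a\ne a^\star}\Delta\cdot\mathbb{E}[n_{T,a}]\ge c(K-1)\,\Delta\cdot\nu_p^{1/(p-1)}\eta\ln(T)/\Delta^{p/(p-1)} = c(K-1)\nu_p^{1/(p-1)}\eta\ln(T)\,\Delta^{-1/(p-1)}$, valid provided $\Delta$ is large enough that the forced-pull count does not exceed $T/(K-1)$ (otherwise the arms cannot all be pulled that often, and $T$ itself is the binding constraint). Balancing these two regimes — the regret is $\min$ of an increasing-in-$1/\Delta$ term and the round budget $\asymp T\Delta$ — the worst case is $\Delta\asymp (\eta K\ln(T)/T)^{1-1/p}$ (up to $\nu$ factors), which plugged back in gives $\mathbb{E}[\mathcal{R}_T]\gtrsim (K\ln T)^{1-1/p}T^{1/p}$, and the threshold on $T$ in the statement is exactly what makes this choice of $\Delta$ legitimate (i.e.\ $\le 1$ and consistent with the pull-count bound). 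The main obstacle, and the step requiring the most care, is the second paragraph: turning the index comparison into a clean \emph{lower} bound on $n_{T,a}$ requires simultaneously controlling the optimal arm's confidence width — one must show $a^\star$ gets pulled $\Omega(T)$ times (easy, since it is optimal and the others are capped), so that its width is $O((\ln T/T)^{1-1/p})$, which is $o(\Delta)$ for the balanced choice, and hence negligible in the comparison. Handling the low-probability failure event (where the concentration inequalities break) contributes only an $O(1)$ additive term to the regret via the union bound over $\delta=t^{-2}$, so it does not affect the $\Omega$ bound.
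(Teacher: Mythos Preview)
Your overall plan---construct an instance with $K-1$ identical suboptimal arms at gap $\Delta$, show each is pulled $\Omega(\eta\ln(T)/\Delta^{p/(p-1)})$ times, then optimize $\Delta$---is the right shape, and your final balancing of $\Delta\asymp(\eta K\ln T/T)^{1-1/p}$ matches the paper. But the central step fails as written.

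The issue is your lower bound on the suboptimal arm's index. Using the lower tail of Assumption~\ref{asmp:rob_est}, the best you get on the good event is $\hat r_{t-1,a}\ge r_a-\nu_p^{1/p}(\eta\ln(t^2)/n_{t-1,a})^{1-1/p}$, so the index $\hat r_{t-1,a}+\nu_p^{1/p}(\eta\ln(t^2)/n_{t-1,a})^{1-1/p}$ is only guaranteed to be $\ge r_a$. The bonus and the possible undershoot cancel. Meanwhile the optimal arm's index is $\ge r_{a^\star}$ on the same good event (the bonus can only help it). So your comparison ``arm $a$'s index exceeds $r_{a^\star}$ minus a small term, hence arm $a$ must be pulled'' does not go through: you have suboptimal index $\ge r_{a^\star}-\Delta$ and optimal index $\ge r_{a^\star}$, and there is no contradiction. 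The confidence width of arm $a$ never appears in your lower bound on its index, so you cannot leverage ``$n_{t-1,a}$ small $\Rightarrow$ width large $\Rightarrow$ arm $a$ pulled''.

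Relatedly, your instinct to make the noise genuinely heavy-tailed ``so that no faster concentration than Assumption~\ref{asmp:rob_est} can rescue the algorithm'' is exactly backwards. The algorithm's exploration bonus is fixed at $\nu_p^{1/p}(\eta\ln(t^2)/n)^{1-1/p}$ regardless of the true noise; what you actually want for the lower bound is that the \emph{estimate} $\hat r_{t-1,a}$ sits near $r_a$ (not at $r_a-\text{width}_a$), so that the inflated bonus is fully expressed in the index. The paper resolves this by taking \emph{deterministic} rewards: the optimal arm returns $\Delta$, the others return $0$, and for the estimators in question this yields $\hat r_{a}=0$ for $a\neq a^\star$ and $\hat r_{a^\star}\le\Delta$ exactly, with no concentration argument needed. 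Then the suboptimal index is precisely $\nu^{1/p}(\eta\ln(t^2)/n_{t-1,a})^{1-1/p}$, and the comparison with the optimal index $\le\Delta+\nu^{1/p}(\eta\ln(t^2)/n_{t-1,a^\star})^{1-1/p}$ becomes purely deterministic arithmetic. The paper then runs a case analysis on the event $E_t=\{\sum_{a\neq a^\star}n_{t-1,a}\le cT\}$: if $\mathbb{P}(E_t)\le 1/2$ for some $t$ the regret is large trivially, and if $\mathbb{P}(E_t)>1/2$ for all $t$ one shows $E_t\subset\{a_t\neq a^\star\}$ for $t\ge t_0$ by the index comparison, giving $\mathbb{P}(a_t\neq a^\star)\ge 1/2$ on a linear fraction of rounds. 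Your direct per-arm argument can be repaired by switching to deterministic rewards as well; without that, you would need a separate distributional argument (e.g.\ that $\hat r_{t-1,a}\ge r_a$ with constant probability for your specific noise/estimator pair), which Assumption~\ref{asmp:rob_est} alone does not supply.
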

The proof is done by constructing a counterexample which makes robust UCB have the lower bound \eqref{eq:ucb_low_bnd} and the entire proof can be found in the supplementary material.
Unfortunately, Theorem \ref{thm:low_bnd_ucb} tells us that the sub-optimal factor $\ln(T)^{1-1/p}$ cannot be removed and robust UCB has the tight regret bound $\Theta\left(\left(K\ln(T)\right)^{1-1/p}T^{1/p}\right)$ since the lower bound of \eqref{eq:ucb_low_bnd} and upper bound in \cite{bubeck2013bandits} are matched up to a constant.
This sub-optimality is our motivation to design a perturbation-based exploration with a new robust estimator.
Now, we discuss how to achieve the optimal regret bound $O\left(T^{1/p}\right)$ by removing the factor $\ln(T)^{1-1/p}$.

\section{Adaptively Perturbed Exploration with A $p$-Robust Estimator}
In this section, we propose a novel robust estimator whose error probability decays exponentially fast when the $p$-th moment of noises is bounded for $p\in(1,2]$.
Furthermore, we also propose an adaptively perturbed exploration with a $p$-robust estimator (APE$^{2}$).
We first define a new influence function $\psi_{p}(x)$ as:
\begin{align}
\psi_{p}(x):=\ln\left(b_{p}|x|^{p}+x+1\right)\mathbb{I}[x\geq 0]- \ln\left(b_{p}|x|^{p}-x+1\right)\mathbb{I}[x<0]
\end{align}
where $b_{p}:=\left[2\left((2-p)/(p-1)\right)^{1-2/p}+\left((2-p)/(p-1)\right)^{2-2/p}\right]^{-p/2}$ and $\mathbb{I}$ is an indicator function.
Note that $\psi_{p}(x)$ generalizes the original influence function proposed in \cite{catoni2012challenging}.
In particular, when $p=2$, the influence function in \cite{catoni2012challenging} is recovered.
Using $\psi_{p}(x)$, a novel robust estimator can be defined as the following theorem.
\begin{thm}
    \label{thm:wrob_est} 
Let $\left\{Y_{k}\right\}_{k=1}^{\infty}$ be i.i.d. random variables sampled from a heavy-tailed distribution with a finite $p$-th moment, $\nu_{p}:=\mathbb{E}\left|Y_{k}\right|^{p}$, for $p\in (1,2]$. Let $y:=\mathbb{E}\left[Y_{k}\right]$ and define an estimator as
\begin{align}
\hat{Y}_{n}:=c/n^{1-1/p}\cdot\sum_{k=1}^{n}\psi_{p}\left(Y_{k}/(cn^{1/p})\right)\label{eqn:wrob_est}
\end{align}
where $c>0$ is a constant.
Then, for all $\epsilon>0$,
\begin{align}\label{eqn:est_tail_bnd}
\mathbb{P}\left(\hat{Y}_{n} >y+\epsilon\right) \leq \exp\left(-\frac{n^{\frac{p-1}{p}}\epsilon}{c}+\frac{b_{p}\nu_{p}}{c^{p}}\right),\;\mathbb{P}\left(y >\hat{Y}_{n}+\epsilon\right) \leq \exp\left(-\frac{n^{\frac{p-1}{p}}\epsilon}{c}+\frac{b_{p}\nu_{p}}{c^{p}}\right).
\end{align}
\end{thm}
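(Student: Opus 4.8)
The plan is to run an exponential Markov (Chernoff/Cram\'er) argument of exactly the type used for Catoni's estimator, exploiting the fact that the internal rescaling by $cn^{1/p}$ is engineered so that the multiplier in the exponential Markov step can be taken to be $1$. Everything rests on a single pointwise inequality for the influence function: for every $x\in\mathbb{R}$,
\begin{align*}
e^{\psi_p(x)}\le 1+x+b_p|x|^p .
\end{align*}
For $x\ge 0$ this is an equality by the definition of $\psi_p$. For $x<0$, writing $t=|x|$, it is equivalent to $(1-t+b_pt^p)(1+t+b_pt^p)\ge 1$, i.e. to $2b_pt^p+b_p^2t^{2p}\ge t^2$. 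To see this, I would minimize $g(t):=2b_pt^{p-2}+b_p^2t^{2p-2}$ over $t>0$ (here $p\in(1,2]$ is what makes $g\to\infty$ at both ends, so an interior minimizer exists): the unique critical point is at $t^p=(2-p)/\big(b_p(p-1)\big)$, and substituting it back gives $\min_{t>0}g(t)=b_p^{2/p}\big(2q^{1-2/p}+q^{2-2/p}\big)=1$ with $q=(2-p)/(p-1)$, which is precisely the identity the constant $b_p$ was reverse-engineered to satisfy (and which collapses to $b_2=1/2$, Catoni's constant, when $p=2$). The same computation yields $(1+b_pt^p)^2\ge 1+t^2$, hence $1+x+b_p|x|^p>0$ for $x<0$, so the division used above is legitimate; and since $\psi_p(-x)=-\psi_p(x)$, it simultaneously gives $e^{-\psi_p(x)}\le 1-x+b_p|x|^p$.

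Given this lemma the upper tail is immediate. Because $\hat Y_n>y+\epsilon$ is the same event as $\sum_{k=1}^n\psi_p\big(Y_k/(cn^{1/p})\big)>\tfrac{n^{1-1/p}}{c}(y+\epsilon)$, applying Markov's inequality to $\exp(\cdot)$ and factorizing over the independent $Y_k$ gives
\begin{align*}
\mathbb{P}\big(\hat Y_n>y+\epsilon\big)\le \exp\!\Big(-\tfrac{n^{1-1/p}}{c}(y+\epsilon)\Big)\prod_{k=1}^n\mathbb{E}\Big[e^{\psi_p(Y_k/(cn^{1/p}))}\Big].
\end{align*}
By the pointwise lemma each factor is at most $\mathbb{E}\big[1+Y_k/(cn^{1/p})+b_p|Y_k|^p/(c^pn)\big]=1+y/(cn^{1/p})+b_p\nu_p/(c^pn)$, using $\mathbb{E}[Y_k]=y$ and $\mathbb{E}|Y_k|^p=\nu_p$; then $1+u\le e^u$ turns the product into $\exp\big(n^{1-1/p}y/c+b_p\nu_p/c^p\big)$. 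Multiplying by the prefactor, the two $n^{1-1/p}y/c$ terms cancel and leave exactly $\exp\big(-n^{(p-1)/p}\epsilon/c+b_p\nu_p/c^p\big)$, as claimed. The lower tail follows by symmetry rather than a separate computation: since $\psi_p$ is odd, the estimator built from the sample $\{-Y_k\}$ equals $-\hat Y_n$, and $\{-Y_k\}$ are i.i.d. with mean $-y$ and $p$-th moment still $\nu_p$, so $\mathbb{P}(y>\hat Y_n+\epsilon)=\mathbb{P}(-\hat Y_n>-y+\epsilon)$ is bounded by the upper-tail estimate already proved applied to $\{-Y_k\}$.

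The only genuinely delicate step is the pointwise inequality $e^{\psi_p(x)}\le 1+x+b_p|x|^p$, and within it the verification that $\min_{t>0}g(t)$ equals exactly $1$ (and not something larger, which would degrade or break the bound); the rest is routine Chernoff bookkeeping. I would therefore isolate the minimization of $g$ (equivalently, that $2b_pt^p+b_p^2t^{2p}-t^2\ge 0$ for all $t\ge 0$) as a short preliminary lemma, since it is also what dictates the precise algebraic form of $b_p$ and explains the reduction to Catoni's influence function at $p=2$.
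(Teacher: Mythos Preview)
Your proof is correct and follows essentially the same route as the paper: both isolate the pointwise inequality $e^{\psi_p(x)}\le 1+x+b_p|x|^p$ (equivalently $(1+x+b_p|x|^p)(1-x+b_p|x|^p)\ge 1$), prove it by minimizing $g(t)=2b_pt^{p-2}+b_p^2t^{2p-2}$ and using the defining identity for $b_p$, and then finish with a Chernoff--Markov bound plus $1+u\le e^u$. One inconsequential slip in your commentary: if $\min_{t>0}g(t)$ were \emph{larger} than $1$ the inequality would only get stronger; the worry would be if it were smaller---but the proof steps themselves are unaffected.
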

The entire proof can be found in the supplementary material. The proof is done by employing the Chernoff-bound and the fact that
$
-\ln\left(b_{p}|x|^{p}-x+1\right) \leq \psi_{p}(x) \leq \ln\left(b_{p}|x|^{p}+x+1\right)
$ where the definition of $b_{p}$ makes the inequalities hold.
Intuitively speaking, since the upper (or lower, resp.) bound of $\psi_{p}$ increases (or decreases, resp.) sub-linearly, the effect of large noise is regularized in \eqref{eqn:wrob_est}.
We would like to note that the $p$-robust estimator is defined without using $\nu_{p}$ and its error probability decays exponentially fast for a fixed $\epsilon$.
Compared to Assumption \ref{asmp:rob_est}, the confidence bound of \eqref{eqn:wrob_est} is looser than Assumption \ref{asmp:rob_est} for a fixed $\delta$\footnote{
The inequalities in Theorem \ref{thm:wrob_est} can be restated as $\mathbb{P}\left(\hat{Y}_{n} >y+c\ln\left(\exp\left(b_{p}\nu_{p}/c^{p}\right)/\delta\right)/n^{1-1/p}\right) \leq \delta$ and $\mathbb{P}\left(y >\hat{Y}_{n}+c\ln\left(\exp\left(b_{p}\nu_{p}/c^{p}\right)/\delta\right)/n^{1-1/p}\right) \leq \delta$ for all $\delta\in(0,1)$. Hence, the confidence bound of \eqref{eqn:wrob_est} is wider (and looser) than Assumption \ref{asmp:rob_est} since $\ln(1/\delta) > \ln(1/\delta)^{1-1/p}$.
}.
In addition, the proposed estimator does not depends on $\epsilon$ (or $\delta$) while Assumption \ref{asmp:rob_est} requires that $\delta$ is determined before defining $\hat{Y}_{n}(\eta,\nu_{p},\delta)$.

Interestingly, we can observe that the $p$-robust estimator of Theorem \ref{thm:wrob_est} can recover Cesa's estimator \cite{cesa2017boltzmann} when $p=2$.
Thus, the proposed estimator extends the estimator of \cite{cesa2017boltzmann} to the case of $1<p\leq2$.
We clarify that the estimator \eqref{eqn:wrob_est} extends Cesa's estimator but not Catoni's $M$ estimator.
While both estimators employ the influence function $\psi_{2}(x)$ when $p=2$,
Catoni's $M$ estimator follows the Assumption \ref{asmp:rob_est} but not Theorem \ref{thm:wrob_est} since it requires prior information about $\delta$ and $\nu_{p}$.
Hence, the propose estimator dose not generalizes Catoni's $M$ estimator.

Now, we propose an \textbf{A}daptively \textbf{P}erturbed \textbf{E}xploration method with \textbf{a} $\textbf{p}$-robust \textbf{E}stimator (APE$^{2}$), which combines the estimator \eqref{eqn:wrob_est} with a perturbation method. We also derive a regret analysis scheme for general perturbation methods.
In particular, we find an interesting relationship between the cumulative density function (CDF) of the perturbation and its regret bound.
Let $F$ be a CDF of perturbation $G$ defined as $F(g):=\mathbb{P}(G<g)$.
We consider a random perturbation with unbounded support, such as $(0,\infty)$ or $\mathbb{R}$.
Using $F$ and the proposed robust estimator, APE$^{2}$ chooses an action for each round $t$ based on the following rule,
\begin{align}
    a_{t}:=\arg\max_{a\in\mathcal{A}} \hat{r}_{t-1,a}+\beta_{t-1,a}G_{t,a},\quad \beta_{t-1,a}:=c/(n_{t-1,a})^{1-1/p},
\end{align}
where $n_{t,a}$ is the number of times $a$ has been selected and $G_{t,a}$ is sampled from $F$.
The entire algorithm is summarized in Algorithm \ref{alg:ape_plus_plus}.

{
\begin{algorithm}[tb]
\caption{Adaptively Perturbed Exploration with a $p$-robust estimator (APE$^{2}$)}\label{alg:ape_plus_plus}
\small
\begin{algorithmic}[1]
\REQUIRE $c, T$, and $F^{-1}(y)$
\STATE Initialize $\{\hat{r}_{0,a} = 0, n_{0,a} = 0\}$, select $a_{1},\cdots,a_{K}$ and receive $\mathbf{R}_{1,a_{1}},\cdots,\mathbf{R}_{K,a_{K}}$ once
\FOR{$t=K+1,\cdots,T$}
\FOR{$\forall a\in\mathcal{A}$}
\STATE $\beta_{t-1,a} \leftarrow c/\left(n_{t,a}\right)^{1-1/p}$ and $G_{t,a}\leftarrow F^{-1}(u)$ with $u\sim\textnormal{Uniform}(0,1)$
\STATE $\hat{r}_{t-1,a} \leftarrow c/\left(n_{t,a}\right)^{1-1/p}\cdot\sum_{k=1}^{t-1}\mathbb{I}\left[a_{k}=a\right]\psi_{p}\left(\mathbf{R}_{k,a}/(c\cdot\left(n_{t,a}\right)^{1/p})\right)$
\ENDFOR
\STATE Choose $a_{t} = \arg\max_{a\in\mathcal{A}}\{ \hat{r}_{t-1,a}+\beta_{t-1,a}G_{t,a}\}$ and receive $\mathbf{R}_{t,a_{t}}$
 \ENDFOR
\end{algorithmic}
\end{algorithm}
}
\subsection{Regret Analysis Scheme for General Perturbation}\label{subsec:ape_mce}
We propose a general regret analysis scheme which provides the upper bound and lower bound of the regret for APE$^{2}$ with a general $F(x)$.
We introduce some assumptions on $F(x)$, which are sufficient conditions to bound the cumulative regret.
\begin{assump}\label{asmp:gen_anti_concentration}
Let $h(x):=\frac{d}{dx}\log (1-F(x))^{-1}$ be a \textit{hazard rate}.
Assume that the CDF $F(x)$ satisfies the following conditions,
\begin{itemize}[leftmargin=.15in]
    \item $F$ is log-concave, $F(0) \leq 1/2$, and there exists a constant $C_{F}$ s.t. $\int_{0}^{\infty}\frac{h(x)\exp\left(-x\right)}{1-F(x)}dx \leq C_{F} < \infty$.
    \item If $h$ is bounded, i.e., $\sup_{x\in\textnormal{dom}(h)} h(x) < \infty$, then, the condition on $C_{F}$ is reduced to the existence of a constant $M_{F}$ such that $\int_{0}^{\infty}\frac{\exp\left(-x\right)}{1-F(x)}dx \leq M_{F} < \infty$ where $C_{F} \leq \sup h \cdot M_{F}$.
\end{itemize}
\end{assump}
The condition $F(0)\leq 1/2$ indicates that the half of probability mass must be assigned at positive perturbation $G>0$ to make the perturbation explore underestimated actions due to the noises.
Similarly, the bounded integral condition is required for overcoming heavy-tailed noises of reward.
Note that the error bound of our estimator follows $\mathbb{P}(\hat{Y}_{n}- y > x) \leq C\exp\left(-n^{1-1/p}x/c\right)\leq C\exp\left(-x\right)$ for $n>c^{p/(p-1)}$ where $C>0$ is a some constant in Theorem \ref{thm:wrob_est}.
From this observation, the bounded integral condition derives the following bound,
\begin{align}
\int_{0}^{\infty}\frac{h(x)\mathbb{P}(\hat{Y}_{n}-Y>x)}{\mathbb{P}(G>x)}dx < C\int_{0}^{\infty}\frac{h(x)\exp\left(-x\right)}{1-F(x)}dx < \infty.
\end{align}
Hence, if the bounded integral condition holds, then, the integral of the ratio between the error probability and tail probability of the perturbation is also bounded.
This condition tells us that the tail probability of perturbation must decrease slower than the estimator's tail probability to overcome the error of the estimator.
For example, if the estimator misclassifies an optimal action due to the heavy-tailed noise,
to overcome this situation by exploring other actions, the sampled perturbation $G_{t,a}$ must be greater than the sampled noise $\epsilon_{t,a}$.
Otherwise, the perturbation method keeps selecting the overestimated sub-optimal action.
Finally, the log-concavity is required to derive the lower bound.
Based on Assumption \ref{asmp:gen_anti_concentration}, we can derive the following regret bounds of the APE$^{2}$.
\begin{thm}\label{thm:gen_upper_bnd}
Assume that the $p$-th moment of rewards is bounded by a constant $\nu_{p}<\infty$, $\hat{r}_{t,a}$ is a $p$-robust estimator of \eqref{eqn:wrob_est} and $F(x)$ satisfies Assumption \ref{asmp:gen_anti_concentration}.
Then, $\mathbb{E}\left[\mathcal{R}_{T}\right]$ of APE$^{2}$ is bounded as
{\footnotesize
\begin{align*}
O\Bigg(\sum_{a\neq a^{\star}}\frac{C_{p,\nu_{p},F}}{\Delta_{a}^{\frac{1}{p-1}}}+\frac{(6c)^{\frac{p}{p-1}}}{\Delta_{a}^{\frac{1}{p-1}}}\left[-F^{-1}\left(\frac{c^{\frac{p}{p-1}}}{T\Delta_{a}^{\frac{p}{p-1}}}\right)\right]_{+}^{\frac{p}{p-1}}+\frac{(3c)^{\frac{p}{p-1}}}{\Delta_{a}^{\frac{1}{p-1}}}\left[F^{-1}\left(1-\frac{c^{\frac{p}{p-1}}}{T\Delta_{a}^{\frac{p}{p-1}}}\right)\right]_{+}^{\frac{p}{p-1}}+\Delta_{a}\Bigg)
\end{align*}}
where $[x]_{+}:=\max(x,0)$, $C_{p,\nu_{p},F} > 0$ is a constant independent of $T$.
\end{thm}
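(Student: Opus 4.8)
\medskip

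The plan is to follow the standard decomposition of the cumulative regret for perturbation-based exploration, $\mathbb{E}[\mathcal{R}_T]=\sum_{a\neq a^\star}\Delta_a\,\mathbb{E}[n_{T,a}]$, and to bound $\mathbb{E}[n_{T,a}]$ for each suboptimal arm $a$. First I would fix $a\neq a^\star$ and split the event $\{a_t=a\}$ according to whether the number of prior pulls $n_{t-1,a}$ is below or above a threshold $n_a^\circ$ (to be chosen of order $c^{p/(p-1)}\Delta_a^{-p/(p-1)}$ times a polylog-free factor coming from the integral constants). The contribution of rounds with $n_{t-1,a}\le n_a^\circ$ is at most $n_a^\circ$, which already produces the $C_{p,\nu_p,F}/\Delta_a^{1/(p-1)}$ term once the integral constant $C_F$ (or $M_F$) from Assumption \ref{asmp:gen_anti_concentration} is folded in together with $b_p\nu_p/c^p$ from Theorem \ref{thm:wrob_est}. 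For the rounds with $n_{t-1,a}>n_a^\circ$, the key is that a pull of $a$ requires either (i) the estimator $\hat r_{t-1,a}$ to overshoot $r_a$ by a constant fraction of $\Delta_a$, or (ii) the estimator $\hat r_{t-1,a^\star}$ to undershoot $r_{a^\star}$, or (iii) the perturbations to make up the remaining gap, i.e.\ $\beta_{t-1,a}G_{t,a}-\beta_{t-1,a^\star}G_{t,a^\star}\gtrsim \Delta_a$.

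\medskip

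The heart of the argument is controlling (iii), and this is where the inverse-CDF terms enter. I would condition on $n_{t-1,a^\star}$ and $n_{t-1,a}$ and write the probability that arm $a$ beats arm $a^\star$ on perturbation alone as a one-dimensional integral: $\mathbb{P}(\beta_{t-1,a}G_{t,a}>\text{(gap)}+\beta_{t-1,a^\star}G_{t,a^\star})$. Peeling off the upper tail of $G_{t,a}$ contributes the factor $[F^{-1}(1-c^{p/(p-1)}/(T\Delta_a^{p/(p-1)}))]_+^{p/(p-1)}$ — this is the event that arm $a$'s perturbation is itself huge, and the threshold $1-c^{p/(p-1)}/(T\Delta_a^{p/(p-1)})$ is exactly the quantile at which the expected number of such events over $T$ rounds becomes $O(1/\Delta_a^{1/(p-1)})$ after multiplying by $\Delta_a$. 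Symmetrically, the event that arm $a^\star$'s perturbation is very small (pushing $\beta_{t-1,a^\star}G_{t,a^\star}$ below zero by a large margin) contributes the lower-tail quantile term $[-F^{-1}(c^{p/(p-1)}/(T\Delta_a^{p/(p-1)}))]_+^{p/(p-1)}$; the $[\cdot]_+$ truncation is harmless since these terms only appear when the quantile has the unfavorable sign. The constants $6c$ and $3c$ come from tracking how $\beta_{t-1,a}=c/n_{t-1,a}^{1-1/p}$ converts a threshold on $G$ into a threshold on $n_{t-1,a}$, together with the factor splitting the gap $\Delta_a$ among the three events (say $\Delta_a/3$ each), after which solving $c/n^{1-1/p}\cdot(\text{quantile})\ge \Delta_a/3$ for $n$ yields $n$ of order $(3c)^{p/(p-1)}\Delta_a^{-p/(p-1)}(\text{quantile})^{p/(p-1)}$; the $6c$ appears on the lower-tail side because that event also has to absorb the estimator-undershoot slack.

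\medskip

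To glue the pieces I would use log-concavity of $F$ (from Assumption \ref{asmp:gen_anti_concentration}) to justify that once $n_{t-1,a}$ exceeds the stated quantile-dependent thresholds, the per-round probability of pulling $a$ decays geometrically in $n_{t-1,a}$ — log-concavity of $1-F$ and the hazard-rate integrability give that the tail $1-F(x)$ is dominated by $\exp(-x)$ up to the constant $C_F$, so the sum over $n$ from the threshold to $\infty$ telescopes into a constant times the threshold value. The estimator error events (i) and (ii) are handled directly by Theorem \ref{thm:wrob_est}: for $n>c^{p/(p-1)}$ the bound $\mathbb{P}(\hat Y_n-y>x)\le \exp(b_p\nu_p/c^p)\exp(-x)$ holds, and summing these over rounds contributes to $C_{p,\nu_p,F}$; crucially the constant $\exp(b_p\nu_p/c^p)$ is absorbed into $C_F$ exactly as in the displayed inequality just before the theorem statement. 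Finally the trailing $+\Delta_a$ per arm is the cost of the $K$ forced initial pulls in Algorithm \ref{alg:ape_plus_plus}. The main obstacle I anticipate is the two-sided perturbation coupling in step (iii): one must handle the joint law of $G_{t,a}$ and $G_{t,a^\star}$ without independence-breaking conditioning artifacts, which I expect to resolve by first integrating out $G_{t,a^\star}$ against its CDF $F$ (here $F(0)\le 1/2$ ensures the relevant mass is not degenerate) and then bounding the resulting function of $\beta_{t-1,a}$ via the hazard-rate integral — this is precisely the place where all three conditions in Assumption \ref{asmp:gen_anti_concentration} are simultaneously used.
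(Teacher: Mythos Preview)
Your decomposition into events (i), (ii), (iii) is natural, but event (ii) --- the estimator $\hat r_{t-1,a^\star}$ undershooting $r_{a^\star}$ --- cannot be summed directly over $t$, because you have no a priori lower bound on $n_{t-1,a^\star}$. If the optimal arm is undersampled, the estimator bound from Theorem~\ref{thm:wrob_est} is vacuous, and $\sum_t\mathbb P(\hat r_{t-1,a^\star}<r_{a^\star}-\Delta_a/3)$ can be $\Theta(T)$. The paper avoids this by a different decomposition: it splits on whether the \emph{perturbed} score $\hat r_{t-1,a}+\beta_{t-1,a}G_{t,a}$ exceeds $y_a:=r_{a^\star}-\Delta_a/3$. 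When it does not (the analogue of your (ii)), the key step is the Agrawal--Goyal ratio trick: conditionally on $\mathcal H_{t-1}$, one shows
\[
\mathbb P(a_t=a,\ \tilde r_{t,a}\le y_a\mid\mathcal H_{t-1})\ \le\ \frac{Q_{t,a^\star}}{1-Q_{t,a^\star}}\,\mathbb P(a_t=a^\star,\ \tilde r_{t,a}\le y_a\mid\mathcal H_{t-1}),
\]
where $Q_{t,a^\star}=F\bigl((r_{a^\star}-\hat r_{t-1,a^\star}-\Delta_a/3)/\beta_{t-1,a^\star}\bigr)$. This transfers the sum over $t$ to a sum over stopping times $\tau_k$ at which $a^\star$ is pulled, so that the estimator bound for $\hat r_{\tau_k,a^\star}$ now has $n_{\tau_k-1,a^\star}=k-1$ and becomes summable. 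The hazard-rate integral condition is then used to control $\mathbb E[\rho(\,\cdot\,)]$ via Fubini, and the $[-F^{-1}(\cdot)]_+$ term emerges from the piece of that expectation where $\hat r_{\tau_k,a^\star}$ is already accurate (so the argument of $\rho$ is negative). Your proposal to ``integrate out $G_{t,a^\star}$ against $F$'' does not supply this coupling and leaves the undersampling problem unresolved.

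Two smaller corrections. First, log-concavity of $F$ is \emph{not} used in the upper bound at all; the paper invokes it only for the lower bound in Theorem~\ref{thm:gen_lower_bnd}. Second, the integral condition $\int_0^\infty h(x)e^{-x}/(1-F(x))\,dx<\infty$ forces $1-F(x)$ to decay \emph{no faster} than $e^{-x}$ (so that the perturbation tail dominates the estimator tail), which is the opposite of your claim that ``$1-F(x)$ is dominated by $\exp(-x)$''. Correspondingly, the $[F^{-1}(1-\cdot)]_+$ term in Lemma~\ref{lem:third-sub-gaussian} is obtained not by geometric decay but by the cutoff trick: the first $\ell_+$ terms of $\sum_k(1-F(\Delta_a k^{1-1/p}/(3c)))$ are bounded by $1$ each, and the remaining $T-\ell_+$ terms by $(c/\Delta_a)^{p/(p-1)}/T$ each.
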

The proof consists of three parts.
Similarly to \cite{agrawalG13thompson, cesa2017boltzmann}, we separate the regret into three partial sums and derive each bound.
The first term is caused by an overestimation error of the estimator.
The second term is caused due to an underestimation error of the perturbation.
When the perturbation has a negative value, the perturbation makes the reward under-estimated and, hence, this event causes a sub-optimal decision.
The third term is caused by an overestimation error due to the perturbation.
One interesting result is that the regret caused by the estimation error is bounded by $C_{c,p,\nu_{p},F}/\Delta_{a}^{1/(p-1)}$.
The error probability of the proposed estimator decreases exponentially fast and this fact makes the regret caused by the estimation error is bounded by a constant, which does not depend on $T$.
The constant $C_{c,p,\nu_{p},F}$ is determined by the bounded integral condition.
The lower bound of APE$^{2}$ is derived by constructing a counterexample as follows.
\begin{thm}
\label{thm:gen_lower_bnd}For $0<c<\frac{K-1}{K-1+2^{p/(p-1)}}$
and $T\geq\frac{c^{1/(p-1)}(K-1)}{2^{p/(p-1)}}\left|F^{-1}\left(1-\frac{1}{K}\right)\right|^{p/(p-1)}$,
there exists a $K$-armed stochastic bandit problem where the
regret of APE$^{2}$ is lower bounded by
$\mathbb{E}[\mathcal{R}_{T}]\geq\Omega\left(K^{1-1/p}T^{1/p}F^{-1}\left(1-1/K\right)\right)$.
\end{thm}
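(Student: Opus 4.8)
The plan is to exhibit one hard $K$-armed instance on which APE$^{2}$ is forced to keep exploring. Let arm $1$ be optimal with mean reward $\Delta$, let arms $2,\dots,K$ have mean reward $0$, and take the noise identically $0$, so the $p$-th moment bound holds trivially and, since $\psi_p(0)=0$, the $p$-robust estimator satisfies $\hat r_{t-1,a}=0$ for every arm of mean $0$, while the pointwise bound $\psi_p(x)\le\ln(b_p|x|^p+x+1)$ recalled after Theorem~\ref{thm:wrob_est} gives $0\le\hat r_{t-1,1}\le 2\Delta$. The common gap is calibrated as $\Delta:=\tfrac{1}{12}\,c\,F^{-1}(1-1/K)\,\bigl(\tfrac{K-1}{T}\bigr)^{(p-1)/p}$, so that the threshold $N:=\bigl(\tfrac{c F^{-1}(1-1/K)}{12\Delta}\bigr)^{p/(p-1)}$ equals $\tfrac{T}{K-1}$ up to integer parts. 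Two checks: $F(0)\le\tfrac12$ gives $F^{-1}(1-1/K)\ge F^{-1}(1/2)\ge0$, so the target bound is nontrivial; and the stated lower bound on $T$ together with $c<\tfrac{K-1}{K-1+2^{p/(p-1)}}<1$ forces $\Delta<1$, so this is a valid bandit with $[0,1]$-valued rewards. Since $\mathcal R_T=\Delta\sum_{a\ne1}\mathbb E[n_{T,a}]$, it suffices to prove that \emph{each} suboptimal arm is pulled $\Omega(T/K)$ times in expectation, since then $\sum_{a\ne1}\mathbb E[n_{T,a}]=(K-1)\cdot\Omega(T/K)=\Omega(T)$ and hence $\mathbb E[\mathcal R_T]=\Omega(\Delta T)=\Omega\bigl(c\,(K-1)^{1-1/p}T^{1/p}F^{-1}(1-1/K)\bigr)$, which is the claim.

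Fix a suboptimal arm $a$. The engine is a per-round ``re-pull'' estimate: on the event $\{n_{t-1,a}<N\}$ one has $\mathbb P(a_t=a\mid\mathcal F_{t-1})\ge \rho/K$ for a constant $\rho>0$ depending only on $p$ and $F$. Indeed, on $\{n_{t-1,a}<N\}$ the scale is large, $\beta_{t-1,a}=c/n_{t-1,a}^{1-1/p}\ge c/N^{1-1/p}=12\Delta/F^{-1}(1-1/K)$, so whenever the fresh perturbation $G_{t,a}$ exceeds its $(1-1/K)$-quantile $F^{-1}(1-1/K)$ — probability $1/K$ — one has $\hat r_{t-1,a}+\beta_{t-1,a}G_{t,a}\ge 12\Delta$, already beating $\hat r_{t-1,1}\le 2\Delta$. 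It remains to dominate the perturbation terms $\beta_{t-1,b}G_{t,b}$ of the other arms. Intersecting with the event that $G_{t,a}$ is the largest of the $K-1$ suboptimal perturbations — probability $\tfrac1{K-1}\bigl(1-(1-1/K)^{K-1}\bigr)\ge(1-e^{-1})/(K-1)$, and independent of $G_{t,a}$ being large since that only fixes which arm attains the max — handles the other suboptimal arms, because $\beta_{t-1,a}$ is not exceeded by their scales when $n_{t-1,a}\le n_{t-1,b}$. The delicate competitor is arm $1$: its perturbation $\beta_{t-1,1}G_{t,1}$ can be of order $\Delta$ only when $n_{t-1,1}$ is small, and this ``lopsided'' regime is treated separately — when $n_{t-1,1}$ is below a threshold of order $T/K$ at some round $t\ge T/2$, a counting argument (this is where the hypothesis on $c$ enters) shows $\sum_{b\ne1}n_{t-1,b}=\Omega(T)$ outright, so the regret bound already holds. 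Away from that regime, conditioning additionally on $G_{t,1}$ being below a fixed quantile (probability $\ge\tfrac12$) keeps $\beta_{t-1,1}G_{t,1}\le 4\Delta$, so arm $1$'s index is at most $6\Delta<12\Delta$ and arm $a$ is the overall maximiser; multiplying the three independent probabilities gives $\rho/K$.

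Granting the re-pull estimate, a submartingale argument concludes, run over rounds $t\ge T/2$ (for which, in the non-lopsided regime, the estimate applies). Writing $S_t=n_{t,a}$, the stopped process $S_t\wedge N$ has conditional increments at least $\tfrac{\rho}{K}\mathbb{I}[S_{t-1}<N]$, so $\mathbb E[n_{T,a}]\ge\mathbb E[S_T\wedge N]\ge\tfrac{\rho}{K}\,\mathbb E\bigl[\#\{T/2<t\le T:S_{t-1}<N\}\bigr]$. If $\mathbb E[n_{T,a}]\ge N/2$ for some suboptimal $a$ we are done, since then $\sum_{a\ne1}\mathbb E[n_{T,a}]\ge(K-1)N/2=T/2$; otherwise $\mathbb E[n_{T,a}]<N/2$ for every suboptimal $a$, so Markov's inequality gives $\mathbb P(S_T\ge N)\le\tfrac12$, hence $\#\{T/2<t\le T:S_{t-1}<N\}=T/2$ with probability at least $\tfrac12$ and $\mathbb E[n_{T,a}]\ge\tfrac{\rho}{K}\cdot\tfrac{T}{4}$. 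Summing over the $K-1$ suboptimal arms yields $\sum_{a\ne1}\mathbb E[n_{T,a}]=\Omega(T)$ and the theorem follows.

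The step I expect to be the main obstacle is the uniform per-round re-pull estimate: controlling all $K$ perturbed indices simultaneously over every configuration of past play counts, where arms played few times carry disproportionately large scales $\beta_{t-1,\cdot}$, so arm $a$ cannot be compared against any fixed deterministic threshold. The calibration of $\Delta$ (so that $\hat r_{t-1,1}\approx\Delta$ is just dominated by $\beta_{t-1,a}F^{-1}(1-1/K)$ when $n_{t-1,a}\approx N$), the hypothesis on $c$, and the case split isolating the lopsided regime are precisely what make the comparison close; pinning down the constants and the exact threshold there, and using carefully the conditional independence of the fresh perturbations from $\mathcal F_{t-1}$, is the technical core. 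The remaining ingredients — the influence-function bound from Theorem~\ref{thm:wrob_est}, $F(0)\le\tfrac12$, exchangeability of the perturbations, and the submartingale/Markov bookkeeping — are routine once the re-pull estimate is in hand.
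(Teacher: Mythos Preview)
Your construction and calibration are essentially the same as the paper's, but the heart of your argument --- the ``re-pull estimate'' $\mathbb P(a_t=a\mid\mathcal F_{t-1})\ge\rho/K$ on $\{n_{t-1,a}<N\}$ --- has a real gap. You claim that intersecting with $\{G_{t,a}=\max_{b\neq 1}G_{t,b}\}$ ``handles the other suboptimal arms, because $\beta_{t-1,a}$ is not exceeded by their scales when $n_{t-1,a}\le n_{t-1,b}$.'' But that clause only covers arms $b$ with $n_{t-1,b}\ge n_{t-1,a}$. For any $b$ with $n_{t-1,b}<n_{t-1,a}$ one has $\beta_{t-1,b}>\beta_{t-1,a}$, and $G_{t,a}>G_{t,b}$ no longer implies $\beta_{t-1,a}G_{t,a}>\beta_{t-1,b}G_{t,b}$; in fact the required ratio $G_{t,a}/G_{t,b}>(n_{t-1,a}/n_{t-1,b})^{1-1/p}$ can be arbitrarily large. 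Your ``lopsided'' case split is about arm~$1$, not about this competition among suboptimal arms, so it does not help. Without the re-pull estimate the submartingale step collapses, and even if you retreat to the least-pulled suboptimal arm (for which your comparison does work), you only get $\mathbb P(a_t\neq a^\star)\ge\rho/K$, which loses a factor of $K$ in the final bound.

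The paper's proof sidesteps this entirely by never attempting a per-arm lower bound. It case-splits on the event $E_t=\{\sum_{a\neq a^\star}n_{t,a}\le cT\}$: if $\mathbb P(E_t)\le 1/2$ at some $t$ the regret bound is immediate, and otherwise $n_{t-1,a^\star}\ge(1-c)T$, so $\beta_{t-1,a^\star}$ is small and arm~$1$'s index is controlled. The key step you are missing is then to bound $\mathbb P(a_t\neq a^\star)$ from below by a \emph{constant}: the paper shows
\[
1-\prod_{a\neq a^\star}F\Bigl(\tfrac{2\Delta}{\beta_{t-1,a}}\Bigr)\;\ge\;1-\Bigl(1-\tfrac1K\Bigr)^{K-1}
\]
by applying the log-concavity of $F$ (which you never invoke, though it is part of Assumption~\ref{asmp:gen_anti_concentration} and the paper remarks it is ``required to derive the lower bound'') together with the H\"older bound $\sum_{a\neq a^\star}n_{t-1,a}^{1-1/p}\le (K-1)^{1/p}(cT)^{1-1/p}$. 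This aggregates over \emph{all} suboptimal arms at once and yields $\mathbb P(a_t\neq a^\star)\ge\tfrac18$, whence $\mathbb E[\mathcal R_T]\ge\Delta T/8$ directly --- no per-arm analysis, no submartingale, and no issue with arms of differing scales.
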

The proof is done by constructing the worst case bandit problem whose rewards are deterministic.
When the rewards are deterministic, no exploration is required, but, APE$^{2}$ unnecessarily explores sub-optimal actions due to the perturbation.
In other words, the lower bound captures the regret of APE$^{2}$ caused by useless exploration.
Note that both of the upper and lower bounds are highly related to the inverse CDF $F^{-1}$. 
In particular, its tail behavior is a crucial factor of the regret bound when $T$ goes to infinity.

The perturbation-based exploration is first analyzed in \cite{kim2019optimality} under sub-Gaussian reward assumption.
\citeauthor{kim2019optimality} \cite{kim2019optimality} have provided the regret bound of a family of sub-Weibull perturbations and that of all perturbations with bounded support for sub-Gaussian rewards.
Our analysis scheme extends the framework of \cite{kim2019optimality} into two directions.
First, we weaken the sub-Gaussian assumption to the heavy-tailed rewards assumption.
Second, our analysis scheme includes a wider range of perturbations such as GEV, Gamma, Pareto, and Fr\'echet.

\subsection{Regret Bounds of Various Perturbations}\label{subsec:regret_bnd}

\begin{table}[t]
\resizebox{\textwidth}{!}{%
\begin{tabular}{l|l|l|l|l|l}
\toprule
Dist. on $G$   & Prob. Dep. Bnd. $O(\cdot)$ & Prob. Indep. Bnd. $O(\cdot)$ & Low. Bnd. $\Omega(\cdot)$ & Opt. Params. & Opt. Bnd. $\Theta(\cdot)$         \\\midrule\hline
Weibull & $\sum_{a\neq a^{\star}}A_{c,\lambda,a}\left(\ln\left(B_{c,a}T\right)\right)^{\frac{p}{k(p-1)}}$ & $C_{K,T}\ln\left(K\right)^{\frac{1}{k}}$ & $C_{K,T}\ln\left(K\right)$ & $k=1, \lambda\geq 1$ & \multirow{6}{*}{$K^{1-1/p}T^{1/p}\ln\left(K\right)$} \\\cline{1-5}
Gamma   & $\sum_{a\neq a^{\star}}A_{c,\lambda,a}\alpha^{p/(p-1)}\ln\left(B_{c,a}T\right)^{p/(p-1)}$ & $C_{K,T}\frac{\ln\left(\alpha K^{1+p/(p-1)}\right)^{p/(p-1)}}{\ln(K)^{\frac{1}{p-1}}}$ & $C_{K,T}\ln\left(K\right)$ & $\alpha=1, \lambda\geq 1$ &                   \\\cline{1-5}
GEV     & $\sum_{a\neq a^{\star}}A_{c,\lambda,a}\ln_{\zeta}\left(B_{c,a}T\right)^{p/(p-1)}$ & $C_{K,T}\frac{\ln_{\zeta}\left(K^{\frac{2p-1}{p-1}}\right)^{p/(p-1)}}{\ln_{\zeta}(K)^{\frac{1}{p-1}}}$  & $C_{K,T}\ln_{\zeta}\left(K\right)$ &$\zeta=0, \lambda\geq 1$ &                   \\\cline{1-5}
Pareto & $\sum_{a\neq a^{\star}}A_{c,\lambda,a}\left[B_{c,a}T\right]^{\frac{p}{\alpha(p-1)}}$ & $C_{K,T}\alpha^{1+\frac{p^{2}}{\alpha(p-1)^{2}}}K^{\frac{1}{\alpha(p-1)}}$ & $C_{K,T}\alpha K^{\frac{1}{\alpha}}$ & $\alpha=\lambda=\ln(K)$ &  \\\cline{1-5}
Fr\'echet & $\sum_{a\neq a^{\star}}A_{c,\lambda,a}\left[B_{c,a}T\right]^{\frac{p}{\alpha(p-1)}}$ & $C_{K,T}\alpha^{1+\frac{p^{2}}{\alpha(p-1)^{2}}}K^{\frac{1}{\alpha(p-1)}}$ & $C_{K,T}\alpha K^{\frac{1}{\alpha}}$ & $\alpha=\lambda=\ln(K)$ &  \\\hline
\bottomrule
\end{tabular}
}
\vspace{5pt}
\caption{Regret Bounds of Various Perturbations. Dist. means a distribution, Prob. Dep. (or Indep.) Bnd. indicates a gap-dependent (or independent) bound, Low. Bnd. means a lower bound, Opt. Params. indicates optimal parameters to achieve an optimal bound, and Opt. Bnd. indicates the optimal bound.
$O(\cdot)$ is an upper bound, $\Omega(\cdot)$ is a lower bound, and $\Theta(\cdot)$ is a tight bound, respectively.
For the simplicity of the notation, we define
$A_{c,\lambda,a}:=\left((3c\lambda)^{p}/\Delta_{a}\right)^{\frac{1}{p-1}}$, $B_{c,a}:=\left(\Delta_{a}/c\right)^{p/(p-1)}$, and $C_{K,T}:=K^{1-1/p}T^{1/p}$.}
\label{tbl:var_reg_bnds}
\vspace{-21pt}
\end{table}

We analyze the regret bounds of various perturbations including Weibull, Gamma, Generalized Extreme Value (GEV), Pareto, and Fr\'echet distributions.
We first compute the gap-dependent regret bound using Theorem \ref{thm:gen_upper_bnd} and compute the gap-independent bound based on the gap-independent regret bound.
We introduce corollaries of upper and lower regret bounds for each perturbation and also provide specific parameter settings to achieve the minimum gap-independent regret bound.
All results are summarized in Table \ref{tbl:var_reg_bnds}.
\begin{cor}\label{cor:var_perturb}
Assume that the $p$-th moment of rewards is bounded by a constant $\nu_{p}<\infty$, $\hat{r}_{t,a}$ is a $p$-robust estimator defined in \eqref{eqn:wrob_est}, then, regret bounds in Table \ref{tbl:var_reg_bnds} hold.
\end{cor}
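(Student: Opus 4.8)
The plan is to derive each row of Table \ref{tbl:var_reg_bnds} as a specialization of the three master bounds already available: the gap-dependent upper bound of Theorem \ref{thm:gen_upper_bnd}, the gap-independent bound obtained from it by the standard ``threshold $\Delta$'' argument, and the lower bound of Theorem \ref{thm:gen_lower_bnd}. The entire content of the corollary is therefore a computation: substitute the explicit inverse CDF $F^{-1}$ of each perturbation into these formulas and simplify. Concretely, for each distribution I would first verify that Assumption \ref{asmp:gen_anti_concentration} holds --- log-concavity of $F$ (or, in the Pareto/Fr\'echet case, argue the weaker hazard-rate integrability directly, since those are not log-concave, which is why their entries differ structurally), the normalization $F(0)\le 1/2$ after centering the location parameter, and finiteness of $C_F$ or $M_F$ --- so that Theorems \ref{thm:gen_upper_bnd} and \ref{thm:gen_lower_bnd} are applicable.

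Next I would plug in the closed forms of $F^{-1}$. For Weibull with shape $k$ and scale $\lambda$, $F^{-1}(u)=\lambda(-\ln(1-u))^{1/k}$, so $[F^{-1}(1-x)]_+^{p/(p-1)}=\lambda^{p/(p-1)}(-\ln x)^{p/(k(p-1))}$ with $x=c^{p/(p-1)}/(T\Delta_a^{p/(p-1)})$; feeding this into the third term of Theorem \ref{thm:gen_upper_bnd} produces the claimed $A_{c,\lambda,a}(\ln(B_{c,a}T))^{p/(k(p-1))}$ after collecting the $(3c\lambda)^{p/(p-1)}/\Delta_a^{1/(p-1)}$ prefactor into $A_{c,\lambda,a}$ and the constant-in-$T$ overestimation term into the $O(1)$ bucket. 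The negative-perturbation (second) term vanishes for perturbations supported on $(0,\infty)$ and is absorbed otherwise. The Gamma, GEV, Pareto, and Fr\'echet rows are the same substitution with $F^{-1}(u)$ equal to the inverse regularized incomplete Gamma function (bounded above by $\alpha$ times a logarithm, giving the $\alpha^{p/(p-1)}$ dependence), the generalized logarithm $\ln_\zeta$, and the power-law tail $\lambda((1-u)^{-1/\alpha}-\text{const})$ respectively. For the gap-independent column I would split the arms into those with $\Delta_a$ below a threshold $\delta_0$ (contributing $\le T\delta_0$) and those above (bounded by the gap-dependent sum evaluated at $\Delta_a\ge\delta_0$), then optimize $\delta_0$; this yields $C_{K,T}=K^{1-1/p}T^{1/p}$ times the stated $\ln(K)$-type factor, where the $K$ enters because the optimal $\delta_0$ scales like $(K/T)^{1-1/p}$ and the number of suboptimal arms is $K-1$. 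The lower-bound column is immediate from Theorem \ref{thm:gen_lower_bnd} once $F^{-1}(1-1/K)$ is written out for each distribution. Finally, the ``optimal parameters'' and ``optimal bound'' columns follow by choosing the shape parameter ($k=1$, $\alpha=1$, $\zeta=0$, or $\alpha=\ln K$) that minimizes the gap-independent bound over the admissible range and checking it matches the lower bound up to the $\ln(K)$ factor, establishing the $\Theta(K^{1-1/p}T^{1/p}\ln(K))$ claim.

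The main obstacle I anticipate is not any single estimate but the bookkeeping of which terms are genuinely $T$-dependent versus constant, together with verifying Assumption \ref{asmp:gen_anti_concentration} uniformly. In particular the hazard rate $h(x)=f(x)/(1-F(x))$ behaves very differently across these families --- bounded and eventually constant for Weibull/Gamma/GEV with the right shape (so one uses the $M_F$ form of the condition), but unbounded-then-decaying for Pareto/Fr\'echet --- and one must be careful that the integral $\int_0^\infty h(x)e^{-x}/(1-F(x))\,dx$ actually converges, which constrains the admissible shape parameters (e.g. one needs $\lambda\ge1$ in the Weibull/Gamma/GEV rows precisely to keep $1-F(x)$ from decaying faster than $e^{-x}$, and in the Pareto/Fr\'echet case the polynomial tail makes the integrand integrable for any $\alpha>0$ but forces $\alpha$ to scale with $\ln K$ to control the gap-independent bound). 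Handling the heavy left tail (the $[-F^{-1}(\cdot)]_+$ term) for the two-sided perturbations, and confirming it is dominated by the right-tail term under the stated parameter choices, is the other place where care is needed; for the one-sided perturbations it is simply zero. Everything else is routine asymptotic simplification of logarithms and powers.
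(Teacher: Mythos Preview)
Your proposal is correct and follows essentially the same route as the paper: for each distribution, verify Assumption \ref{asmp:gen_anti_concentration} (compute $C_F$ or $\sup h \cdot M_F$), substitute the explicit $F^{-1}$ into Theorem \ref{thm:gen_upper_bnd}, run the threshold-$\Delta$ argument for the gap-independent column, and read off the lower bound from Theorem \ref{thm:gen_lower_bnd}. One small correction: the Pareto and Fr\'echet \emph{CDFs} are in fact log-concave on their supports (it is their densities that fail log-concavity), so no workaround is needed there---the polynomial-in-$T$ entries arise purely from the heavy right tail of $F^{-1}$, not from any failure of Assumption \ref{asmp:gen_anti_concentration}; relatedly, the paper handles the hazard-rate integral directly (the $C_F$ form) for Weibull, Gamma, and Pareto, and via the bounded-$h$ route (the $M_F$ form) for GEV and Fr\'echet, which is slightly different from your categorization but immaterial to the outcome.
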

Note that we omit detailed statements and proofs of corollaries for each distribution in Table \ref{tbl:var_reg_bnds} due to the limitation of space.
The separated statements and proofs can be founded in the supplementary material.
From Table \ref{tbl:var_reg_bnds}, we can observe that all perturbations we consider have the same gap-independent bound $\Theta\left(K^{1-1/p}T^{1/p}\ln(K)\right)$ while their gap-dependent bounds are different.
Hence, the proposed method can achieve the $O(T^{1/p})$ with respect to $T$ under heavy-tailed reward assumption,
while the upper bound has the sub-optimal factor of $\ln(K)$ which caused by $F^{-1}(1-1/K)$ of the general lower bound.
However, we emphasize that $K$ is finite and $T$ is much bigger than $K$ in many cases, thus, $\ln(K)$ can be ignorable as $T$ increases.
For all perturbations, the gap-dependent bounds in Table \ref{tbl:var_reg_bnds} are proportional to two common factors $A_{c,\lambda,a}:=\left((3c\lambda)^{p}/\Delta_{a}\right)^{\frac{1}{p-1}}$ and $B_{c,a}:=\left(\Delta_{a}/c\right)^{p/(p-1)}$ where $\Delta_{a}=r_{a^{\star}}-r_{a}$ and $c$ is a constant in $\beta_{t,a}$.
Note that, if $\Delta_{a}$ is sufficiently small or $c$ is sufficiently large, $A_{c,\lambda,a}$ is the dominant term over $B_{c,a}$.
We can see that the gap-dependent bounds increase as $\Delta_{a}$ decreases since $A_{c,\lambda,a}$ is inversely proportional to $\Delta_{a}$.
Similarly, as $c$ increases, the bounds also increase.
Intuitively speaking, the less $\Delta_{a}$, the more exploration is needed to distinguish an optimal action from sub-optimal actions and, thus, the upper bound increases.
Similarly, increasing the parameter $c$ leads to more exploration since the magnitude of $\beta_{t,a}$ increases. Hence, the upper bound increases.

From Table \ref{tbl:var_reg_bnds}, we can categorize the perturbations based on the order of the gap-dependent bound with respect to $T$.
The gap-dependent bound of Weibull and Gamma shows the logarithmic dependency on $T$ while that of Pareto and Fr\'echet has the polynomial dependency on $T$.
The gap-dependent regret bound of GEV shows the polynomial dependency since $\ln_{\zeta}(T)$ is a polynomial of $T$,
but, for $\zeta=0$, it has the logarithmic dependency since $\ln_{\zeta}(T)|_{\zeta=0}=\ln(T)$.
Furthermore, both Pareto and Fr\'echet distributions have the same regret bound since their $F^{-1}(x)$ has the same upper and lower bounds.
For gap-independent bounds, all perturbations we consider achieve the optimal rate $O(T^{1/p})$, but, the extra term dependent on $K$ appears.
Similarly to the case of the gap-dependent bounds, the sub-optimal factor of Weibull, Gamma, and GEV perturbations is proportional to the polynomial of $\ln(K)$,
while that of Pareto and Fr\'echet is proportional to the polynomial of $K$.

Compared to robust UCB, all perturbation methods have better gap-independent bound, but, the superiority of the gap-dependent bound can vary depending on $\Delta_{a}$.
In particular, the gap-dependent bound of Weibull, Gamma, and GEV ($\zeta=0$) follows $\ln(\Delta_{a}^{p/(p-1)}T)^{p/(p-1)}/\Delta_{a}^{1/(p-1)}$ while that of robust UCB follows $\ln(T)^{p/(p-1)}/\Delta_{a}^{1/(p-1)}$.
Hence, if $\Delta_{a}$ is large, then, $\ln(T)$ dominates $\ln(\Delta_{a}^{p/(p-1)})$ and it leads that robust UCB can have a smaller regret bound since $\ln(T)<\ln(T)^{p/(p-1)}$.
On the contrary, if $\Delta_{a}$ is sufficiently small, Weibull, Gamma, and GEV ($\zeta=0$) perturbations can have a smaller regret bound than robust UCB since $\ln(\Delta_{a}^{p/(p-1)})$ is a negative value for $\Delta_{a}\ll 1$ and reduces the regret bound of the perturbation methods dominantly.
This property makes it available that perturbation methods achieve the optimal minimax regret bound with respect to $T$ while robust UCB has the sup-optimal gap-independent bound.

\section{Experiments}

\paragraph{Convergence of Estimator}

We compare the $p$-robust estimator with other estimators including truncated mean, median of mean, and sample mean.
To make a heavy-tailed noise, we employ a Pareto random variable $z_{t}$ with parameters $\alpha_{\epsilon}$ and $\lambda_{\epsilon}$.
Then, a noise is defined as $\epsilon_{t}:= z_{t} - \mathbb{E}[z_{t}]$ to make the mean of the noise zero.
In simulation, we set a true mean $y=1$ and $Y_{t}=y+\epsilon_{t}$ is observed.
We measure the error $|\hat{Y}_{t}-y|$.
Note that, for all $p<\alpha_{\epsilon}$, the bound on the $p$-th moment is given as $\nu_{p}\leq |1-\mathbb{E}[z_{t}]|^{p}+\alpha_{\epsilon}\lambda_{\epsilon}^{p}/(\alpha_{\epsilon}-p)$.
Hence, we set $\alpha_{\epsilon}=p+0.05$ to bound the $p$-th moment.
We conduct the simulation for $p=1.1,1.5,1.9$ with $\lambda_{\epsilon}=1.0$ and for $p=1.1$, we run an additional simulation with $\lambda_{\epsilon}=0.1$.
The entire results are shown in Fig. \ref{fig:est}.
\begin{figure}[t]
\centering
\subfigure[$p=1.9,\lambda_{\epsilon}=1.0$]{
  \centering
  \includegraphics[width=.4\textwidth]{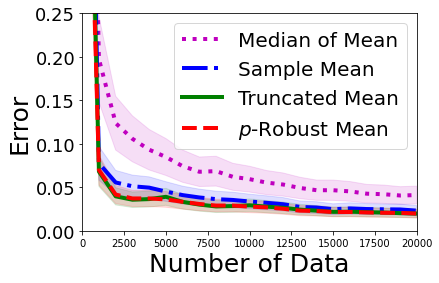}  
  \label{fig:est1.9}
  }
\subfigure[$p=1.5,\lambda_{\epsilon}=1.0$]{
  \centering
  \includegraphics[width=.4\textwidth]{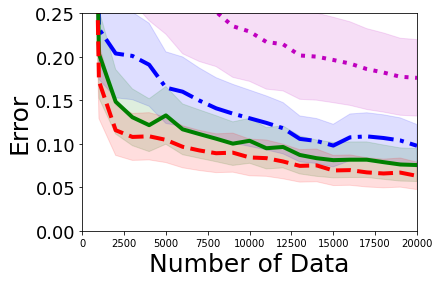}  
  \label{fig:est1.5}
  }
\subfigure[$p=1.1,\lambda_{\epsilon}=1.0$]{
  \centering
  \includegraphics[width=.4\textwidth]{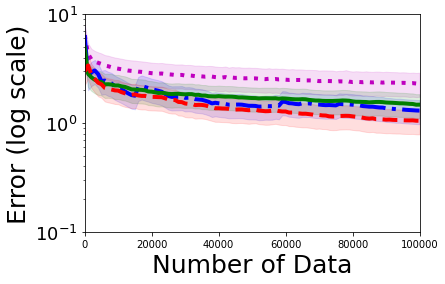}  
  \label{fig:est1.1-2}
  }
\subfigure[$p=1.1,\lambda_{\epsilon}=0.1$]{
  \centering
  \includegraphics[width=.4\textwidth]{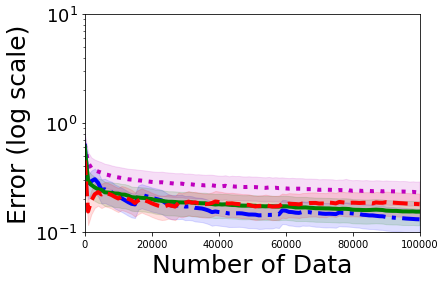}  
  \label{fig:est1.1}
  }
\caption{Error of Robust Estimators with Pareto Noises. $p$ is the maximum order of the bounded moment. $\lambda_{\epsilon}$ is a scale parameter of the noise. The lower $p$ or the larger $\lambda_{\epsilon}$, the heavier the tail of noise. The solid line is an averaged error over $60$ runs and a shaded region shows a quarter standard deviation.}
\label{fig:est}
\vspace{-15pt}
\end{figure}

From Fig. \ref{fig:est1.9}, \ref{fig:est1.5}, \ref{fig:est1.1-2}, and \ref{fig:est1.1}, we can observe the effect of $p$.
Since the smaller $p$, the heavier the tail of noise, the error of all estimators increases as $p$ decreases when the same number of data is given.
Except for the median of mean, robust estimators show better performance than a sample mean.
In particular, for $p=1.9, 1.5, 1.1$ with $\lambda_{\epsilon}=1.0$, the proposed method shows the best performance.
For $p=1.1$ with $\lambda_{\epsilon}=0.1$, the proposed method shows a comparable accuracy to the truncated mean even if our method does not employ the information of $\nu_{p}$.
From Fig. \ref{fig:est1.1-2} and \ref{fig:est1.1}, we can observe the effect of $\nu_{p}$ for fixed $p=1.1$.
As $\lambda_{\epsilon}$ decreases, $\nu_{p}$ decreases.
When $\lambda_{\epsilon}=0.1$, since the truncated mean employs $\nu_{p}$, the truncated mean shows better performance than the proposed estimator, but, the proposed estimator shows comparable performance even though it does not employ $\nu_{p}$.
We emphasize that these results show the clear benefit of the proposed estimator since our estimator does not employ $\nu_{p}$, but, generally show faster convergence speed.

\paragraph{Multi-Armed Bandits with Heavy-Tailed Rewards}
We compare APE$^{2}$ with robust UCB \cite{bubeck2013bandits} and DSEE \cite{vakili2013deterministic}.
Note that an empirical comparison with GSR \cite{KagrechaNJ19} is omitted here and can be found in the supplementary material since GSR shows poor performance in terms of the cumulative regret as mentioned in Section \ref{sec:pre}.
For APE$^{2}$, we employ the optimal hyperparameter of perturbations shown in Table \ref{tbl:var_reg_bnds}.
Note that GEV with $\zeta=0$ is a Gumbel distribution and Gamma with $\alpha=1$ (or Weibull with $k=1$) is an Exponential distribution and $\lambda$ of Gumbel and Exponential is set to be one.
Thus, we compare four perturbations: Gumbel, Exponential, Pareto, and Fr\'echet.
For APE$^{2}$ and DSEE, the best hyperparameter is found by using a grid search.
For robust UCB, since the original robust UCB consistently shows poor performance, we modify the confidence bound by multiplying a scale parameter $c$ and optimize $c$ using a grid search.
Furthermore, robust UCB employ the truncated mean estimator since the median of mean shows poor performance for the previous simulation.
All hyperparameters can be found in the supplementary material.
We synthesize a MAB problem that has a unique optimal action and all other actions are sub-optimal.
The optimal mean reward is set to one and $1-\Delta$ is assigned for the sub-optimal actions where $\Delta\in(0,1]$ determines a gap.
By controlling $\Delta$, we can measure the effect of the gap.
Similarly to the previous simulation, we add a heavy-tailed noise using the Pareto distribution.
We prepare six simulations by combining $\Delta=0.1,0.3,0.8$ and $p=1.5,p=1.1$. 
A scale parameter $\lambda_{\epsilon}$ of noise is set to be $0.1$ for $p=1.1$ and $1.0$ for $p=1.5$, respectively.
We measure the time averaged cumulative regret, i.e., $\mathcal{R}_{t}/t$, for $40$ trials.

The selective results are shown in Fig. \ref{fig:mab} and all results can be found in the supplementary material.
First, the perturbation methods generally outperform robust UCB.
For $p=1.5$ and $\Delta=0.8$, from Fig. \ref{fig:mab1.5-2}, we can observe that all methods converge rapidly at a similar rate.
While perturbation methods show better results, performance difference between robust UCB and perturbation methods is marginal.
However, when $\Delta$ is sufficiently small such as $\Delta=0.3,0.1$, Fig. \ref{fig:mab1.5} and \ref{fig:mab1.1-2} show that perturbation methods significantly outperform robust UCB.
In particular, Gumbel and Exponential perturbations generally show better performance than other perturbations.
We believe that the results on $\Delta$ support the gap-dependent bound of Table \ref{tbl:var_reg_bnds}.
As mentioned in Section \ref{subsec:regret_bnd}, 
when $\Delta$ decreases, Gumbel and Exponential perturbations show a faster convergence speed than robust UCB.
In addition, Fig. \ref{fig:mab1.1} empirically proves the benefit of the perturbation methods.
For $p=1.1$ with $\lambda_{\epsilon}=0.1$, Fig. \ref{fig:est1.1} shows that the proposed estimator converges slightly slower than the truncated mean, however, in the MAB setting, APE$^{2}$ convergences significantly faster than robust UCB as shown in Fig. \ref{fig:mab1.1}.
From this observation, we can conclude that perturbation methods more efficiently explore an optimal action than robust UCB despite of the weakness of the proposed estimator for $p=1.1$.
Unlikely to other methods, DSEE consistently shows poor performance.
While APE$^{2}$ and robust UCB can stop exploring sub-optimal actions if confidence bound or $\beta_{t,a}$ is sufficiently reduced, DSEE suffers from the lack of adaptability since DSEE is scheduled to choose every action uniformly and infinitely.

\begin{figure}[t]
\centering
\subfigure[$p=1.5,\Delta=0.8$]{
  \centering
  \includegraphics[width=.4\textwidth]{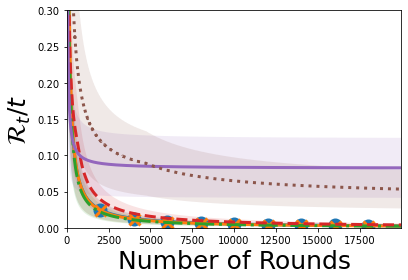}  
  \label{fig:mab1.5-2}
  }
\subfigure[$p=1.5,\Delta=0.3$]{
  \centering
  \includegraphics[width=.4\textwidth]{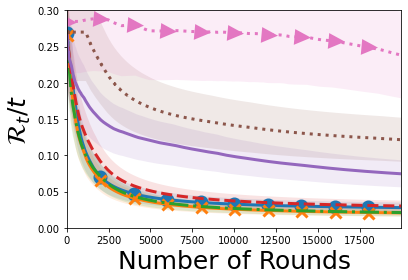}  
  \label{fig:mab1.5}
  }
\subfigure[$p=1.5,\Delta=0.1$]{
  \centering
  \includegraphics[width=.4\textwidth]{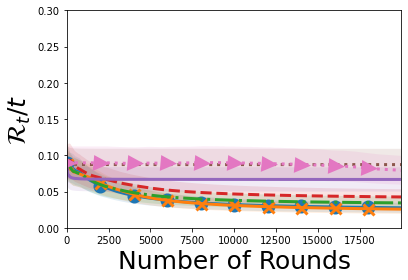}  
  \label{fig:mab1.1-2}
  }
\subfigure[$p=1.1,\Delta=0.1$]{
  \centering
  \includegraphics[width=.4\textwidth]{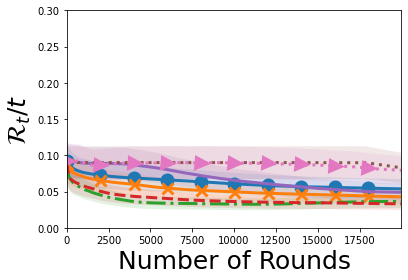}  
  \label{fig:mab1.1}
  }
\caption{Time-Averaged Cumulative Regret. $p$ is the maximum order of the bounded moment of noises. $\Delta$ is the gap between the maximum and second best reward. For $p=1.5$, $\lambda_{\epsilon}=1.0$ and for $p=1.1$, $\lambda_{\epsilon}=0.1$. The solid line is an averaged error over $40$ runs and a shaded region shows a quarter standard deviation.}
\label{fig:mab}
\vspace{-15pt}
\end{figure}

\section{Conclusion}
We have proposed novel $p$-robust estimator which can handle heavy-tailed noise distributions
which does not require prior knowledge about the bound on the $p$-th moment of rewards.
By using the proposed estimator, we also proposed an adaptively perturbed exploration with a $p$-robust estimator (APE$^{2}$) and proved that APE$^{2}$ has better regret bound than robust UCB.
In simulations, we empirically show that the proposed estimator outperforms the existing robust estimators and APE$^{2}$ outperforms robust UCB when the gap is small.
We have theoretically and empirically demonstrated that APE$^{2}$ can overcome rewards that are corrupted by heavy-tailed noises, making APE$^{2}$ an appropriate solution for many practical problems, such as online classification \cite{ying2006online}, online learning of a recommendation system \cite{tang2014ensemble}, and reinforcement learning \cite{haarnoja2018soft, lee2018sparse, lee2020generalized}.

\section{Broader Impact}
Multi-armed bandits with heavy-tailed rewards cover a wide range of online learning problems such as online classification, adaptive control, adaptive recommendation system, and reinforcement learning. Thus, the proposed algorithm has the potential to solve such practical applications. Since the proposed method learns a given task in a short time, it may reduce economical costs or time consumption. On the contrary, if the proposed method will be applied to personalized service, fast adaptation can make a person easily addicted to the service.
For example, if the recommendation system adapts to a person's preference well, it can continuously recommend items that arouse personal interest and that can lead to addiction.

\paragraph{Acknowledgements}

This work was supported by Institute of Information \& communications Technology Planning \& Evaluation(IITP) grant funded by the Korea government(MSIT)
(No.20200013360011001, Artificial Intelligence graduate school support(UNIST)) and (No. 2019-0-01190, [SW Star Lab] Robot Learning: Efficient, Safe, and Socially-Acceptable Machine Learning).

\bibliography{bib_noise_vs_perturbation}
\bibliographystyle{plainnat}

\appendix

\newpage

In this appendix, we prove Theorem 1, 2, 3, 4 and Corollary 1 in the main paper.

\section{Regret Lower Bound for Robust Upper Confidence Bound}

In this section, we prove Theorem 1 in Section 3, which derives the lower bound of the expected cumulative regret of robust UCB \cite{bubeck2013bandits}. 
First, we recall Assumption 1 in the main paper. 

\begin{assump}
    \label{asmp:rob_est}
Let $\left\{Y_{k}\right\}_{k=1}^{\infty}$ be i.i.d. random variables with the finite $p$-th moment for $p\in(1,2]$. Let $\nu_{p}$ be a bound of the $p$-th moment and $y$ be the mean of $Y_{k}$. Assume that, for all $\delta\in(0,1)$ and $n$ number of observations, there exists an estimator $\hat{Y}_{n}(\eta,\nu_{p},\delta)$ with a parameter $\eta$ such that
\begin{align*}
\mathbb{P}\left(\hat{Y}_{n} > y + \nu_{p}^{1/p}\left(\frac{\eta\ln(1/\delta)}{n}\right)^{1-1/p}\right)\leq \delta,\quad
\mathbb{P}\left(y > \hat{Y}_{n} + \nu_{p}^{1/p}\left(\frac{\eta\ln(1/\delta)}{n}\right)^{1-1/p}\right)\leq \delta.
\end{align*}
\end{assump}


Assumption \ref{asmp:rob_est} provides the confidence bound of the estimator $\hat{Y}_{n}$.
Note that $\hat{Y}_{n}=\hat{Y}_{n}(\eta, \nu_{p}, \delta)$ requires $\nu_{p}$ and $\delta$. 
By using this confidence bound, at round $t$, robust UCB selects an action based on the following strategy,
\begin{align}
    \label{eqn:robust_ucb}
    a_{t}:=\arg\max_{a\in\mathcal{A}}\left\{\hat{r}_{t-1,a} + \nu_{p}^{1/p}\left(\eta\ln(t^{2})/n_{t-1,a}\right)^{1-1/p}\right\}
\end{align}
where $\hat{r}_{t-1,a}$ is an estimator which satisfies Assumption \ref{asmp:rob_est} with $\delta=t^{-2}$ and $n_{t-1,a}$ denotes the number of times $a\in\mathcal{A}$ have been selected.
Under the strategy (\ref{eqn:robust_ucb}), we prove Theorem 1 in the main paper.

\begin{thm}
Assume that truncated mean, median of mean, and Catoni's $M$ estimator are employed to estimate the rewards. Then, there exists a $K$-armed stochastic bandit problem for which the regret of the robust UCB has the following lower bound, for $T>\max\left(10,\left[\frac{\nu^{\frac{1}{(p-1)}}}{\eta(K-1)}\right]^{2}\right)$,
\begin{align}
\mathbb{E}[\mathcal{R}_{T}] \geq \Omega\left(\left(K\ln(T)\right)^{\frac{p-1}{p}}T^{\frac{1}{p}}\right).
\end{align}
\end{thm}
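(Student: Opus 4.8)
The plan is to exhibit an explicit hard instance on which robust UCB is \emph{forced} to over-explore, and to read the lower bound off the number of wasted pulls. First I would take the $K$-armed instance with \emph{deterministic} rewards: the optimal arm $a^{\star}$ has reward $r_{a^{\star}}=\Delta$ and every other arm has reward $0$, so all gaps equal $\Delta$, where I set $\Delta := \Theta((K\eta\nu^{1/(p-1)}\ln T / T)^{1-1/p})$ and check that, for $T$ in the stated range, $\Delta\in(0,1)$ and the $p$-th moment bound holds. The point of deterministic data is that each of the three admissible estimators returns the exact sample value: a median of constant block means is that constant, Catoni's $\hat y$ solves $\sum_k\psi(\alpha(r-\hat y))=0$ hence equals $r$, and the truncated mean equals $r$ once the per-sample truncation level exceeds $r$, i.e.\ after the first $O(\Delta^{p}\log t)$ pulls, and is $0$ (hence still $\le r$) before that. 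Thus robust UCB degenerates to the deterministic index policy that at round $t$ pulls $\arg\max_a\{r_a+B(n_{t-1,a},t)\}$ with $B(n,t):=\nu^{1/p}(\eta\ln(t^{2})/n)^{1-1/p}$, and the entire trajectory can be traced by hand.

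Next I would establish two structural facts about this trajectory. (i) \emph{Count balancing:} by induction on $t$ one gets $n_{t,a}\le n_{t,a^{\star}}+1$ for every suboptimal $a$ --- a suboptimal arm is pulled only when its index beats the optimal arm's, and since $\hat r_{a^{\star}}\ge 0$ this forces $B(n_{t-1,a},t)\ge B(n_{t-1,a^{\star}},t)$, i.e.\ its count is no larger than the optimal arm's. Consequently $n_{T,a^{\star}}\ge T/(2K)$, so at all late rounds $B(n_{t-1,a^{\star}},t)\le B(T/(2K),T)=O(\Delta)$, and in particular $n_{T,a^{\star}}$ is large enough that $\hat r_{a^{\star}}=\Delta$ exactly. (ii) \emph{Forced exploration:} a suboptimal arm $a$ is pulled at round $t$ only if $B(n_{t-1,a},t)\ge \hat r_{a^{\star}}+B(n_{t-1,a^{\star}},t)\ge\Delta$, i.e.\ only while its count stays below $\overline m(t):=\eta\ln(t^{2})\nu^{1/(p-1)}/\Delta^{p/(p-1)}$; summing over the $K-1$ suboptimal arms shows that at most $(K-1)\overline m(T)$ suboptimal pulls can occur after the last round $t^{\dagger}$ at which $a^{\star}$ is pulled, so $t^{\dagger}\ge T-(K-1)\overline m(T)\ge T/2$ once $\Delta$ is calibrated so that $(K-1)\overline m(T)\le T/2$.

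Finally I would combine the two facts. Because $a^{\star}$ is pulled at round $t^{\dagger}\ge T/2$ with $n_{t^{\dagger}-1,a^{\star}}$ large, every suboptimal arm must by then satisfy $B(n_{t^{\dagger}-1,a},t^{\dagger})\le\Delta+B(n_{t^{\dagger}-1,a^{\star}},t^{\dagger})\le 3\Delta$, whence $n_{T,a}=\Omega(\eta\ln((t^{\dagger})^{2})\nu^{1/(p-1)}/\Delta^{p/(p-1)})=\Omega(\eta\ln(T)\nu^{1/(p-1)}/\Delta^{p/(p-1)})$ since $t^{\dagger}\ge T/2$. Hence $\mathbb E[\mathcal R_T]=\Delta\sum_{a\ne a^{\star}}n_{T,a}=\Omega((K-1)\eta\ln(T)\nu^{1/(p-1)}/\Delta^{1/(p-1)})$, and substituting $\Delta=\Theta((K\eta\nu^{1/(p-1)}\ln T/T)^{1-1/p})$ collapses this to $\Omega(((K-1)\eta\nu^{1/(p-1)}\ln T)^{1-1/p}T^{1/p})=\Omega((K\ln T)^{1-1/p}T^{1/p})$, treating $\eta,\nu$ as constants.

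The main obstacle is fact (i): controlling the optimal arm's \emph{own} confidence bonus, which inflates its index at the start of the run and could a priori "shield" it forever. One must prove the count-balancing lemma carefully so that the optimal arm accumulates $\Omega(T/K)$ pulls, which is exactly what makes its bonus $O(\Delta)$ over a constant fraction of the horizon and lets the under-sampled suboptimal arms overtake it. The second crux is the calibration of $\Delta$: it must be small enough that $(K-1)\overline m(T)\le T/2$ yet large enough that $\Delta\cdot(K-1)\overline m(T)$ is still of order $(K\ln T)^{1-1/p}T^{1/p}$, and these two requirements pin $\Delta$ down up to constants. The remainder --- verifying the three estimators are exact on deterministic data up to lower-order truncation corrections, handling the initialization phase and tie-breaking, and carrying the $\nu^{1/p}$ and $\eta$ factors correctly through the exponent arithmetic --- is routine bookkeeping.
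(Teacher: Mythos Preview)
Your proposal is correct and uses the same hard instance as the paper --- a $K$-armed problem with deterministic rewards (optimal arm at $\Delta$, others at $0$) with $\Delta$ of order $\bigl((K-1)\eta\,\nu^{1/(p-1)}\ln T/T\bigr)^{(p-1)/p}$. The organization, however, is genuinely different.

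The paper argues by dichotomy: it fixes a threshold $\tau := T/(1+5^{(p-1)/p})^{p/(p-1)}$ and splits on whether the running suboptimal-pull count $\sum_{a\ne a^\star} n_{t-1,a}$ ever exceeds~$\tau$. If it does, the regret is immediately $\ge \Delta\tau = \Omega(\Delta T)$. If not, a set-inclusion argument shows that for every $t\ge t_0$ (with $T-t_0=\Omega(T)$) one has deterministically $a_t\ne a^\star$: the optimal arm's bonus is already small because it has been pulled $\Omega(T)$ times, while at least one suboptimal arm still carries bonus exceeding $(1+5^{(p-1)/p})\Delta$. Either branch yields $\mathbb{E}[\mathcal{R}_T]=\Omega(\Delta T)$. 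The only estimator properties used are $\hat r_{a^\star}\le\Delta$ and $\hat r_{a\ne a^\star}=0$ on constant data.

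Your route avoids the case split by invoking the deterministic UCB validity $\hat r_{a^\star}+B(n_{a^\star},t)\ge\Delta$ (which indeed holds on constant data: Assumption~1 is a probability-$\le\delta<1$ statement about an event whose probability is $0$ or $1$, forcing it to be $0$). This yields an \emph{unconditional} upper bound $(K-1)\overline m(T)$ on total suboptimal pulls; by taking the constant in $\Delta$ roughly $4^{(p-1)/p}$ larger than the paper's you make this $\le T/2$, whence $t^\dagger\ge T/2$, and the lower bound on each $n_{T,a}$ follows from the index comparison at round $t^\dagger$. Note that your fact~(i) is in fact redundant once you have this: the bound $n_{T,a^\star}\ge T/2$ already comes from~(ii) and is stronger than the $T/(2K)$ you extract from count balancing. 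What your argument buys is directness --- no case analysis, and the constants are traced explicitly. What the paper's dichotomy buys is a slightly weaker requirement on the estimator: it never needs the lower-confidence-bound half of Assumption~1, only that the estimator never overshoots on constant data.
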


\begin{proof}
The proof is done by constructing a counter example.
We construct a $K$-armed bandit problem with deterministic rewards.
Let the optimal arm $a^{\star}$ give the reward of $\Delta = \nu^{\frac{1}{p}}\left(\frac{\eta(K-1)\ln(T)}{T}\right)^{\frac{p-1}{p}}$ whereas the other arms provide zero rewards.
Note that $\Delta\leq \nu^{\frac{1}{p}}\left(\frac{\eta(K-1)}{T^{\frac{1}{2}}}\right)^{\frac{p-1}{p}}<1$ and the estimator we used satisfies $\hat{r}_{a}\leq \Delta\mathbb{I}[a=a^{\star}]$ for all $a$ since rewards are $\Delta$ or $0$ in this MAB problem.
Let $E_{t}$ be the set of events which satisfy
$$
\sum_{a\neq a^{\star}}n_{t-1,a}\leq \frac{\nu^{\frac{1}{p-1}}\eta(K-1)}{2\left(\left(1+5^{\frac{p-1}{p}}\right)\Delta\right)^{\frac{p}{p-1}}}\ln(T^2)=\frac{T}{\left(1+5^{\frac{p-1}{p}}\right)^{\frac{p}{p-1}}}.
$$
If $\mathbb{P}(E_{t}) \leq 1/2$ for some $t\in[1,\cdots,T]$, then, the regret bound is computed as follows,
\begin{align}
\mathbb{E}\left[\mathcal{R}_{T}\right] & \geq \frac{1}{2}\mathbb{E}\left[\mathcal{R}_{t}\middle|E_{t}^{c}\right]\geq\frac{1}{2}\Delta\mathbb{E}\left[\sum_{a\neq a^{\star}}n_{t,a}\middle|E_{t}^{c}\right]
\geq\frac{1}{2}\Delta\mathbb{E}\left[\sum_{a\neq a^{\star}}n_{t-1,a}\middle|E_{t}^{c}\right]\\
&\geq \frac{\Delta}{2}\frac{T}{\left(1+5^{\frac{p-1}{p}}\right)^{\frac{p}{p-1}}}=\frac{\nu^{\frac{1}{p}}}{2\left(1+5^{\frac{p-1}{p}}\right)^{\frac{p}{p-1}}}\left(\eta(K-1)\ln(T)\right)^{\frac{p-1}{p}}T^{\frac{1}{p}}.
\end{align}
Hence, if $\mathbb{P}(E_{t}) \leq 1/2$ for some $t\in[1,\cdots,T]$, then, the lower bound holds.
On the contrary, if $\mathbb{P}(E_{t}) > 1/2$ for all $t \in [1,\cdots,T]$, then, the proof is done by showing $\mathbb{P}(a_{t}\neq a^{\star})\geq\frac{1}{2}$ for $t\geq t_{0}$ where
$$
t_{0}:=\max\left(1+\frac{2T}{5(K-1)}+\frac{2T}{\left(1+5^{\frac{p-1}{p}}\right)^{\frac{p}{p-1}}},T^{\frac{1}{2}}\right).
$$
Note that $T>t_{0}$ holds since 
$T > \frac{4T}{5}+1 > 1+\frac{2T}{5(K-1)}+\frac{2T}{\left(1+5^{\frac{p-1}{p}}\right)^{\frac{p}{p-1}}}$ holds for $T>10$ and $T > \sqrt{T}$ holds.
In other words, $\{t\in [1,\ldots, T]:t\geq t_{0}\}$ is not empty.

Before showing that $\mathbb{P}(a_{t}\neq a^{\star})\geq\frac{1}{2}$ holds, we first check the lower bound.
When $\mathbb{P}(E_{t}) > 1/2$ holds for all $t \in [1,\cdots,T]$, if $\mathbb{P}(a_{t}\neq a^{\star})\geq\frac{1}{2}$ holds for $t\geq t_{0}$, then, the lower bound of the regret can be obtained as follows,
\begin{align}
\mathbb{E}\left[\mathcal{R}_{T}\right]&\geq\Delta\sum_{t=t_{0}}^{T}\mathbb{P}\left(a_{t}\neq a^{\star}\right)\geq \frac{\Delta (T-t_{0})}{2}\\
&= \frac{\Delta}{2}\min\left(\left(1-\frac{2}{5(K-1)}-\frac{2}{\left(1+5^{\frac{p-1}{p}}\right)^{\frac{p}{p-1}}}\right)T-1,T(1-T^{-\frac{1}{2}})\right)\\
&\geq \frac{\Delta}{2}\min\left(\left(1-\frac{2}{5}-\frac{2}{5}\right)T-1,T(1-T^{-\frac{1}{2}})\right)
\end{align}
where the last inequality holds since $K-1 > 1$ and  $\left(1+5^{\frac{p-1}{p}}\right)^{\frac{p}{p-1}} > 5$. Then, by $T>10$,
\begin{align}
&\frac{\Delta}{2}\min\left(\left(1-\frac{2}{5}-\frac{2}{5}\right)T-1,T(1-T^{-\frac{1}{2}})\right) \\
&\geq \frac{\Delta T}{2}\min\left(\frac{1}{5}-T^{-1},1-T^{-\frac{1}{2}}\right)\\
&= \nu^{\frac{1}{p}}\left(\eta(K-1)\ln(T)\right)^{\frac{p-1}{p}}T^{\frac{1}{p}}\min\left(\frac{1}{5}-T^{-1},1-T^{-\frac{1}{2}}\right)\\
&= \frac{1}{10}\nu^{\frac{1}{p}}\left(\eta(K-1)\ln(T)\right)^{\frac{p-1}{p}}T^{\frac{1}{p}}.
\end{align}
Note that $\frac{1}{10} < 1- \frac{1}{\sqrt{10}}$.
Thus, we obtain 
$\mathbb{E}\left[\mathcal{R}_{T}\right]\geq \Omega\left(\left(K\ln(T)\right)^{\frac{p-1}{p}}T^{\frac{1}{p}}\right)$,
if $\mathbb{P}(a_{t}\neq a^{\star})\geq\frac{1}{2}$ holds for $t\geq t_{0}$.

The remaining part is to prove that $\mathbb{P}(a_{t}\neq a^{\star})\geq\frac{1}{2}$ holds for $t>t_{0}$ when $\mathbb{P}(E_{t}) \geq 1/2$ for all $t>0$.
We mainly prove that, if $E_{t}$ occurs, $a_{t}=a^{\star}$ never occurs since the confidence bound cannot overcome the estimation error between sub-optimal arms and optimal arm under the condition of $E_{t}$.
In other words, $\mathbb{P}\left(a_{t}\neq a^{\star}\middle|E_{t}\right) = 1$.
If $\mathbb{P}\left(a_{t}\neq a^{\star}\middle|E_{t}\right) = 1$ holds,
then, we can simply show that 
\begin{align}
    \mathbb{P}&(a_{t}\neq a^{\star})\geq\frac{1}{2}\mathbb{P}\left(a_{t}\neq a^{\star}\middle|E_{t}\right) = \frac{1}{2}.
\end{align}

Now, we analyze the set of event, $\left\{a_{t}\neq a^{\star}\right\}$, as follows, 
\begingroup
\allowdisplaybreaks
\begin{align}
    \left\{a_{t}\neq a^{\star}\right\}&=\bigcup_{a\neq a^{\star}}\left\{\hat{r}_{a^{\star}}+\nu^{\frac{1}{p}}\left(\frac{\eta\ln(t^2)}{n_{t-1,a^{\star}}}\right)^{\frac{p-1}{p}}\leq \hat{r}_{a}+\nu^{\frac{1}{p}}\left(\frac{\eta\ln(t^2)}{n_{t-1,a}}\right)^{\frac{p-1}{p}}\right\}\\
    &\supset \bigcup_{a\neq a^{\star}} \left\{\Delta+\nu^{\frac{1}{p}}\left(\frac{\eta\ln(t^2)}{n_{t-1,a^{\star}}}\right)^{\frac{p-1}{p}}\leq \nu^{\frac{1}{p}}\left(\frac{\eta\ln(t^2)}{n_{t-1,a}}\right)^{\frac{p-1}{p}}\right\}\\
    &\;\;\;\;\because \hat{r}_{a^{\star}} \leq \Delta\textnormal{ and } \hat{r}_{a\neq a^{\star}} = 0 \\
    &\supset\bigcup_{a\neq a^{\star}}\left\{\Delta+\nu^{\frac{1}{p}}\left(\frac{\eta\ln(t^2)}{n_{t-1,a^{\star}}}\right)^{\frac{p-1}{p}}\leq \left(1+5^{\frac{p-1}{p}}\right)\Delta\leq \nu^{\frac{1}{p}}\left(\frac{\eta\ln(t^2)}{n_{t-1,a}}\right)^{\frac{p-1}{p}}\right\}\\
    &=\left\{\nu^{\frac{1}{p}}\left(\frac{\eta\ln(t^2)}{n_{t-1,a^{\star}}}\right)^{\frac{p-1}{p}}\leq 5^{\frac{p-1}{p}}\Delta\right\} \bigcap \bigcup_{a\neq a^{\star}}\left\{\left(1+5^{\frac{p-1}{p}}\right)\Delta \leq \nu^{\frac{1}{p}}\left(\frac{\eta\ln(t^2)}{n_{t-1,a}}\right)^{\frac{p-1}{p}}\right\}\\
    &=\left\{ \frac{2\nu^{\frac{1}{p-1}}}{5\Delta^{\frac{p}{p-1}}} \eta\ln(t)\leq n_{t-1,a^{\star}}\right\} \bigcap \bigcup_{a\neq a^{\star}}\left\{ n_{t-1,a}\leq \frac{2\nu^{\frac{1}{p-1}}}{\left(\left(1+5^{\frac{p-1}{p}}\right)\Delta\right)^{\frac{p}{p-1}}}\eta\ln(t)\right\}\\
    &\supset\left\{ \frac{2\nu^{\frac{1}{p-1}}}{5\Delta^{\frac{p}{p-1}}} \eta\ln(T)\leq n_{t-1,a^{\star}}\right\} \bigcap \bigcup_{a\neq a^{\star}}\left\{ n_{t-1,a}\leq \frac{2\nu^{\frac{1}{p-1}}}{\left(\left(1+5^{\frac{p-1}{p}}\right)\Delta\right)^{\frac{p}{p-1}}}\eta\ln(t_{0})\right\}\\
    &\;\;\;\;\because T>t>t_{0}\\
    &\supset\left\{ \frac{2T}{5(K-1)}\leq n_{t-1,a^{\star}}\right\} \bigcap \bigcup_{a\neq a^{\star}}\left\{ n_{t-1,a}\leq \frac{2T}{\left(1+5^{\frac{p-1}{p}}\right)^{\frac{p}{p-1}}(K-1)}\frac{\ln(t_{0})}{\ln(T)}\right\}\\
    &\supset\left\{\frac{2T}{5(K-1)}\leq n_{t-1,a^{\star}}\right\}\bigcap\left\{\sum_{a\neq a^{\star}}n_{t-1,a}\leq \frac{2T}{\left(1+5^{\frac{p-1}{p}}\right)^{\frac{p}{p-1}}}\frac{\ln(t_{0})}{\ln(T)}\right\}.
\end{align}
\endgroup
Let $A:=\left\{\frac{2T}{5(K-1)}\leq n_{t-1,a^{\star}}\right\}$ and $B:=\left\{\sum_{a\neq a^{\star}}n_{t-1,a}\leq \frac{2T}{\left(1+5^{\frac{p-1}{p}}\right)^{\frac{p}{p-1}}}\frac{\ln(t_{0})}{\ln(T)}\right\}$.
Now, we check that $A\cap B$ contains $E_{t}$ for $t\geq t_{0}:=\max\left(1+\frac{2T}{5(K-1)}+\frac{2T}{\left(1+5^{\frac{p-1}{p}}\right)^{\frac{p}{p-1}}},T^{\frac{1}{2}}\right)$.

For the set $A$, if $\omega\in E_{t}$, then,
\begin{align}
    n_{t-1,a^{\star}} &= t-1 - \sum_{a\neq a^{\star}}n_{t-1,a} \geq t-1 - \frac{T}{\left(1+5^{\frac{p-1}{p}}\right)^{\frac{p}{p-1}}}\;\;\because \omega \in E_{t} \\
    &\geq t_{0}-1- \frac{T}{\left(1+5^{\frac{p-1}{p}}\right)^{\frac{p}{p-1}}}\geq \frac{2T}{5(K-1)} + \frac{T}{\left(1+5^{\frac{p-1}{p}}\right)^{\frac{p}{p-1}}}\\
    &\geq \frac{2T}{5(K-1)},
\end{align}
which implies $\omega \in A$.

For the set $B$, we have,
$$
\frac{\ln(t_{0})}{\ln(T)} \geq \frac{\ln(T^{\frac{1}{2}})}{\ln(T)} = \frac{1}{2}.
$$
By using this fact, we get
\begin{align}
    \frac{2T}{\left(1+5^{\frac{p-1}{p}}\right)^{\frac{p}{p-1}}}\frac{\ln(t_{0})}{\ln(T)} \geq \frac{T}{\left(1+5^{\frac{p-1}{p}}\right)^{\frac{p}{p-1}}} \geq \sum_{a\neq a^{\star}} n_{t-1,a}\;\;\because \omega \in E_{t},
\end{align}
which implies $\omega \in B$.
In summary, $\omega \in E_{t}$ implies $\omega \in A\cap B$.
Consequently, we have,
\begin{align}
    \mathbb{P}(a_{t}\neq a^{\star})&\geq\frac{1}{2}\mathbb{P}\left(a_{t}\neq a^{\star}\middle|E_{t}\right) \\
    &\geq\frac{1}{2}\mathbb{P}\left(A\cap B\middle|E_{t}\right) = \frac{1}{2}.
\end{align}
Thus,
$$
\mathbb{E}\left[\mathcal{R}_{T}\right] \geq \Omega\left(\left(K\ln(T)\right)^{\frac{p-1}{p}}T^{\frac{1}{p}}\right).
$$
\end{proof}

\section{Adaptively Perturbed Exploration with A New Robust Estimator}

\subsection{Bounds on Tail Probability of A New Robust Estimator}
Before deriving the bound of tail probability of a new estimator, we first analyze the property of the influence function $\psi(x)$.
Then, using the property of $\psi(x)$, we show that the tail probability has an exponential upper bound.
\begin{lem}\label{lem:aux-lem1}
For $p\in(1,2]$, assume that a positive constant $b_{p}$ satisfies the following inequality,
\[
b_{p}^{\frac{2}{p}}\left[2\left(\frac{2-p}{p-1}\right)^{1-\frac{2}{p}}+\left(\frac{2-p}{p-1}\right)^{2-\frac{2}{p}}\right] \geq 1.
\]
Then, the following inequality holds, for all $x\in\mathbb{R}$,
$$
\ln\left(1+x+b_{p}|x|^{p}\right) \geq -\ln\left(1-x+b_{p}|x|^{p}\right).
$$
\end{lem}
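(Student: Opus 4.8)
The plan is to reduce the stated logarithmic inequality to an elementary algebraic inequality and then settle the latter by a one‑variable minimization. Writing $A:=1+x+b_{p}|x|^{p}$ and $B:=1-x+b_{p}|x|^{p}$, and granting for the moment that $A,B>0$, the claim $\ln A\ge -\ln B$ is equivalent to $AB\ge 1$, i.e. to $(1+b_{p}|x|^{p})^{2}-x^{2}\ge 1$, i.e. to
\[
2b_{p}|x|^{p}+b_{p}^{2}|x|^{2p}\ \ge\ x^{2}\qquad\text{for all }x\in\mathbb{R}.
\]
So it suffices to prove this inequality; moreover, once it holds one gets $1+b_{p}|x|^{p}\ge\sqrt{1+x^{2}}>|x|$, which forces $A>0$ and $B>0$ and thereby justifies the reduction a posteriori.

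Next I would put $t=|x|\ge 0$: the inequality is trivial at $t=0$, and for $t>0$ it is equivalent to $h(t):=2b_{p}t^{p-2}+b_{p}^{2}t^{2p-2}\ge 1$. Since $p<2$ forces $h(t)\to\infty$ both as $t\to 0^{+}$ and as $t\to\infty$, $h$ attains a global minimum on $(0,\infty)$. From $h'(t)=t^{p-3}\bigl(2b_{p}(p-2)+b_{p}^{2}(2p-2)t^{p}\bigr)$ the (unique, for $1<p<2$) stationary point $t_{\star}$ satisfies $b_{p}t_{\star}^{p}=\tfrac{2-p}{p-1}=:r$. Substituting $t_{\star}^{p}=r/b_{p}$ — hence $t_{\star}^{p-2}=(r/b_{p})^{1-2/p}$ and $t_{\star}^{2p-2}=(r/b_{p})^{2-2/p}$ — and collecting powers of $b_{p}$ and $r$ gives
\[
h(t_{\star})=2b_{p}\Bigl(\tfrac{r}{b_{p}}\Bigr)^{1-2/p}+b_{p}^{2}\Bigl(\tfrac{r}{b_{p}}\Bigr)^{2-2/p}=b_{p}^{2/p}\bigl[\,2\,r^{1-2/p}+r^{2-2/p}\,\bigr].
\]
Thus $\min_{t>0}h(t)\ge 1$ is exactly the hypothesis on $b_{p}$, so $h(t)\ge 1$ for all $t>0$, which is the desired inequality. (For $p=2$ one has $r=0$ and $h(t)=2b_{2}+b_{2}^{2}t^{2}$ is increasing, so $\inf_{t>0}h=2b_{2}$, matching the formula with the convention $0^{0}:=1$; this degenerate case can be dispatched separately if one prefers not to invoke that convention.)

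The only genuinely fiddly step is the exponent bookkeeping in the displayed evaluation of $h(t_{\star})$: one must carefully rewrite $t_{\star}^{p-2}=t_{\star}^{p}\cdot t_{\star}^{-2}$ and $t_{\star}^{2p-2}=(t_{\star}^{p})^{2}\cdot t_{\star}^{-2}$ using $t_{\star}^{p}=r/b_{p}$ and $t_{\star}^{2}=(r/b_{p})^{2/p}$, which is routine but easy to slip on. An equivalent route that avoids picking $h$ is the substitution $s=b_{p}t^{p}$, which turns $2b_{p}t^{p}+b_{p}^{2}t^{2p}\ge t^{2}$ into $b_{p}^{2/p}\ge s^{2/p-1}/(2+s)$ and reduces everything to maximizing $g(s)=s^{2/p-1}/(2+s)$ over $s>0$; the maximizer is $s_{\star}=r$, with $\sup g=\bigl(2r^{1-2/p}+r^{2-2/p}\bigr)^{-1}$, again giving the conclusion. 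I would write up whichever of these two equivalent computations typesets more cleanly.
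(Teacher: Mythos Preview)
Your proposal is correct and follows essentially the same route as the paper: reduce $\ln A\ge -\ln B$ to $AB\ge 1$, rewrite this as $b_{p}^{2}|x|^{2p-2}+2b_{p}|x|^{p-2}\ge 1$, and then minimize the left-hand side over $t=|x|>0$ by locating the unique critical point $t_{\star}$ with $b_{p}t_{\star}^{p}=(2-p)/(p-1)$ and checking that the minimum equals the hypothesized quantity. The only noteworthy difference is cosmetic: the paper first proves $A>0$ directly by minimizing $f(x)=1+x+b_{p}|x|^{p}$, whereas you recover $A,B>0$ a posteriori from $(1+b_{p}|x|^{p})^{2}\ge 1+x^{2}$, which is a slightly cleaner way to handle the well-definedness of the logarithms.
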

\begin{proof}
Let $f(x):=1+x+b_{p}|x|^{p}$.
Then, the inequality is represented as $\ln(f(x)) \geq -\ln(f(-x))$.
Before starting the proof, first, we show that $f(x)>0$ by checking $\min_{x}f(x)>0$.
For $x\geq0$,
$$
f'(x)=1+b_{p}\cdot p x^{p-1} > 0.
$$
which is non-zero for all $x\geq 0$.
Thus, the minimum of $f(x)$ will appear at $x<0$.
For $x<0$, its derivative is
$$
f'(x)=1-b_{p}\cdot p(-x)^{p-1}.
$$
Then, $f'(x)$ become zero at $x=-\left(pb_{p}\right)^{-\frac{1}{p-1}}$.
Thus, the minimum of $f(x)$ is
\begin{align}
    f&\left(-\left(pb_{p}\right)^{-\frac{1}{p-1}}\right)=1-\left(pb_{p}\right)^{-\frac{1}{p-1}}+b_{p}\left(pb_{p}\right)^{-\frac{p}{p-1}}=1-\left(p^{-\frac{1}{p-1}}-p^{-\frac{p}{p-1}}\right)b_{p}^{-\frac{1}{p-1}}\\
    &\geq 1 - \left(p^{-\frac{1}{p-1}}-p^{-\frac{p}{p-1}}\right)\left[2\left(\frac{2-p}{p-1}\right)^{1-\frac{2}{p}}+\left(\frac{2-p}{p-1}\right)^{2-\frac{2}{p}}\right] ^{\frac{p}{2(p-1)}}\\
    &\;\;\because\;\left[2\left(\frac{2-p}{p-1}\right)^{1-\frac{2}{p}}+\left(\frac{2-p}{p-1}\right)^{2-\frac{2}{p}}\right]^{\frac{p}{2(p-1)}} \geq b_{p}^{-\frac{1}{p-1}}\\
    &= 1 - p^{-\frac{p}{p-1}}\left[2\left(p-1\right)\left(2-p\right)^{1-\frac{2}{p}}+\left(2-p\right)^{2-\frac{2}{p}}\right] ^{\frac{p}{2(p-1)}}\\
    &= 1 - p^{-\frac{p}{p-1}}\left[2\left(p-1\right)+\left(2-p\right)\right]^{\frac{p}{2(p-1)}}\left(2-p\right)^{\frac{p-2}{2(p-1)}}\\
    &= 1 - p^{-\frac{p}{2(p-1)}}\left(2-p\right)^{\frac{p-2}{2(p-1)}}> 0.
\end{align}
Note that $\frac{1}{2}\leq p^{-\frac{p}{2(p-1)}}\left(2-p\right)^{\frac{p-2}{2(p-1)}}< 1$ holds for $p\in(1,2]$.
Since $f(-x)$ and $f(x)$ are symmetric to the $y$-axis, $f(-x)$ is also positive for all $x\in\mathbb{R}$.

By noticing that $\ln(f(x)) \geq -\ln(f(-x))$ is equivalent to $f(x)f(-x) > 1$,
We show that the following inequality holds,
\begin{align}
(1+x+b_{p}|x|^{p})(1-x+b_{p}|x|^{p}) &\geq 1 \\
b_{p}^{2}|x|^{2p}+2b_{p}|x|^{p}+1 - x^{2} &\geq 1 \\
b_{p}^{2}|x|^{2p-2}+2b_{p}|x|^{p-2} - 1 &\geq 0\;\;(\because\;x^{2}\geq0).
\end{align}
Let us define $g(z):=b_{p}^{2}z^{2p-2}+2b_{p}z^{p-2}$ for $z>0$.
Now, we show that $g(z)>1$ holds for $z>0$.
First, we analyze the derivative of $g(z)$ computed as follows,
$$
g'(z)=2b_{p}z^{p-3}\left(b_{p}(p-1)z^{p}+(p-2)\right).
$$
Since $b_{p}>0$ and $z^{p-3}>0$, the sign of $g'(z)$ is determined by the term $\left(b_{p}(p-1)z^{p}+(p-2)\right)$, which is an increasing function and, hence, has a unique root at $z_{0}:=\left(\frac{(2-p)}{(p-1)}\right)^{\frac{1}{p}}b_{p}^{-\frac{1}{p}}$.
In other words, since $\left(b_{p}(p-1)z^{p}+(p-2)\right)$ has the unique root at $z_{0}$ for $z>0$,
$g'(z)$ also has a unique root at $z_{0}$ which is the minimum point.
Finally, 
$$
g\left(z_{0}\right) - 1 = b_{p}^{\frac{2}{p}}\left[2\left(\frac{2-p}{p-1}\right)^{1-\frac{2}{p}}+\left(\frac{2-p}{p-1}\right)^{2-\frac{2}{p}}\right] - 1 \geq 0.
$$
where the last inequality holds by the assumption.
Consequently,
$g(z)-1\geq g\left(z_{0}\right) - 1\geq 0$ holds and, hence,
$f(x)f(-x)\geq1$ holds. The lemma is proved.
\end{proof}

\begin{cor}
Let $b_{p}:=\left[2\left(\frac{2-p}{p-1}\right)^{1-\frac{2}{p}}+\left(\frac{2-p}{p-1}\right)^{2-\frac{2}{p}}\right]^{-\frac{p}{2}}$.
For all $x\in\mathbb{R}$, the following inequality holds
$$
\ln\left(1+x+b_{p}|x|^{p}\right) \geq -\ln\left(1-x+b_{p}|x|^{p}\right).
$$
\end{cor}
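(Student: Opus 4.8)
The plan is to recognize that this corollary is nothing more than Lemma~\ref{lem:aux-lem1} applied to the specific constant named in the statement, so the entire task reduces to checking that this choice of $b_p$ satisfies the hypothesis of that lemma. First I would abbreviate $S := 2\left(\tfrac{2-p}{p-1}\right)^{1-2/p} + \left(\tfrac{2-p}{p-1}\right)^{2-2/p}$, noting that $S > 0$ for every $p \in (1,2]$ (with the convention $0^0 = 1$ at $p = 2$, where $S = 2$ and $b_p = 1/2$, recovering Catoni's constant).

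Next I would substitute: since $b_p = S^{-p/2}$, we get $b_p^{2/p} = \left(S^{-p/2}\right)^{2/p} = S^{-1}$, hence $b_p^{2/p}\, S = 1$. This is exactly the hypothesis $b_p^{2/p}\left[2\left(\tfrac{2-p}{p-1}\right)^{1-2/p} + \left(\tfrac{2-p}{p-1}\right)^{2-2/p}\right] \geq 1$ of Lemma~\ref{lem:aux-lem1}, satisfied here with equality. Applying the lemma verbatim then yields $\ln\left(1 + x + b_p|x|^p\right) \geq -\ln\left(1 - x + b_p|x|^p\right)$ for all $x \in \mathbb{R}$, which is the claim. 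One should remark that the proof of Lemma~\ref{lem:aux-lem1} already shows $f(x) = 1 + x + b_p|x|^p > 0$ for this $b_p$ (and $f(-x) > 0$ by symmetry), so both logarithms are well-defined and no separate positivity check is needed.

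There is essentially no obstacle: the real work — reducing the inequality to $g(z) := b_p^2 z^{2p-2} + 2b_p z^{p-2} \geq 1$ for $z > 0$, minimizing $g$ at $z_0 = \left(\tfrac{2-p}{p-1}\right)^{1/p} b_p^{-1/p}$, and evaluating $g(z_0) - 1 = b_p^{2/p}\, S - 1$ — was done inside Lemma~\ref{lem:aux-lem1}. The only thing to emphasize is that with $b_p = S^{-p/2}$ this minimum value is exactly $0$, so the inequality in the corollary is tight, which is precisely why the paper picks this value of $b_p$ (the smallest admissible one, giving the least regularization in the influence function $\psi_p$).
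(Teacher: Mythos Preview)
Your proposal is correct and matches the paper's approach exactly: the paper's proof is the single sentence ``the proof is done by directly applying Lemma~\ref{lem:aux-lem1} with $b_p = S^{-p/2}$,'' and you have simply spelled out the verification $b_p^{2/p}S = 1$ that makes this application valid. Your added remarks on positivity and on why this particular $b_p$ is chosen are accurate and helpful but go beyond what the paper records.
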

\begin{proof}
The proof is done by directly applying the Lemma \ref{lem:aux-lem1} with $$
b_{p}=\left[2\left(\frac{2-p}{p-1}\right)^{1-\frac{2}{p}}+\left(\frac{2-p}{p-1}\right)^{2-\frac{2}{p}}\right]^{-\frac{p}{2}}.
$$
\end{proof}

\begin{thm}\label{thm:wrob_est} Let $\left\{Y_{k}\right\}_{k=1}^{\infty}$ be i.i.d. random variable sampled from a heavy-tailed distribution with a finite $p$-th moment. Define $y:=\mathbb{E}\left[Y_{k}\right]$ and an estimator as
\begin{equation}
\hat{Y}_{n}:=\frac{c}{n^{1-\frac{1}{p}}}\sum_{k=1}^{n}\psi\left(\frac{Y_{k}}{cn^{\frac{1}{p}}}\right)\label{eq:estimator-heavy}
\end{equation}
where $c>0$ is a constant, and $\psi$ is an influence function 
which is defined by:
\[
\psi(x):=\begin{cases}
\ln\left(b_{p}|x|^{p}+x+1\right) & :x\geq0\\
\ln\left(b_{p}|x|^{p}-x+1\right)^{-1} & :x<0.
\end{cases}
\]
where $b_{p}:=\left[2\left(\frac{2-p}{p-1}\right)^{1-\frac{2}{p}}+\left(\frac{2-p}{p-1}\right)^{2-\frac{2}{p}}\right]^{-\frac{p}{2}}$.
Then, for all $\delta>0$,
\[
\mathbb{P}\left(\hat{Y}_{n}-y>\delta\right)\leq\exp\left(-\frac{n^{1-\frac{1}{p}}}{c}\delta+\frac{b_{p}\nu_{p}}{c^{p}}\right)
\]
and
\[
\mathbb{P}\left(y-\hat{Y}_{n}>\delta\right)\leq\exp\left(-\frac{n^{1-\frac{1}{p}}}{c}\delta+\frac{b_{p}\nu_{p}}{c^{p}}\right)
\]
where $\nu_{p}:=\mathbb{E}\left[\left|Y_{k}\right|^{p}\right]$.
\end{thm}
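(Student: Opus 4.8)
The plan is to run a Chernoff-type argument with a fixed (non-optimized) exponential parameter, exploiting the fact that the scalings $n^{1/p}$ inside $\psi$ and $n^{1-1/p}$ outside the sum were chosen precisely so they cancel. First I would set $\lambda := n^{1-1/p}/c$, so that $\lambda\hat{Y}_n = \sum_{k=1}^n \psi\!\left(Y_k/(cn^{1/p})\right)$ is an \emph{unscaled} sum, and apply Markov's inequality to $\exp(\lambda\hat{Y}_n)$ together with independence:
\[
\mathbb{P}\left(\hat{Y}_n - y > \delta\right) \leq e^{-\lambda(y+\delta)}\,\mathbb{E}\!\left[\exp\!\Big(\sum_{k=1}^n \psi\big(Y_k/(cn^{1/p})\big)\Big)\right] = e^{-\lambda(y+\delta)}\prod_{k=1}^n \mathbb{E}\!\left[e^{\psi(Y_k/(cn^{1/p}))}\right].
\]

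Next I would bound each factor using the pointwise sandwich $-\ln(b_p|x|^p - x + 1) \leq \psi(x) \leq \ln(b_p|x|^p + x + 1)$, valid for every $x\in\mathbb{R}$: the right-hand inequality is an equality for $x\geq 0$ by definition of $\psi$, and for $x<0$ it is exactly the ordering of logarithms supplied by the Corollary to Lemma~\ref{lem:aux-lem1}. Exponentiating the upper half gives $e^{\psi(x)} \leq b_p|x|^p + x + 1$, so, substituting $x = Y_k/(cn^{1/p})$ and using $|Y_k/(cn^{1/p})|^p = |Y_k|^p/(c^p n)$ with $\mathbb{E}|Y_k|^p = \nu_p$ and $\mathbb{E}Y_k = y$,
\[
\mathbb{E}\!\left[e^{\psi(Y_k/(cn^{1/p}))}\right] \leq 1 + \frac{y}{cn^{1/p}} + \frac{b_p\nu_p}{c^p n} \leq \exp\!\left(\frac{y}{cn^{1/p}} + \frac{b_p\nu_p}{c^p n}\right),
\]
where the last step is $1+u\leq e^u$. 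Taking the product over the $n$ i.i.d. terms yields $\prod_{k=1}^n \mathbb{E}[e^{\psi(\cdot)}] \leq \exp\!\left(n^{1-1/p}y/c + b_p\nu_p/c^p\right) = \exp(\lambda y + b_p\nu_p/c^p)$. Plugging this back, the $e^{-\lambda y}$ and $e^{\lambda y}$ cancel and leave $\mathbb{P}(\hat{Y}_n - y > \delta) \leq \exp\!\left(-\lambda\delta + b_p\nu_p/c^p\right) = \exp\!\left(-n^{1-1/p}\delta/c + b_p\nu_p/c^p\right)$, which is the first inequality.

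For the lower tail I would use that $\psi$ is odd, i.e. $\psi(-x) = -\psi(x)$: the estimator built from the i.i.d. sample $\{-Y_k\}_{k=1}^n$ equals $-\hat{Y}_n$, has mean $-y$, and has the same $p$-th moment bound $\nu_p$, so the upper-tail bound just proved applied to this sample gives $\mathbb{P}(-\hat{Y}_n - (-y) > \delta) = \mathbb{P}(y - \hat{Y}_n > \delta) \leq \exp\!\left(-n^{1-1/p}\delta/c + b_p\nu_p/c^p\right)$; alternatively one repeats the Chernoff computation with parameter $-\lambda$ and the lower half of the sandwich, $e^{-\psi(x)} \leq b_p|x|^p - x + 1$. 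I do not expect a genuine obstacle here: all the analytic work — positivity of $b_p|x|^p \pm x + 1$ and the ordering of the two logarithms — is already carried out in Lemma~\ref{lem:aux-lem1}, so what remains is routine bookkeeping of exponents; the one point to be careful about is that the $n$-dependent scalings must be matched exactly so that $\lambda\hat{Y}_n$ becomes a clean sum of $\psi$-values with no leftover factor.
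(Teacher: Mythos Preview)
Your proposal is correct and follows exactly the paper's own argument: set the exponential parameter to $\lambda=n^{1-1/p}/c$, apply Markov's inequality, use the pointwise upper bound $e^{\psi(x)}\leq 1+x+b_p|x|^p$ (justified via Lemma~\ref{lem:aux-lem1}) together with independence and $1+u\leq e^u$, and obtain the lower tail by symmetry. The only cosmetic difference is that the paper dispatches the lower tail with ``by the similar way,'' whereas you make the oddness-of-$\psi$ reduction explicit.
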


\begin{proof}
From the Markov's inequality,
\begin{align}
\mathbb{P}\left(\frac{n^{1-\frac{1}{p}}}{c}\hat{Y}_{n}>\frac{n^{1-\frac{1}{p}}}{c}(y+\delta)\right) \leq\exp\left(-\frac{n^{1-\frac{1}{p}}}{c}(y+\delta)\right)\mathbb{E}\left[\exp\left(\frac{n^{1-\frac{1}{p}}}{c}\hat{Y}_{n}\right)\right]\label{eq:thm1-aux0}
\end{align}
Since $\psi(x)\leq\ln\left(b_{p}|x|^{p}+x+1\right)$ holds by its definition,
we have
\begin{align}
\mathbb{E}\left[\exp\left(\frac{n^{1-\frac{1}{p}}}{c}\hat{Y}_{n}\right)\right] & \leq \mathbb{E}\left[\prod_{k=1}^{n}\left(1+\frac{Y_{k}}{cn^{\frac{1}{p}}}+b_{p}\frac{Y_{k}^{p}}{2(cn^{\frac{1}{p}})^{p}}\right)\right]\\
& = \prod_{k=1}^{n}\mathbb{E}\left[1+\frac{Y_{k}}{cn^{\frac{1}{p}}}+b_{p}\frac{Y_{k}^{p}}{2c^{p}n}\right]\\
& = \left(1+\frac{y}{cn^{\frac{1}{p}}}+b_{p}\frac{v_{p}}{2c^{p}n}\right)^{n} \\
& \leq \exp\left(\frac{n^{1-\frac{1}{p}}}{c}y+b_{p}\frac{v_{p}}{2c^{p}}\right)
\label{eq:thm1-aux1}
\end{align}
Combining \eqref{eq:thm1-aux0} and \eqref{eq:thm1-aux1}, we have
\begin{align*}
\mathbb{P}\left(\hat{Y}_{n}-y>\delta\right) & \leq\exp\left(-\frac{n^{1-\frac{1}{p}}}{c}(y+\delta)\right)\exp\left(\frac{n^{1-\frac{1}{p}}}{c}y+\frac{b_{p}\nu_{p}}{2c^{p}}\right)\\
 & =\exp\left(-\frac{n^{1-\frac{1}{p}}}{c}\delta+\frac{b_{p}\nu_{p}}{2c^{p}}\right)
\end{align*}
The upper bound of $\mathbb{P}\left(y-\hat{Y}_{n}>\delta\right)$ can be obtained by the similar way.
Hence we obtain the desired result. The theorem is proved.
\end{proof}

\section{Regret Analysis Scheme for General Perturbation}

In this section, we prove Theorem 3 and 4 in the main paper under Assumption 2. 

\subsection{Regret Upper Bounds}
To analyze the regret $\mathcal{R}_{T}$ in the view of expectation,
we borrow the notion of filtration $\{\mathcal{H}_{t}:t=1,\ldots,T\}$
from \cite{agrawalG13thompson} and \cite{kim2019optimality} where the filtration $\mathcal{H}_{t}$
is defined as the history of plays until time $t$ as follows
\[
\mathcal{H}_{t}:=\{a_{\ell},\mathbf{R}_{a_{\ell}}:\ell=1,\ldots,t\}
\]
By definition, $\mathcal{H}_{1}\subset\mathcal{H}_{2}\subset\cdots\subset\mathcal{H}_{T-1}$ holds.
Finally, we separates the event $\{a_{t}=a\}$ into three groups based on the threshold $x_{a}:=r_{a}+\Delta_{a}/3$ and $y_{a}:=r_{a^{\star}}-\Delta_{a}/3$.
Finally, for a given reward estimator $\hat{r}_{t,a}$, let
us define the following sets which will be used to partition the event
$\{a_{t}=a\}$: 
\[
E_{t,a}:=\{a_{t}=a\},\quad\hat{E}_{t,a}:=\{\hat{r}_{t,a}\leq x_{a}\},\quad\tilde{E}_{t,a}:=\{\hat{r}_{t-1,a}+\beta_{t-1,a}G_{t,a}\leq y_{a}\}
\]
We separate $E_{t,a}$ into three subsets: 
\begin{equation}
E_{t,a}=E_{t,a}^{(1)}\cup E_{t,a}^{(2)}\cup E_{t,a}^{(3)}\label{eq:set-decomp}
\end{equation}
where
\begin{align*}
E_{t,a}^{(1)} & =E_{t,a}\cap\hat{E}_{t,a}^{c}\\
E_{t,a}^{(2)} & =E_{t,a}\cap\hat{E}_{t,a}\cap\tilde{E}_{t,a}\\
E_{t,a}^{(3)} & =E_{t,a}\cap\hat{E}_{t,a}\cap\tilde{E}_{t,a}^{c}
\end{align*}
In the following sections, we estimate the upper bound of the probability
of the event $E_{t,a}$ based on the decomposition \eqref{eq:set-decomp}.

\begin{lem}
\label{lem:first-sub-gaussian}Assume that the $p$-th moment of rewards is bounded by a constant $\nu_{p}<\infty$, $\hat{r}_{t,a}$ is a $p$-robust estimator of \eqref{eq:estimator-heavy} and $F(x)$ satisfies Assumption 2. Then for
any action $a\in\mathcal{A}$, it holds
\[
\sum_{t=1}^{T}\mathbb{P}\left(E_{t,a}^{(1)}\right)\leq1+\exp\left(\frac{b_{p}\nu_{p}}{2c^{p}}\right)\left(\frac{3c}{\Delta_{a}}\right)^{\frac{p}{p-1}}\Gamma\left(\frac{2p-1}{p-1}\right).
\]
\end{lem}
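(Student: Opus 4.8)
The plan is to discard the perturbation $G_{t,a}$ (it does not enter the definition of $E_{t,a}^{(1)}$), reduce the time-indexed sum to a sum over the number of observations of arm $a$, apply the exponential tail bound of Theorem~\ref{thm:wrob_est} to the $p$-robust estimator, and finally compare the resulting geometric-like series with a Gamma integral. Concretely, since $E_{t,a}^{(1)}=E_{t,a}\cap\hat{E}_{t,a}^{c}$ forces both $a_{t}=a$ and $\hat{r}_{t,a}>x_{a}=r_{a}+\Delta_{a}/3$, I would bound $\mathbb{P}(E_{t,a}^{(1)})\le\mathbb{P}(a_{t}=a,\ \hat{r}_{t,a}>r_{a}+\Delta_{a}/3)$. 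Along the (random) subsequence of rounds at which $a$ is chosen, the number of samples feeding $\hat{r}_{t,a}$ increases by exactly one at each step, so each value $s\in\{1,\dots,T\}$ is attained at most once on $\{a_{t}=a\}$. Writing $\hat{r}_{a}^{(s)}$ for the estimator~\eqref{eq:estimator-heavy} formed from the first $s$ i.i.d.\ reward draws of arm $a$, a union bound over $s$ gives
\[
\sum_{t=1}^{T}\mathbb{P}\!\left(E_{t,a}^{(1)}\right)\ \le\ \sum_{s=1}^{T}\mathbb{P}\!\left(\hat{r}_{a}^{(s)}-r_{a}>\tfrac{\Delta_{a}}{3}\right).
\]

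Next I would bound the $s=1$ term trivially by $1$, and for each $s\ge2$ invoke Theorem~\ref{thm:wrob_est} with $n=s$ and threshold $\Delta_{a}/3$, which gives $\mathbb{P}(\hat{r}_{a}^{(s)}-r_{a}>\Delta_{a}/3)\le\exp(b_{p}\nu_{p}/(2c^{p}))\exp(-\Delta_{a}s^{(p-1)/p}/(3c))$. Since $s\mapsto\exp(-\Delta_{a}s^{(p-1)/p}/(3c))$ is decreasing, its sum over $s\ge2$ is at most $\int_{0}^{\infty}\exp(-\Delta_{a}s^{(p-1)/p}/(3c))\,ds$, and the substitution $t=s^{(p-1)/p}$ reduces this to $\tfrac{p}{p-1}\int_{0}^{\infty}e^{-\Delta_{a}t/(3c)}\,t^{1/(p-1)}\,dt=\tfrac{p}{p-1}(3c/\Delta_{a})^{p/(p-1)}\Gamma(\tfrac{p}{p-1})=(3c/\Delta_{a})^{p/(p-1)}\Gamma(\tfrac{2p-1}{p-1})$, where the last equality uses $z\Gamma(z)=\Gamma(z+1)$ with $z=p/(p-1)$. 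Adding back the $s=1$ contribution yields exactly
\[
\sum_{t=1}^{T}\mathbb{P}\!\left(E_{t,a}^{(1)}\right)\ \le\ 1+\exp\!\left(\frac{b_{p}\nu_{p}}{2c^{p}}\right)\left(\frac{3c}{\Delta_{a}}\right)^{\frac{p}{p-1}}\Gamma\!\left(\frac{2p-1}{p-1}\right),
\]
which is the claim; note that Assumption~2 on the perturbation plays no role in this particular bound.

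The step that needs care is the reduction in the first paragraph: replacing a sum over rounds by a sum over sample counts requires handling the fact that the ``$s$-th pull of arm $a$'' occurs at a random, history-dependent time, so $\hat{r}_{a}^{(s)}$ must be interpreted correctly. The standard device is to attach to each arm an independent i.i.d.\ ``ghost'' reward stream, define $\hat{r}_{a}^{(s)}$ as the estimator of \eqref{eq:estimator-heavy} on its first $s$ entries, and observe that the estimator realized at round $t$ equals $\hat{r}_{a}^{(\cdot)}$ evaluated at the current sample count; the union bound over the at most $T$ possible counts together with the elementary calculus above then completes the argument.
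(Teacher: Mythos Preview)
Your proof is correct and takes essentially the same route as the paper: convert the time sum to a sum over the sample count of arm $a$ (the paper does this via the stopping times $\tau_{k}$, you via ghost samples and a union bound), apply the tail bound of Theorem~\ref{thm:wrob_est} with threshold $\Delta_{a}/3$, and bound the resulting series by the same Gamma integral using $z\Gamma(z)=\Gamma(z+1)$ with $z=p/(p-1)$. Your observation that Assumption~2 plays no role in this particular lemma is also accurate.
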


\begin{proof}
Fix arm $a\in\mathcal{A}$. Let $\tau_{k}$ denotes the smallest round
when the arm $a$ is sampled for the $k$-th time i.e. $k=\sum_{t=1}^{\tau_{k}}\mathbb{I}[E_{t,a}]$.
We let $\tau_{0}:=0$ and $\tau_{k}=T$ for $k>n_{a}(T)$. Then it
is easy to see that for $\tau_{k}<t\leq\tau_{k+1}$
\begin{equation}
\mathbb{I}[E_{t,a}]=\begin{cases}
1 & :t=\tau_{k+1}\\
0 & :t\neq\tau_{k+1}
\end{cases}\label{eq:tau_k}
\end{equation}
Therefore,
\begin{align*}
\sum_{t=1}^{T}\mathbb{P}\left(E_{t,a}^{(1)}\right)=\sum_{t=1}^{T}\mathbb{E}\left[\mathbb{I}[E_{t,a}^{(1)}]\right] & =\sum_{k=0}^{T-1}\mathbb{E}\left[\sum_{t=1+\tau_{k}}^{\tau_{k+1}}\mathbb{I}[E_{t,a}^{(1)}]\right]\\
 & =\mathbb{E}\left[\sum_{t=1}^{\tau_{1}}\mathbb{I}\left(E_{t,a}\cap\hat{E}_{t,a}^{c}\right)\right]+\sum_{k=1}^{T-1}\mathbb{E}\left[\sum_{t=1+\tau_{k}}^{\tau_{k+1}}\mathbb{I}[E_{t,a}\cap\hat{E}_{t,a}^{c}]\right]\\
 & \leq1+\sum_{k=1}^{T-1}\mathbb{P}\left(\hat{E}_{\tau_{k+1},a}^{c}\right)
\end{align*}
where the last inequality holds by \eqref{eq:tau_k}. Also, by the
definition of $\hat{E}_{t,a}$ and Theorem \ref{thm:wrob_est}, 
\begin{align*}
\sum_{k=1}^{T-1}\mathbb{P}\left(\hat{E}_{\tau_{k+1},a}^{c}\right) & \leq\sum_{k=1}^{T-1}\exp\left(-\frac{\Delta_{a}k^{1-\frac{1}{p}}}{3c}+\frac{b_{p}\nu_{p}}{2c^{p}}\right)\leq \exp\left(\frac{b_{p}\nu_{p}}{2c^{p}}\right)\int_{0}^{\infty}\exp\left(-\frac{\Delta_{a}x^{1-\frac{1}{p}}}{3c}\right)dx\\
&\leq \exp\left(\frac{b_{p}\nu_{p}}{2c^{p}}\right)\left(\frac{3c}{\Delta_{a}}\right)^{\frac{p}{p-1}}\frac{p}{p-1}\int_{0}^{\infty}\exp\left(-t\right)t^{\frac{1}{p-1}}dt\;\;\because\;t=\frac{\Delta_{a}x^{1-\frac{1}{p}}}{3c}\\
&=\exp\left(\frac{b_{p}\nu_{p}}{2c^{p}}\right)\left(\frac{3c}{\Delta_{a}}\right)^{\frac{p}{p-1}}\frac{p}{p-1}\Gamma\left(\frac{p}{p-1}\right)\\
&= \exp\left(\frac{b_{p}\nu_{p}}{2c^{p}}\right)\left(\frac{3c}{\Delta_{a}}\right)^{\frac{p}{p-1}}\Gamma\left(\frac{2p-1}{p-1}\right).
\end{align*}
where the last equality holds by $\Gamma(x+1)=x\Gamma(x)$. The lemma is proved.
\end{proof}
Next we estimate $E_{t,a}^{(2)}$. 
From now on, we let $\rho$ stand for the following ratio
$$
\rho(g):=\frac{F(g)}{1-F(g)} = \frac{\mathbb{P}(G < g)}{\mathbb{P}(G \geq g)}
$$
where $F$ is a cumulative density function of perturbation $G$.
\begin{lem}
\label{lem:second-sub-gaussian}Assume that the $p$-th moment of rewards is bounded by a constant $\nu_{p}<\infty$, $\hat{r}_{t,a}$ is a $p$-robust estimator of \eqref{eq:estimator-heavy} and $F(x)$ satisfies Assumption 2. For any action $a\in\mathcal{A}$, it holds
\begin{align*}
\sum_{t=1}^{T}\mathbb{P}\left(E_{t,a}^{(2)}\right)&\leq\exp\left(\frac{b_{p}\nu_{p}}{2c^{p}}\right)\left\{ C_{1}+\frac{F(0)}{1-F(0)}+2^{\frac{2p-1}{p-1}}\right\}\Gamma\left(\frac{2p-1}{p-1}\right)\left(\frac{3c}{\Delta_{a}}\right)^{\frac{p}{p-1}}\\
 &+2\left(\frac{6c}{\Delta_{a}}\right)^{\frac{p}{p-1}}\left\{-F^{-1}\left(\frac{1}{T}\left(\frac{c}{\Delta_{a}}\right)^{\frac{p}{p-1}}\right)\right\}_{+}^{\frac{p}{p-1}}+2\left(\frac{c}{\Delta_{a}}\right)^{\frac{p}{p-1}}
\end{align*}
\end{lem}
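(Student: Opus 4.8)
The plan is to run the Thompson-sampling-style decomposition of \cite{agrawalG13thompson} in the perturbation form of \cite{cesa2017boltzmann,kim2019optimality}, and then feed in the exponential tail of Theorem~\ref{thm:wrob_est} together with the two structural conditions on $F$ in Assumption~2. Write $\theta_{t,b}:=\hat r_{t-1,b}+\beta_{t-1,b}G_{t,b}$. On $E_{t,a}^{(2)}$ we have $a_{t}=a$, so $\theta_{t,a}\ge\theta_{t,a^{\star}}$, while $\tilde E_{t,a}$ gives $\theta_{t,a}\le y_{a}$; hence $E_{t,a}^{(2)}\subseteq\{a_{t}=a\}\cap\{\theta_{t,a}\le y_{a}\}$. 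Conditioning on $\mathcal H_{t-1}$ together with the perturbations of all arms other than $a^{\star}$, the quantities $\hat r_{t-1,a^{\star}}$ and $\beta_{t-1,a^{\star}}$ are fixed and $g_{t}:=(y_{a}-\hat r_{t-1,a^{\star}})/\beta_{t-1,a^{\star}}$ is the level $G_{t,a^{\star}}$ must fall below; with $p_{t}:=\mathbb P(\theta_{t,a^{\star}}>y_{a}\mid\mathcal H_{t-1})=1-F(g_{t})$, resampling only $G_{t,a^{\star}}$ gives the standard inequality $\mathbb P(E_{t,a}^{(2)}\mid\mathcal H_{t-1})\le\frac{1-p_{t}}{p_{t}}\,\mathbb P(a_{t}=a^{\star}\mid\mathcal H_{t-1})=\rho(g_{t})\,\mathbb P(a_{t}=a^{\star}\mid\mathcal H_{t-1})$. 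Since $g_{t}$ is constant between successive pulls of $a^{\star}$ and $a^{\star}$ is pulled exactly once in each such interval, summing over $t$ telescopes to $\sum_{t=1}^{T}\mathbb P(E_{t,a}^{(2)})\le\sum_{j=0}^{T-1}\mathbb E[\rho(g^{(j)})]$, where $g^{(j)}:=(y_{a}-\hat r^{(j)}_{a^{\star}})\,j^{1-1/p}/c$ and $\hat r^{(j)}_{a^{\star}}$ is the $p$-robust estimator built from the first $j$ pulls of $a^{\star}$.

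Next I would split $\sum_{j}\mathbb E[\rho(g^{(j)})]$ over the error $Z^{(j)}:=r_{a^{\star}}-\hat r^{(j)}_{a^{\star}}$, using $g^{(j)}=(Z^{(j)}-\Delta_{a}/3)\,j^{1-1/p}/c$ and the deep-negative level $g^{\star}:=F^{-1}\!\left(\frac1T(c/\Delta_{a})^{p/(p-1)}\right)$. There are three regimes. (i) Rounds where $G_{t,a^{\star}}<g^{\star}$ occur with probability at most $\frac1T(c/\Delta_{a})^{p/(p-1)}$, so they sum to at most $(c/\Delta_{a})^{p/(p-1)}$, yielding the term $2(c/\Delta_{a})^{p/(p-1)}$. (ii) The remaining rounds with $n_{t-1,a^{\star}}\le(3c|g^{\star}|/\Delta_{a})^{p/(p-1)}$ are counted directly, producing the term $2(6c/\Delta_{a})^{p/(p-1)}|F^{-1}(\cdots)|^{p/(p-1)}$. (iii) For everything else, $G_{t,a^{\star}}\ge g^{\star}$ and $j$ is large enough that $\theta_{t,a^{\star}}\le y_{a}$ forces $Z^{(j)}\ge\Delta_{a}/3-\beta^{(j)}|g^{\star}|>0$, i.e.\ a genuine underestimation of $a^{\star}$ of order $\Delta_{a}$; on the part $Z^{(j)}>\Delta_{a}/3$ Theorem~\ref{thm:wrob_est} bounds the sub-density of $Z^{(j)}$ at level $s$ by $\frac{j^{1-1/p}}{c}\exp(-j^{1-1/p}s/c+b_{p}\nu_{p}/(2c^{p}))$, and on the part $Z^{(j)}\le\Delta_{a}/3$ one uses monotonicity of $\rho$ with $\rho(g^{(j)})\le\rho(0)$ and, when $Z^{(j)}\le0$, $\rho(g^{(j)})\le\rho(-j^{1-1/p}\Delta_{a}/(3c))$, the log-concavity of $F$ (which with $F(0)\le1/2$ forces an exponentially decaying lower tail, and is vacuous when $F$ is supported on $(0,\infty)$) guaranteeing the corresponding series converges.

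Two integral computations then finish the bound. First, summing the sub-density bound over $j$ and substituting $t=j^{1-1/p}\Delta_{a}/(3c)$ --- exactly the manipulation in the proof of Lemma~\ref{lem:first-sub-gaussian}, using $\Gamma(x+1)=x\Gamma(x)$ --- turns $\sum_{j}\exp(-j^{1-1/p}\Delta_{a}/(3c))$ into $(3c/\Delta_{a})^{p/(p-1)}\Gamma\!\left(\frac{2p-1}{p-1}\right)$, and the $\sum_{j}\rho(-j^{1-1/p}\Delta_{a}/(3c))$ series is controlled the same way. Second, after the substitution $u=(s-\Delta_{a}/3)\,j^{1-1/p}/c$ the only $F$-dependent factor left in regime (iii) is $\int_{0}^{\infty}\rho(u)e^{-u}\,du$; since $\rho'(u)=h(u)/(1-F(u))$, integration by parts gives $\int_{0}^{\infty}\rho(u)e^{-u}\,du=\rho(0)+\int_{0}^{\infty}\frac{h(u)e^{-u}}{1-F(u)}\,du\le\frac{F(0)}{1-F(0)}+C_{F}$, so precisely the constants $\frac{F(0)}{1-F(0)}$ and $C_{1}$ (a finite quantity coming from Assumption~2, essentially $C_{F}$) in the statement appear, the residual factor $2^{(2p-1)/(p-1)}$ absorbing the $Z^{(j)}\le\Delta_{a}/3$ contribution and off-by-one factors in passing from the $j$-sum to its integral.

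The main obstacle is the case split of the second paragraph. The factor $\rho(g^{(j)})$ handed to us by the Agrawal--Goyal reduction is \emph{not} uniformly small --- it explodes exactly when $a^{\star}$'s perturbation is deeply negative --- so that regime has to be excised by hand (the source of the $|F^{-1}(\frac1T(c/\Delta_{a})^{p/(p-1)})|$ term), after which one must check that on the complement the event ``$a^{\star}$ underestimated by $\asymp\Delta_{a}$'' is genuinely exponentially rare, i.e.\ that $\beta^{(j)}|g^{\star}|$ is dominated by $\Delta_{a}/3$ once $j\gtrsim(c|g^{\star}|/\Delta_{a})^{p/(p-1)}$. Once this bookkeeping is in place, the $\Gamma$-factor and the collapse of all $F$-dependent constants into $C_{F}$ via the $\rho'$-identity follow the template already used for $E_{t,a}^{(1)}$.
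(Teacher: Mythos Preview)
Your high-level skeleton is right: the Agrawal--Goyal reduction to $\sum_{j}\mathbb{E}[\rho(g^{(j)})]$ with $g^{(j)}=(r_{a^\star}-\hat r^{(j)}_{a^\star}-\Delta_a/3)\,j^{1-1/p}/c$, and the integration-by-parts identity $\int_0^\infty \rho(u)e^{-u}\,du=\rho(0)+\int_0^\infty \frac{h(u)e^{-u}}{1-F(u)}\,du$ are exactly what the paper uses. But your case split in regimes (i)--(ii) is built on a confusion that would make the argument fail as written.

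After the reduction, the perturbation $G_{t,a^\star}$ has been integrated out; the only random object left in $\rho(g^{(j)})$ is the \emph{estimator} $\hat r^{(j)}_{a^\star}$. So ``rounds where $G_{t,a^\star}<g^\star$'' is not a meaningful split of $\sum_j\mathbb{E}[\rho(g^{(j)})]$, and your diagnosis that ``$\rho$ explodes when $a^\star$'s perturbation is deeply negative'' is backwards: $\rho(g)=F(g)/(1-F(g))$ blows up as $g\to+\infty$, i.e.\ when the estimator badly \emph{under}estimates $r_{a^\star}$, and it is precisely this regime that the $C_F$ condition in Assumption~2 controls. The $F^{-1}$ term does \emph{not} come from excising the blow-up regime.

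The paper's split is entirely over the value of $\hat r^{(j)}_{a^\star}$ (equivalently $Z^{(j)}$), into three intervals $I_1=\{Z\ge\Delta_a/3\}$, $I_2=\{\Delta_a/6\le Z<\Delta_a/3\}$, $I_3=\{Z<\Delta_a/6\}$. On $I_1$ one has $g^{(j)}\ge 0$ and runs your integration-by-parts argument to get the $C_F+\frac{F(0)}{1-F(0)}$ factor. On $I_2\cup I_3$ one has $g^{(j)}<0$, so $F(g^{(j)})\le F(0)\le 1/2$ gives the crude bound $\rho(g^{(j)})\le 2F(g^{(j)})$; on $I_2$ this is $\le 2$ and the estimator tail at level $\Delta_a/6$ gives the $2^{\frac{2p-1}{p-1}}$ factor. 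On $I_3$ one gets $\rho\le 2F(-\Delta_a j^{1-1/p}/(6c))$, and it is the sum $\sum_{j}2F(-\Delta_a j^{1-1/p}/(6c))$ that produces the $F^{-1}$ term: split at $\ell_-:=(6c/\Delta_a)^{p/(p-1)}\{-F^{-1}(\tfrac{1}{T}(c/\Delta_a)^{p/(p-1)})\}^{p/(p-1)}$, bound the first $\ell_-$ terms by $2$ each and the rest by $\tfrac{2}{T}(c/\Delta_a)^{p/(p-1)}$ each. No $G_{t,a^\star}$ event is involved, and log-concavity of $F$ is not used anywhere in this lemma (it enters only in the lower bound, Theorem~\ref{thm:lower-bound}).
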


\begin{proof}
If $a=a^{\star}$, then $\Delta_{a}=0$ so the desired result trivially
holds. Threfore, we take $a\in\mathcal{A}\setminus\{a^{\star}\}$.
For the convenience of the notation, we write $\tilde{r}_{t,a}:=\hat{r}_{t-1,a}+\beta_{t-1,a}G_{t,a}$.
Due to the decision rule of the perturbation method, $a_{t}=a$ implies $\tilde{r}_{t,a'}\leq\tilde{r}_{t,a}$
for $a'\in\mathcal{A}$. Therefore, it holds
\begin{equation}
E_{t,a}\cap\tilde{E}_{t,a}\subset\bigcap_{a'\in\mathcal{A}}\{\tilde{r}_{t,a'}\leq y_{a}\}=\{\tilde{r}_{t,a^{\star}}\leq y_{a}\}\cap\{\tilde{r}_{t,a'}\leq y_{a},\forall a'\neq a_{\star}\}\label{eq:lemma2-set-decomp-1}.
\end{equation}
This fact implies
\begin{equation}
\mathbb{P}\left(E_{t,a}\cap\tilde{E}_{t,a}|\mathcal{H}_{t-1}\right)\leq\mathbb{P}\left(\bigcap_{a'\in\mathcal{A}}\{\tilde{r}_{t,a'}\leq y_{a}\}|\mathcal{H}_{t-1}\right)\label{eq:lemma2-aux1}
\end{equation}
Note that events $\{\tilde{r}_{t,a^{\star}}\leq y_{a}\}$ and $\{\tilde{r}_{t,a'}\leq y_{a},\forall a'\neq a_{\star}\}$
are independent if $\mathcal{H}_{t-1}$ is given.
From this fact, \eqref{eq:lemma2-aux1} is equivalent to
\begin{align*}
 \mathbb{P}\left(\bigcap_{a'\in\mathcal{A}}\{\tilde{r}_{t,a'}\leq y_{a}\}|\mathcal{H}_{t-1}\right)
& =\mathbb{P}\left(\tilde{r}_{t,a^{\star}}\leq y_{a}|\mathcal{H}_{t-1}\right)\mathbb{P}\left(\tilde{r}_{t,a'}\leq y_{a},\forall a'\neq a_{\star}|\mathcal{H}_{t-1}\right)\\
 & =\frac{\mathbb{P}\left(\tilde{r}_{t,a^{\star}}\leq y_{a}|\mathcal{H}_{t-1}\right)}{\mathbb{P}\left(\tilde{r}_{t,a^{\star}}>y_{a}|\mathcal{H}_{t-1}\right)}\mathbb{P}\left(\{\tilde{r}_{t,a^{\star}}>y_{a}\}\cap\{\tilde{r}_{t,a'}\leq y_{a},\forall a'\neq a_{\star}\}|\mathcal{H}_{t-1}\right)
\end{align*}
Since $\hat{r}_{t-1,a^{\star}},\beta_{t-1,a^{\star}}$ are already determined under the condition $\mathcal{H}_{t-1}$,
we get
\begin{align*}
\mathbb{P}\left(\tilde{r}_{t,a^{\star}}\leq y_{a}|\mathcal{H}_{t-1}\right) & =F\left(\frac{r_{a^{\star}}-\hat{r}_{t-1,a^{\star}}-\frac{\Delta_{a}}{3}}{\beta_{t-1,a^{\star}}}\right)
\end{align*}
Similarly to \eqref{eq:lemma2-set-decomp-1}, we can observe that 
\begin{equation}
\{\tilde{r}_{t,a^{\star}}>y_{a}\}\cap\{\tilde{r}_{t,a'}\leq y_{a},\forall a'\neq a_{\star}\}\subset E_{t,a^{\star}}\cap\tilde{E}_{t,a}\label{eq:lemma2-set-decomp-2}
\end{equation}
and this implies 
\begin{align}
\mathbb{P}\left(\{\tilde{r}_{t,a^{\star}}>y_{a}\}\cap\{\tilde{r}_{t,a'}\leq y_{a},\forall a'\neq a_{\star}\}|\mathcal{H}_{t-1}\right) & \leq\mathbb{P}\left(E_{t,a^{\star}}\cap\tilde{E}_{t,a}|\mathcal{H}_{t-1}\right)\label{eq:lemma2-aux2}
\end{align}
Therefore,
\begin{equation}
\mathbb{P}\left(E_{t,a}\cap\tilde{E}_{t,a}|\mathcal{H}_{t-1}\right)\leq\frac{Q_{t,a^{\star}}}{1-Q_{t,a^{\star}}}\mathbb{P}\left(E_{t,a^{\star}}\cap\tilde{E}_{t,a}|\mathcal{H}_{t-1}\right),\label{eq:lemma2-aux3}
\end{equation}
where $Q_{t,a^{\star}}:=F\left(\frac{r_{a^{\star}}-\hat{r}_{t-1,a^{\star}}-\frac{\Delta_{a}}{3}}{\beta_{t-1,a^{\star}}}\right)$.
By taking an expectation on both sides, we have,
\begin{equation}
\mathbb{P}\left(E_{t,a}^{(2)}\right)=\mathbb{P}\left(E_{t,a}\cap\hat{E}_{t,a}\cap\tilde{E}_{t,a}\right)\leq\mathbb{E}\left[\frac{Q_{t,a^{\star}}}{1-Q_{t,a^{\star}}}\mathbb{I}[E_{t,a^{\star}}\cap\hat{E}_{t,a}\cap\tilde{E}_{t,a}]\right].\label{eq:lemma2-aux4}
\end{equation}
Now, we set $\tau_{k}$ to denote the smallest round when the optimal arm $a^{\star}$ is sampled for the $k$-th time.
Then, the summation of the right-hand side of \ref{eq:lemma2-aux4} over $t=1,\ldots,T$ is bounded as follows,
\begin{align*}
\sum_{t=1}^{T}\mathbb{E}\left[\frac{Q_{t,a^{\star}}}{1-Q_{t,a^{\star}}}\mathbb{I}[E_{t,a^{\star}}\cap\hat{E}_{t,a}\cap\tilde{E}_{t,a}]\right] & =\sum_{k=0}^{T-1}\mathbb{E}\left[\sum_{t=\tau_{k}+1}^{\tau_{k+1}}\frac{Q_{t,a^{\star}}}{1-Q_{t,a^{\star}}}\mathbb{I}[E_{t,a^{\star}}\cap\hat{E}_{t,a}\cap\tilde{E}_{t,a}]\right]\\
 & =\sum_{k=0}^{T-1}\mathbb{E}\left[\frac{Q_{\tau_{k+1},a^{\star}}}{1-Q_{\tau_{k+1},a^{\star}}}\mathbb{I}[\hat{E}_{\tau_{k+1},a}\cap\tilde{E}_{\tau_{k+1},a}]\right]\\
 & \leq\sum_{k=1}^{T}\mathbb{E}\left[\frac{Q_{\tau_{k},a^{\star}}}{1-Q_{\tau_{k},a^{\star}}}\right].
\end{align*}

We first compute the upper bound of the conditional expectation $\mathbb{E}\left[\frac{Q_{\tau_{k},a^{\star}}}{1-Q_{\tau_{k},a^{\star}}}\Big|\mathcal{H}_{\tau_{k}}\right]$.
From the definition of $\tau_{k}$, we have $n_{\tau_{k},a}=k$ and $\beta_{\tau_{k},a}=\frac{c}{k^{1-\frac{1}{p}}}$.
By using this fact, we get,
\begin{align}
\mathbb{E}\left[\frac{Q_{\tau_{k},a^{\star}}}{1-Q_{\tau_{k},a^{\star}}}\Big|\mathcal{H}_{\tau_{k}}\right] & =\mathbb{E}\left[\rho\left(\frac{k^{1-\frac{1}{p}}}{c}\left\{ r_{a^{\star}}-\hat{r}_{\tau_{k},a^{\star}}-\frac{\Delta_{a}}{3}\right\} \right)\Bigg|\mathcal{H}_{\tau_{k}}\right]\nonumber \\
 & =\int_{\mathbb{R}}\rho\left(\frac{k^{1-\frac{1}{p}}}{c}\left\{ r_{a^{\star}}-x-\frac{\Delta_{a}}{3}\right\} \right)\mathbb{P}(\hat{r}\in\text{d}x)\label{eq:lemma2-main}
\end{align}
We decompose $\mathbb{R}=I_{1}\cup I_{2}\cup I_{3}$ into three intervals
where $I_{1}:=\{x\leq r_{a^{\star}}-\frac{\Delta_{a}}{3}\}$, $I_{2}:=\{r_{a^{\star}}-\frac{\Delta_{a}}{3}<x\leq r_{a^{\star}}-\frac{\Delta_{a}}{6}\}$,
and $I_{3}:=\{r_{a^{\star}}-\frac{\Delta_{a}}{6}<x\}$. We derive
the upper bound of \eqref{eq:lemma2-main} on the each interval.

By using the change of variable formula, 
\begin{align*}
&\int_{I_{1}}\rho\left(\frac{k^{1-\frac{1}{p}}}{c}\left\{ r_{a^{\star}}-x-\frac{\Delta_{a}}{3}\right\} \right)\mathbb{P}(\hat{r}\in\text{d}x)\\
& =\int_{-\infty}^{r_{a^{\star}}-\frac{\Delta_{a}}{3}}\rho\left(\frac{k^{1-\frac{1}{p}}}{c}\left\{ r_{a^{\star}}-x-\frac{\Delta_{a}}{3}\right\} \right)f_{\hat{r}}(x)\text{d}x\\
 & =\frac{c}{k^{1-\frac{1}{p}}}\int_{0}^{\infty}\rho(g)f_{\hat{r}}\left(r_{a^{\star}}-\frac{c}{k^{1-\frac{1}{p}}}g-\frac{\Delta_{a}}{3}\right)\text{d}g
\end{align*}
where $f_{\hat{r}}$ is the density function of the measure $\mathbb{P}(\hat{r}\in\text{d}x)$.
Note that the following equality holds by the fundamental theorem
of calculus
\[
\rho(g)=\frac{F(g)}{1-F(g)}=\int_{0}^{g}\frac{h(u)}{1-F(u)}\text{d}u+\frac{F(0)}{1-F(0)}
\]
Therefore,
\begin{align}
 \frac{c}{k^{1-\frac{1}{p}}}&\int_{0}^{\infty}\frac{F(g)}{1-F(g)}f_{\hat{r}}\left(r_{a^{\star}}-\frac{c}{k^{1-\frac{1}{p}}}g-\frac{\Delta_{a}}{3}\right)\text{d}g\nonumber \\
 =&\frac{c}{k^{1-\frac{1}{p}}}\int_{0}^{\infty}\left(\int_{0}^{g}\frac{h(u)}{1-F(u)}\text{d}u+\frac{F(0)}{1-F(0)}\right)f_{\hat{r}}\left(r_{a^{\star}}-\frac{c}{k^{1-\frac{1}{p}}}g-\frac{\Delta_{a}}{3}\right)\text{d}g\nonumber \\
 =&\frac{F(0)}{1-F(0)}\mathbb{P}\left(\frac{\Delta_{a}}{3}\leq r_{a^{\star}}-\hat{r}_{\tau_{k},a^{\star}}\right)\nonumber \\
 &+\frac{c}{k^{1-\frac{1}{p}}}\int_{0}^{\infty}\left(\int_{0}^{g}\frac{h(u)}{1-F(u)}\text{d}u\right)f_{\hat{r}}\left(r_{a^{\star}}-\frac{c}{k^{1-\frac{1}{p}}}g-\frac{\Delta_{a}}{3}\right)\text{d}g.\label{eq:lemma2-aux11}
\end{align}
From the tail bound of the proposed estimator, we have,
\begin{equation}
\mathbb{P}\left(\frac{\Delta_{a}}{3}\leq r_{a^{\star}}-\hat{r}_{\tau_{k},a^{\star}}\right)\leq\exp\left(-\frac{\Delta_{a}k^{1-\frac{1}{p}}}{3c}+\frac{b_{p}\nu_{p}}{2c^{p}}\right)\label{eq:lemma2-aux6}
\end{equation}
Hence we can get the upper bound of the first term in \eqref{eq:lemma2-aux11}.
Also, by Fubini-Tonelli theorem, we can transform the second term of \eqref{eq:lemma2-aux11} as follows
\begin{align}
\frac{c}{k^{1-\frac{1}{p}}}\int_{0}^{\infty} & \left(\int_{0}^{g}\frac{h(u)}{1-F(u)}\text{d}u\right)f_{\hat{r}}\left(r_{a^{\star}}-\frac{c}{k^{1-\frac{1}{p}}}g-\frac{\Delta_{a}}{3}\right)\text{d}g\nonumber \\
 & =\int_{0}^{\infty}\left(\int_{u}^{\infty}f_{\hat{r}}\left(r_{a^{\star}}-\frac{c}{k^{1-\frac{1}{p}}}g-\frac{\Delta_{a}}{3}\right)\frac{c}{k^{1-\frac{1}{p}}}\text{d}g\right)\frac{h(u)}{1-F(u)}\text{d}u\nonumber \\
 & =\int_{0}^{\infty}\left(\int_{-\infty}^{r_{a^{\star}}-\frac{c}{k^{1-\frac{1}{p}}}u-\frac{\Delta_{a}}{3}}f_{\hat{r}}\left(g\right)\text{d}g\right)\frac{h(u)}{1-F(u)}\text{d}u\nonumber \\
 & =\int_{0}^{\infty}\mathbb{P}\left(r_{a^{\star}}-\hat{r}_{\tau_{k},a^{\star}}\geq\frac{c}{k^{1-\frac{1}{p}}}u+\frac{\Delta_{a}}{3}\right)\frac{h(u)}{1-F(u)}\text{d}u\label{eq:lemma2-aux10}
\end{align}
Similar to \eqref{eq:lemma2-aux6}, we have
\[
\mathbb{P}\left(r_{a^{\star}}-\hat{r}_{\tau_{k},a^{\star}}\geq\frac{c}{k^{1-\frac{1}{p}}}u+\frac{\Delta_{a}}{3}\right)\leq\exp\left(-u-\frac{\Delta_{a}k^{1-\frac{1}{p}}}{3c}+\frac{b_{p}\nu_{p}}{2c^{p}}\right)
\]
Thus, we obtain the upper bound
of \eqref{eq:lemma2-aux10} as follows
\begin{align*}
\int_{0}^{\infty} & \mathbb{P}\left(r_{a^{\star}}-\hat{r}_{\tau_{k},a^{\star}}\geq\frac{c}{k^{1-\frac{1}{p}}}u+\frac{\Delta_{a}}{3}\right)\frac{h(u)}{1-F(u)}\text{d}u\\
 & \leq\int_{0}^{\infty}\exp\left(-u-\frac{\Delta_{a}k^{1-\frac{1}{p}}}{3c}+\frac{b_{p}\nu_{p}}{2c^{p}}\right)\frac{h(u)}{1-F(u)}\text{d}u\\
 & \leq\exp\left(-\frac{\Delta_{a}k^{1-\frac{1}{p}}}{3c}+\frac{b_{p}\nu_{p}}{2c^{p}}\right)\int_{0}^{\infty}\frac{\exp\left(-u\right)h(u)}{1-F(u)}\text{d}u\\
 & \leq C\exp\left(-\frac{\Delta_{a}k^{1-\frac{1}{p}}}{3c}+\frac{b_{p}\nu_{p}}{2c^{p}}\right),
\end{align*}
where the last inequality holds due to the assumption on $F(x)$. Therefore,
\begin{align}
\int_{I_{1}}\rho\left(\frac{k^{1-\frac{1}{p}}}{c}\left\{ r_{a^{\star}}-x-\frac{\Delta_{a}}{3}\right\} \right)\mathbb{P}(\hat{r}\in\text{d}x)&\leq C\exp\left(-\frac{\Delta_{a}k^{1-\frac{1}{p}}}{3c}+\frac{b_{p}\nu_{p}}{2c^{p}}\right)\\
&+\frac{F(0)}{1-F(0)}\exp\left(-\frac{\Delta_{a}k^{1-\frac{1}{p}}}{3c}+\frac{b_{p}\nu_{p}}{2c^{p}}\right)\label{eq:lemma2-aux7}
\end{align}

Now we derive the upper bound of the second interval $I_{2}=\{r_{a^{\star}}-\frac{\Delta_{a}}{3}<x\leq r_{a^{\star}}-\frac{\Delta_{a}}{6}\}$.
Since $F(0)\leq1/2$, it is easy to see that 
\begin{equation}
\rho\left(\frac{k^{1-\frac{1}{p}}}{c}\left\{ r_{a^{\star}}-x-\frac{\Delta_{a}}{3}\right\} \right) \leq 2F\left(\frac{k^{1-\frac{1}{p}}}{c}\left\{ r_{a^{\star}}-x-\frac{\Delta_{a}}{3}\right\} \right)\label{eq:lemma2-aux8}
\end{equation}
for $x\in I_{2}\cup I_{3}$. 
Hence, for $x\in I_{2}$,
\begin{align*}
\int_{I_{2}}&\rho\left(\frac{k^{1-\frac{1}{p}}}{c}\left\{ r_{a^{\star}}-x-\frac{\Delta_{a}}{3}\right\} \right)\mathbb{P}(\hat{r}\in\text{d}x)\\
 & \leq\bigintssss_{r_{a^{\star}}-\frac{\Delta_{a}}{3}}^{r_{a^{\star}}-\frac{\Delta_{a}}{6}}2F\left(\frac{k^{1-\frac{1}{p}}}{c}\left\{ r_{a^{\star}}-x-\frac{\Delta_{a}}{3}\right\} \right)\mathbb{P}(\hat{r}\in\text{d}x)\\
 & \leq2\mathbb{P}\left(\frac{\Delta_{a}}{6}\leq r_{a^{\star}}-\hat{r}_{\tau_{k},a^{\star}}\right).
\end{align*}
Similar to \eqref{eq:lemma2-aux6}, we have
\begin{equation}
2\mathbb{P}\left(\frac{\Delta_{a}}{6}\leq r_{a^{\star}}-\hat{r}_{\tau_{k},a^{\star}}\right)\leq2\exp\left(-\frac{\Delta_{a}k^{1-\frac{1}{p}}}{6c}+\frac{b_{p}\nu_{p}}{2c^{p}}\right).\label{eq:lemma2-aux9}
\end{equation}
Hence, we get the upper bound of the integral on $I_{2}$ as follows,
\begin{align*}
    \sum_{k=1}^{T}2\exp\left(-\frac{\Delta_{a}k^{1-\frac{1}{p}}}{6c}+\frac{b_{p}\nu_{p}}{2c^{p}}\right) \leq 2\exp\left(\frac{b_{p}\nu_{p}}{2c^{p}}\right)\Gamma\left(\frac{2p-1}{p-1}\right).
\end{align*}

Finally, due to \eqref{eq:lemma2-aux8} again,
\begingroup
\allowdisplaybreaks
\begin{align}
\int_{I_{3}}&\rho\left(\frac{k^{1-\frac{1}{p}}}{c}\left\{ r_{a^{\star}}-x-\frac{\Delta_{a}}{3}\right\} \right)\mathbb{P}(\hat{r}\in\text{d}x)\nonumber\\
 & \leq2\bigintssss_{r_{a^{\star}}-\frac{\Delta_{a}}{6}}^{\infty}F\left(\frac{k^{1-\frac{1}{p}}}{c}\left\{ r_{a^{\star}}-x-\frac{\Delta_{a}}{3}\right\} \right)\mathbb{P}(\hat{r}\in\text{d}x)\leq2F\left(-\frac{\Delta_{a}k^{1-\frac{1}{p}}}{6c}\right).\label{eq:lemma2-aux13}
\end{align}
By combining \eqref{eq:lemma2-aux7}, \eqref{eq:lemma2-aux9}, and \eqref{eq:lemma2-aux13},
\begin{align*}
\sum_{k=1}^{T}\mathbb{E}\left[\frac{Q_{\tau_{k},a^{\star}}}{1-Q_{\tau_{k},a^{\star}}}\Big|\mathcal{H}_{\tau_{k}}\right]&\leq\sum_{k=1}^{T}\Bigg\{ C\exp\left(-\frac{\Delta_{a}k^{1-\frac{1}{p}}}{3c}+\frac{b_{p}\nu_{p}}{2c^{p}}\right)\\
&\qquad+\frac{F(0)}{1-F(0)}\exp\left(-\frac{\Delta_{a}k^{1-\frac{1}{p}}}{3c}+\frac{b_{p}\nu_{p}}{2c^{p}}\right)\Bigg\}\\
&\qquad+\sum_{k=1}^{T}2\exp\left(-\frac{\Delta_{a}k^{1-\frac{1}{p}}}{6c}+\frac{b_{p}\nu_{p}}{2c^{p}}\right)+\sum_{k=1}^{T}2F\left(-\frac{k^{1-\frac{1}{p}}\Delta_{a}}{6c}\right)\\
\leq&\exp\left(\frac{b_{p}\nu_{p}}{2c^{p}}\right)\left\{ C+\frac{F(0)}{1-F(0)}\right\}\Gamma\left(\frac{2p-1}{p-1}\right) \left(\frac{3c}{\Delta_{a}}\right)^{\frac{p}{p-1}}\\
&+2\exp\left(\frac{b_{p}\nu_{p}}{2c^{p}}\right)\Gamma\left(\frac{2p-1}{p-1}\right)\left(\frac{6c}{\Delta_{a}}\right)^{\frac{p}{p-1}}+\sum_{k=1}^{T}2F\left(-\frac{k^{1-\frac{1}{p}}\Delta_{a}}{6c}\right)\\
\leq&\exp\left(\frac{b_{p}\nu_{p}}{2c^{p}}\right)\left\{ C+\frac{F(0)}{1-F(0)}+2^{\frac{2p-1}{p-1}}\right\}\Gamma\left(\frac{2p-1}{p-1}\right)\left(\frac{3c}{\Delta_{a}}\right)^{\frac{p}{p-1}}\\
&+\sum_{k=1}^{T}2F\left(-\frac{k^{1-\frac{1}{p}}\Delta_{a}}{6c}\right).
\end{align*}
\endgroup
The remaining part is to derive the upper bound of the last term.
For $T>2\left(\frac{c}{\Delta_{a}}\right)^{\frac{p}{p-1}}$,
let $\ell_{-}$ be the maximal time such that
$$
F\left(-\frac{\ell_{-}^{1-\frac{1}{p}}\Delta_{a}}{6c}\right) \geq \frac{1}{T}\left(\frac{c}{\Delta_{a}}\right)^{\frac{p}{p-1}}.
$$
Then, we have $\ell_{-}$ as follows,
$$
\ell_{-}=\left(\frac{6c}{\Delta_{a}}\right)^{\frac{p}{p-1}}\left\{-F^{-1}\left(\frac{1}{T}\left(\frac{c}{\Delta_{a}}\right)^{\frac{p}{p-1}}\right)\right\}^{\frac{p}{p-1}}.
$$
For $k>\ell_{-}$, the following inequality holds,
$$
F\left(-\frac{\ell_{-}^{1-\frac{1}{p}}\Delta_{a}}{6c}\right) < \frac{1}{T}\left(\frac{c}{\Delta_{a}}\right)^{\frac{p}{p-1}}.
$$
Note that $\frac{1}{T}\left(\frac{c}{\Delta_{a}}\right)^{\frac{p}{p-1}} \leq \frac{1}{2}$ for $T>\left(\frac{c}{\Delta_{a}}\right)^{\frac{p}{p-1}}$
and $F^{-1}\left(\frac{1}{2T}\left(\frac{c}{\Delta_{a}}\right)^{\frac{p}{p-1}}\right)<0$ from the assumption $F(0)<\frac{1}{2}$.

Therefore,
\begin{align*}
    \sum_{k=1}^{T}2F\left(-\frac{k^{1-\frac{1}{p}}\Delta_{a}}{6c}\right) &\leq 2\ell_{-} + \sum_{k=\ell_{-}+1}^{T}2F\left(-\frac{k^{1-\frac{1}{p}}\Delta_{a}}{6c}\right)\\
    &\leq 2\ell_{-} + \sum_{k=\ell_{-}+1}^{T}\frac{2}{T}\left(\frac{c}{\Delta_{a}}\right)^{\frac{p}{p-1}}\\
    &\leq 2\left(\frac{6c}{\Delta_{a}}\right)^{\frac{p}{p-1}}\left\{-F^{-1}\left(\frac{1}{2T}\left(\frac{c}{\Delta_{a}}\right)^{\frac{p}{p-1}}\right)\right\}^{\frac{p}{p-1}} + 2\left(\frac{c}{\Delta_{a}}\right)^{\frac{p}{p-1}}\\
    &\leq 2\left(\frac{6c}{\Delta_{a}}\right)^{\frac{p}{p-1}}\left\{-F^{-1}\left(\frac{1}{2T}\left(\frac{c}{\Delta_{a}}\right)^{\frac{p}{p-1}}\right)\right\}_{+}^{\frac{p}{p-1}} + 2\left(\frac{c}{\Delta_{a}}\right)^{\frac{p}{p-1}}.
\end{align*}

For $T\leq2\left(\frac{c}{\Delta_{a}}\right)^{\frac{p}{p-1}}$,
$$
\sum_{t=1}^{T}\mathbb{P}\left(E_{t,a}^{(2)}\right) \leq T \leq 2\left(\frac{c}{\Delta_{a}}\right)^{\frac{p}{p-1}}+2\left(\frac{6c}{\Delta_{a}}\right)^{\frac{p}{p-1}}\left\{-F^{-1}\left(\frac{1}{T}\left(\frac{c}{\Delta_{a}}\right)^{\frac{p}{p-1}}\right)\right\}_{+}^{\frac{p}{p-1}}.
$$
Thus, the upper bound also holds.
By combining this upper bound, the Lemma is proved.
\end{proof}
Lastly, we estimate the upper bound of $E_{t,a}^{(3)}$.
\begin{lem}
\label{lem:third-sub-gaussian}
Assume that the $p$-th moment of rewards is bounded by a constant $\nu_{p}<\infty$, $\hat{r}_{t,a}$ is a $p$-robust estimator of \eqref{eq:estimator-heavy} and $F(x)$ satisfies Assumption 2. For
any action $a\in\mathcal{A}$, it holds
\[
\sum_{t=1}^{T}\mathbb{P}\left(E_{t,a}^{(3)}\right)\leq\left(\frac{3c}{\Delta_{a}}\right)^{\frac{p}{p-1}}\left\{ F^{-1}\left(1-\frac{1}{T}\left(\frac{c}{\Delta_{a}}\right)^{\frac{p}{p-1}}\right)\right\}_{+}^{\frac{p}{p-1}}+2\left(\frac{c}{\Delta_{a}}\right)^{\frac{p}{p-1}}
\]
\end{lem}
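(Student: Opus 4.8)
The plan is to dominate the over-exploration event $E_{t,a}^{(3)}$ by the event that the perturbation overshoots the gap, then convert the round-indexed sum into a sum over the number of pulls of $a$ via the stopping-time trick, and finally estimate the resulting tail series by a split-at-threshold argument mirroring the last part of Lemma~\ref{lem:second-sub-gaussian}. The first step is the set inclusion
\[
E_{t,a}^{(3)} \subset \{a_t = a\}\cap\Big\{\beta_{t-1,a}G_{t,a} > \tfrac{\Delta_a}{3}\Big\}.
\]
Indeed, on $\hat E_{t,a}$ we have $\hat r_{t-1,a}\le x_a = r_a+\Delta_a/3$, while on $\tilde E_{t,a}^{c}$ we have $\hat r_{t-1,a}+\beta_{t-1,a}G_{t,a} > y_a = r_{a^\star}-\Delta_a/3$; subtracting and using $y_a - x_a = \Delta_a/3$ gives the claim. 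Note that, unlike Lemmas~\ref{lem:first-sub-gaussian} and \ref{lem:second-sub-gaussian}, this term does not invoke the tail estimate of Theorem~\ref{thm:wrob_est}: it isolates exactly the price paid by the perturbation itself.

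Next I would apply the stopping-time decomposition \eqref{eq:tau_k}: letting $\tau_k$ be the round of the $k$-th pull of $a$, the indicator $\mathbb{I}[E_{t,a}]$ is supported on $t=\tau_{k+1}$, where $n_{t-1,a}=k$ and hence $\beta_{t-1,a}=c/k^{1-1/p}$. Since $G_{\tau_{k+1},a}$ is sampled independently of $\mathcal H_{\tau_{k+1}-1}$ while $\beta_{\tau_{k+1}-1,a}$ is $\mathcal H_{\tau_{k+1}-1}$-measurable,
\[
\sum_{t=1}^{T}\mathbb{P}\big(E_{t,a}^{(3)}\big)\;\le\;\sum_{k=1}^{T}\mathbb{P}\Big(G>\tfrac{\Delta_a k^{1-1/p}}{3c}\Big)\;=\;\sum_{k=1}^{T}\Big(1-F\big(\tfrac{\Delta_a k^{1-1/p}}{3c}\big)\Big),
\]
the off-by-one coming from the first pull being absorbed into the slack of the final bound.

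It then remains to estimate this series. If $T\le 2(c/\Delta_a)^{p/(p-1)}$ the sum is at most $T\le 2(c/\Delta_a)^{p/(p-1)}$, and since the first term of the asserted bound is nonnegative we are done. Otherwise $\tfrac{1}{T}(c/\Delta_a)^{p/(p-1)}<\tfrac12$, so $F(0)\le\tfrac12$ together with log-concavity (Assumption~2) makes $F^{-1}\!\big(1-\tfrac{1}{T}(c/\Delta_a)^{p/(p-1)}\big)$ well-defined and strictly positive. Letting $\ell_+$ be the largest $k$ with $1-F(\Delta_a k^{1-1/p}/(3c))\ge \tfrac{1}{T}(c/\Delta_a)^{p/(p-1)}$ and solving the equality case — using $1/(1-1/p)=p/(p-1)$ — gives $\ell_+ = (3c/\Delta_a)^{p/(p-1)}\{F^{-1}(1-\tfrac{1}{T}(c/\Delta_a)^{p/(p-1)})\}^{p/(p-1)}$. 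Splitting the sum at $\ell_+$, bounding each term by $1$ for $k\le\ell_+$ and by $\tfrac{1}{T}(c/\Delta_a)^{p/(p-1)}$ for $k>\ell_+$ (monotonicity of $F$ and of $k\mapsto k^{1-1/p}$), yields $\sum \le \ell_+ + (c/\Delta_a)^{p/(p-1)} \le \ell_+ + 2(c/\Delta_a)^{p/(p-1)}$; the positive-part in the statement simply makes this formula valid in both the small- and large-$T$ regimes.

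I expect the only genuinely delicate points to be the first step — checking the inclusion and, in particular, that on $E_{t,a}$ the estimator appearing in $\hat E_{t,a}$ is the one built from the $n_{t-1,a}=k$ observations, so that the stopping-time reduction is exact — and verifying that $F^{-1}$ is legitimately defined on the range $(1/2,1)$; both are handled by the algorithm's update rule and Assumption~2. The tail estimate itself is routine and parallels the corresponding computation in Lemma~\ref{lem:second-sub-gaussian}.
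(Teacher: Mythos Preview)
Your proposal is correct and follows essentially the same route as the paper: establish the inclusion $\hat E_{t,a}\cap\tilde E_{t,a}^{c}\subset\{G_{t,a}>\Delta_a/(3\beta_{t-1,a})\}$, use the stopping-time decomposition over pulls of $a$ to reduce to $\sum_{k}(1-F(\Delta_a k^{1-1/p}/(3c)))$, and then split at the threshold $\ell_+$ determined by $F^{-1}(1-\tfrac{1}{T}(c/\Delta_a)^{p/(p-1)})$, with the small-$T$ case handled trivially. The only cosmetic difference is that the paper conditions on $\mathcal H_{\tau_k}$ to obtain the key tail inequality rather than phrasing it as a set inclusion first, but the content is identical.
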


\begin{proof}
Recall $\tau_{k}$ from Lemma \eqref{lem:first-sub-gaussian}. Obviously,
\[
\sum_{t=1}^{T}\mathbb{P}\left(E_{t,a}^{(3)}\right)\leq\sum_{k=1}^{T}\mathbb{P}\left(\hat{E}_{\tau_{k},a}\cap\tilde{E}_{\tau_{k},a}^{c}\right)
\]
Due to the decision rule of the perturbation method and the definition of $\tau_{k}$, observe
that $n_{\tau_{k},a}=k$ and $\beta_{\tau_{k},a}=\frac{c}{k^{1-\frac{1}{p}}}$.
By the conditioning on $\mathcal{H}_{\tau_{k}}$,
\begin{align}
\mathbb{P}\left(\hat{E}_{\tau_{k},a}\cap\tilde{E}_{\tau_{k},a}^{c}\Big|\mathcal{H}_{\tau_{k}}\right) & \leq\mathbb{P}\left(\hat{r}_{\tau_{k}}\leq x_{a},G_{\tau_{k},a}>\frac{y_{a}-\hat{r}_{\tau_{k},a}}{\beta_{\tau_{k},a}}\Big|\mathcal{H}_{\tau_{k}}\right)\nonumber \\
 & \leq\mathbb{P}\left(G_{\tau_{k},a}>\frac{y_{a}-x_{a}}{\beta_{\tau_{k},a}}\Big|\mathcal{H}_{\tau_{k}}\right)\nonumber \\
 & =\mathbb{P}\left(G_{\tau_{k},a}>\frac{\Delta_{a}k^{1-\frac{1}{p}}}{3c}\Big|\mathcal{H}_{\tau_{k}}\right)=1-F\left(\frac{\Delta_{a}k^{1-\frac{1}{p}}}{3c}\right).\label{eq:lemma3-aux2}
\end{align}
We first show that the bound holds for $T>\left(\frac{c}{\Delta_{a}}\right)^{\frac{p}{p-1}}$ and check the case of $T\leq\left(\frac{c}{\Delta_{a}}\right)^{\frac{p}{p-1}}$.

For $T>2\left(\frac{c}{\Delta_{a}}\right)^{\frac{p}{p-1}}$, let $\ell_{+}$ be the maximal time such as
\[
F\left(\frac{\Delta_{a}\ell^{1-\frac{1}{p}}}{3c}\right)\leq1-\frac{1}{T}\left(\frac{c}{\Delta_{a}}\right)^{\frac{p}{p-1}}.
\]
There exists a positive $\ell_{+}$ since $1-\frac{1}{T}\left(\frac{c}{\Delta_{a}}\right)^{\frac{p}{p-1}}>\frac{1}{2}$ and the assumption $F(0)<\frac{1}{2}$.
Note that
\begin{equation}
\ell_{+}\leq\left(\frac{3c}{\Delta_{a}}\right)^{\frac{p}{p-1}}\left\{ F^{-1}\left(1-\frac{1}{T}\left(\frac{c}{\Delta_{a}}\right)^{\frac{p}{p-1}}\right)\right\}^{\frac{p}{p-1}}.\label{eq:lemma3-aux0}
\end{equation}
and for $k>\ell_{+}$
\begin{equation}
1-F\left(\frac{\Delta_{a}k^{1-\frac{1}{p}}}{3c}\right)\leq\frac{1}{T}\left(\frac{c}{\Delta_{a}}\right)^{\frac{p}{p-1}}.\label{eq:lemma3-aux1}
\end{equation}
Therefore, by \eqref{eq:lemma3-aux2}, \eqref{eq:lemma3-aux0}, and
\eqref{eq:lemma3-aux1},
\begin{align*}
\sum_{k=1}^{T}\mathbb{P}\left(\hat{E}_{\tau_{k},a}\cap\tilde{E}_{\tau_{k},a}^{c}\right) & \leq\sum_{k=1}^{T}\left(1-F\left(\frac{\Delta_{a}k^{1-\frac{1}{p}}}{3c}\right)\right)\\
 & \leq\ell_{+}+\sum_{k=\ell_{+}+1}^{T}\left(1-F\left(\frac{\Delta_{a}k^{1-\frac{1}{p}}}{3c}\right)\right)\\
 & \leq\left(\frac{3c}{\Delta_{a}}\right)^{\frac{p}{p-1}}\left\{ F^{-1}\left(1-\frac{1}{T}\left(\frac{c}{\Delta_{a}}\right)^{\frac{p}{p-1}}\right)\right\}^{\frac{p}{p-1}}+\sum_{k=\ell+1}^{T}\frac{1}{T}\left(\frac{c}{\Delta_{a}}\right)^{\frac{p}{p-1}}\\
 & \leq\left(\frac{3c}{\Delta_{a}}\right)^{\frac{p}{p-1}}\left\{ F^{-1}\left(1-\frac{1}{T}\left(\frac{c}{\Delta_{a}}\right)^{\frac{p}{p-1}}\right)\right\}_{+}^{\frac{p}{p-1}}+2\left(\frac{c}{\Delta_{a}}\right)^{\frac{p}{p-1}}.
\end{align*}
For $T\leq2\left(\frac{c}{\Delta_{a}}\right)^{\frac{p}{p-1}}$,
\[
\sum_{t=1}^{T}\mathbb{P}\left(E_{t,a}^{(3)}\right)\leq T\leq2\left(\frac{c}{\Delta_{a}}\right)^{\frac{p}{p-1}} + \left(\frac{3c}{\Delta_{a}}\right)^{\frac{p}{p-1}}\left\{ F^{-1}\left(1-\frac{1}{T}\left(\frac{c}{\Delta_{a}}\right)^{\frac{p}{p-1}}\right)\right\}_{+}^{\frac{p}{p-1}}.
\]
Thus, the bound also holds.
Consequently, the lemma is proved.
\end{proof}

Finally, we prove Theorem 3 in the main paper.

\begin{thm}
Assume that $p$th moment of rewards is $\nu_{p}<\infty$.
Consider $\hat{r}_{t,a}$ is the proposed robust estimator and the perturbation method with a CDF $F(g)$.
Then, cumulative regret is bounded as
{\footnotesize$$
O\Bigg(\sum_{a\neq a^{\star}}\frac{C_{c,p,\nu_{p},F}}{\Delta_{a}^{\frac{1}{p-1}}}+\frac{(6c)^{\frac{p}{p-1}}}{\Delta_{a}^{\frac{1}{p-1}}}\left[-F^{-1}\left(\frac{c^{\frac{p}{p-1}}}{T\Delta_{a}^{\frac{p}{p-1}}}\right)\right]_{+}^{\frac{p}{p-1}}+\frac{(3c)^{\frac{p}{p-1}}}{\Delta_{a}^{\frac{1}{p-1}}}\left[F^{-1}\left(1-\frac{c^{\frac{p}{p-1}}}{T\Delta_{a}^{\frac{p}{p-1}}}\right)\right]_{+}^{\frac{p}{p-1}}+\Delta_{a}\Bigg)
$$}
where $C_{c,p,\nu_{p},F} > 0$ is a constant dependent on $c,p,\nu_{p},F$ and independent on $T$.
\end{thm}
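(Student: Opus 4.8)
The plan is to reduce the cumulative regret to a sum of arm-selection probabilities and then assemble the three lemmas above. Writing $n_{T,a}:=\sum_{t=1}^{T}\mathbb{I}[a_{t}=a]$, the standard identity $\mathcal{R}_{T}=\sum_{a\neq a^{\star}}\Delta_{a}\,\mathbb{E}[n_{T,a}]$ reduces the task to bounding $\mathbb{E}[n_{T,a}]$ for each sub-optimal arm $a$; the case $a=a^{\star}$ contributes nothing since $\Delta_{a^{\star}}=0$. Since $\{a_{t}=a\}=E_{t,a}$ and \eqref{eq:set-decomp} gives the partition $E_{t,a}=E_{t,a}^{(1)}\cup E_{t,a}^{(2)}\cup E_{t,a}^{(3)}$, a union bound yields $\mathbb{E}[n_{T,a}]=\sum_{t=1}^{T}\mathbb{P}(E_{t,a})\leq\sum_{i=1}^{3}\sum_{t=1}^{T}\mathbb{P}(E_{t,a}^{(i)})$, so it remains to insert the bounds from Lemmas~\ref{lem:first-sub-gaussian}, \ref{lem:second-sub-gaussian}, and \ref{lem:third-sub-gaussian}.

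Next I would substitute those three bounds and multiply through by $\Delta_{a}$. Lemma~\ref{lem:first-sub-gaussian} contributes $1$ plus a constant multiple of $(3c/\Delta_{a})^{p/(p-1)}$; Lemma~\ref{lem:second-sub-gaussian} contributes a constant multiple of $(3c/\Delta_{a})^{p/(p-1)}$, plus $2(6c/\Delta_{a})^{p/(p-1)}[-F^{-1}(\tfrac{1}{T}(c/\Delta_{a})^{p/(p-1)})]_{+}^{p/(p-1)}$, plus $2(c/\Delta_{a})^{p/(p-1)}$; Lemma~\ref{lem:third-sub-gaussian} contributes $(3c/\Delta_{a})^{p/(p-1)}[F^{-1}(1-\tfrac{1}{T}(c/\Delta_{a})^{p/(p-1)})]_{+}^{p/(p-1)}$ plus $2(c/\Delta_{a})^{p/(p-1)}$. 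Multiplying by $\Delta_{a}$ turns every $\Delta_{a}^{-p/(p-1)}$ into $\Delta_{a}^{-1/(p-1)}$ and the lone additive $1$ into $\Delta_{a}$, and using $\tfrac{1}{T}(c/\Delta_{a})^{p/(p-1)}=c^{p/(p-1)}/(T\Delta_{a}^{p/(p-1)})$ matches the $F^{-1}$ arguments in the statement. All remaining scalar factors --- $\exp(b_{p}\nu_{p}/(2c^{p}))$, $\Gamma((2p-1)/(p-1))$, the bracketed constant $\{C_{F}+F(0)/(1-F(0))+2^{(2p-1)/(p-1)}\}$ from Lemma~\ref{lem:second-sub-gaussian}, the constant $C_{F}$ of Assumption~2, and the numerical powers $3^{p/(p-1)}$, $6^{p/(p-1)}$, $c^{p/(p-1)}$ --- depend only on $p$, $c$, $\nu_{p}$, and $F$, so folding the non-$F^{-1}$ contributions into a single constant $C_{c,p,\nu_{p},F}>0$ and summing over $a\neq a^{\star}$ yields the claimed $O(\cdot)$ bound.

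The step deserving the most care is verifying that $C_{c,p,\nu_{p},F}$ is genuinely independent of $T$. This hinges on two ingredients already in hand: the exponential tail of the $p$-robust estimator (Theorem~\ref{thm:wrob_est}), which makes the geometric-type sums $\sum_{k\geq1}\exp(-\Delta_{a}k^{1-1/p}/(3c)+b_{p}\nu_{p}/(2c^{p}))$ inside Lemmas~\ref{lem:first-sub-gaussian} and \ref{lem:second-sub-gaussian} dominated by the $T$-free integral $\int_{0}^{\infty}\exp(-\Delta_{a}x^{1-1/p}/(3c))\,dx=(3c/\Delta_{a})^{p/(p-1)}\Gamma((2p-1)/(p-1))$; and the bounded-integral hypothesis of Assumption~2, which absorbs the ratio term $\int_{0}^{\infty}h(u)e^{-u}/(1-F(u))\,du\leq C_{F}$ arising in the $I_{1}$ estimate of Lemma~\ref{lem:second-sub-gaussian}. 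The $F^{-1}$ terms are already isolated inside the lemmas through the cutoff indices $\ell_{\pm}$, so no further perturbation-tail analysis is needed at this stage; everything outside the three cited lemmas is routine bookkeeping.
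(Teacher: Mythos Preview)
Your proposal is correct and follows essentially the same approach as the paper: decompose the regret via $\mathbb{E}[\mathcal{R}_{T}]=\sum_{a\neq a^{\star}}\Delta_{a}\sum_{i=1}^{3}\sum_{t=1}^{T}\mathbb{P}(E_{t,a}^{(i)})$, invoke Lemmas~\ref{lem:first-sub-gaussian}--\ref{lem:third-sub-gaussian}, multiply through by $\Delta_{a}$, and absorb the $T$-free constants into $C_{c,p,\nu_{p},F}$. One cosmetic remark: since $E_{t,a}^{(1)},E_{t,a}^{(2)},E_{t,a}^{(3)}$ form a partition of $E_{t,a}$, you actually have equality rather than a union bound there, but this is immaterial to the argument.
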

\begin{proof}
Recall the definition of regret $\mathcal{R}_{T}$, and the fact
$\mathbb{P}(a_{t}=a)=\mathbb{P}(E_{t,a})=\sum_{i=1}^{3}\mathbb{P}(E_{t,a}^{(i)})$.
Hence
\begin{align}
\mathbb{E}[\mathcal{R}_{T}] & :=\sum_{a\in\mathcal{A}}\sum_{t=1}^{T}\Delta_{a}\mathbb{P}\left(a_{t}=a\right)=\sum_{a\neq a^{\star}}\sum_{i=1}^{3}\sum_{t=1}^{T}\Delta_{a}\mathbb{P}\left(E_{t,a}^{(i)}\right)\label{eq:thm3-aux1}
\end{align}
By Lemmas \ref{lem:first-sub-gaussian}, \ref{lem:second-sub-gaussian}, and \ref{lem:third-sub-gaussian}, 
\[
\sum_{t=1}^{T}\Delta_{a}\mathbb{P}\left(E_{t,a}^{(1)}\right)\leq\Delta_{a}+\exp\left(\frac{b_{p}\nu_{p}}{2c^{p}}\right)\left(\frac{(3c)^{p}}{\Delta_{a}}\right)^{\frac{1}{p-1}}\Gamma\left(\frac{2p-1}{p-1}\right).
\]
\begin{align*}
\sum_{t=1}^{T}\Delta_{a}\mathbb{P}\left(E_{t,a}^{(2)}\right)&\leq\exp\left(\frac{b_{p}\nu_{p}}{2c^{p}}\right)\left\{ C+\frac{F(0)}{1-F(0)}+2^{\frac{2p-1}{p-1}}\right\}\Gamma\left(\frac{2p-1}{p-1}\right) \left(\frac{(3c)^{p}}{\Delta_{a}}\right)^{\frac{1}{p-1}}\\
 &+2\left(\frac{(6c)^{p}}{\Delta_{a}}\right)^{\frac{1}{p-1}}\left\{-F^{-1}\left(\frac{1}{T}\left(\frac{c}{\Delta_{a}}\right)^{\frac{p}{p-1}}\right)\right\}_{+}^{\frac{p}{p-1}}+2\left(\frac{c^{p}}{\Delta_{a}}\right)^{\frac{1}{p-1}}
\end{align*}
\[
\sum_{t=1}^{T}\Delta_{a}\mathbb{P}\left(E_{t,a}^{(3)}\right)\leq\left(\frac{(3c)^{p}}{\Delta_{a}}\right)^{\frac{1}{p-1}}\left\{ F^{-1}\left(1-\frac{1}{T}\left(\frac{c}{\Delta_{a}}\right)^{\frac{p}{p-1}}\right)\right\}_{+}^{\frac{p}{p-1}}+2\left(\frac{c^{p}}{\Delta_{a}}\right)^{\frac{1}{p-1}}
\]
Therefore, we can estimate the upper bound of \eqref{eq:thm3-aux1}
by combining the above results as follows
\begin{align*}
\mathbb{E}[\mathcal{R}_{T}]\leq&\sum_{a\neq a^{\star}}\Bigg[\exp\left(\frac{b_{p}\nu_{p}}{2c^{p}}\right)\left\{ C+\frac{F(0)}{1-F(0)}+2^{\frac{2p-1}{p-1}}+1\right\}\Gamma\left(\frac{2p-1}{p-1}\right)\left(\frac{(3c)^{p}}{\Delta_{a}}\right)^{\frac{1}{p-1}}\\
&+2\left(\frac{(6c)^{p}}{\Delta_{a}}\right)^{\frac{1}{p-1}}\left\{-F^{-1}\left(\frac{1}{T}\left(\frac{c}{\Delta_{a}}\right)^{\frac{p}{p-1}}\right)\right\}_{+}^{\frac{p}{p-1}}\\
&+\left(\frac{(3c)^{p}}{\Delta_{a}}\right)^{\frac{1}{p-1}}\left\{ F^{-1}\left(1-\frac{1}{T}\left(\frac{c}{\Delta_{a}}\right)^{\frac{p}{p-1}}\right)\right\}_{+}^{\frac{p}{p-1}}\\
&+4\left(\frac{c^{p}}{\Delta_{a}}\right)^{\frac{1}{p-1}}+\Delta_{a}\Bigg]\\
\leq&O\Bigg(\sum_{a\neq a^{\star}}\frac{C_{c,p,\nu_{p},F}}{\Delta_{a}^{\frac{1}{p-1}}}+\frac{(6c)^{\frac{p}{p-1}}}{\Delta_{a}^{\frac{1}{p-1}}}\left[-F^{-1}\left(\frac{c^{\frac{p}{p-1}}}{T\Delta_{a}^{\frac{p}{p-1}}}\right)\right]_{+}^{\frac{p}{p-1}}\\
&+\frac{(3c)^{\frac{p}{p-1}}}{\Delta_{a}^{\frac{1}{p-1}}}\left[F^{-1}\left(1-\frac{c^{\frac{p}{p-1}}}{T\Delta_{a}^{\frac{p}{p-1}}}\right)\right]_{+}^{\frac{p}{p-1}}+\Delta_{a}\Bigg)
\end{align*}
The theorem is proved.  
\end{proof}

\subsection{Regret Lower Bounds}

\begin{thm}\label{thm:lower-bound}For $0<c<\frac{K-1}{K-1+2^{\frac{p}{p-1}}}$
and $T\geq\frac{c^{\frac{1}{p-1}}(K-1)}{2^{\frac{p}{p-1}}}\left|F^{-1}\left(1-\frac{1}{K}\right)\right|^{\frac{p}{p-1}}$,
there exists a $K$-armed stochastic bandit problem for which the
regret of APE-RE has the following lower bound:
\begin{equation}
\mathbb{E}[\mathcal{R}_{T}]\geq\Omega\left(K^{1-\frac{1}{p}}T^{\frac{1}{p}}F^{-1}\left(1-\frac{1}{K}\right)\right)\label{eq:thm2-lower-bound}
\end{equation}
\end{thm}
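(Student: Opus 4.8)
The plan is to exhibit a deterministic-reward instance on which APE$^{2}$ is forced to waste $\Omega(T)$ pulls on sub-optimal arms, and then optimize over the gap. Fix a parameter $\Delta\in(0,1]$ to be chosen at the very end; the lower bound on $T$ in the hypothesis is exactly what guarantees that the eventual optimal choice of $\Delta$ stays in $(0,1]$. Consider the $K$-armed instance with deterministic rewards $r_{a^{\star}}=\Delta$ and $r_{a}=0$ for every $a\neq a^{\star}$. Since the rewards are constant, the $p$-robust estimator of \eqref{eq:estimator-heavy} reduces to $\hat{r}_{t,a}=0$ for $a\neq a^{\star}$ (because $\psi_{p}(0)=0$), while $\hat{r}_{t,a^{\star}}\le c_{0}\Delta$ for an absolute constant $c_{0}\ge 1$ (apply $\ln(1+u)\le u$ to the definition of $\psi_{p}$, then use $n_{t-1,a^{\star}}\ge 1$ and $\Delta\le 1$). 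Hence the only mechanism producing sub-optimal pulls is the perturbation term $\beta_{t-1,a}G_{t,a}$, and $\mathbb{E}[\mathcal{R}_{T}]=\Delta\sum_{a\neq a^{\star}}\mathbb{E}[n_{T,a}]$, so it suffices to lower-bound $\sum_{a\neq a^{\star}}\mathbb{E}[n_{T,a}]$.

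The core step is a per-round anti-concentration estimate. Set $m:=\bigl(\kappa\,c\,F^{-1}(1-1/K)/\Delta\bigr)^{p/(p-1)}$ for a small absolute constant $\kappa$, so that as long as a sub-optimal arm $a$ has been pulled fewer than $m$ times, $\beta_{t-1,a}=c/n_{t-1,a}^{1-1/p}$ is large enough that a single draw $G_{t,a}\ge F^{-1}(1-1/K)$ already makes $\beta_{t-1,a}G_{t,a}$ exceed $c_{0}\Delta\ge\hat{r}_{t-1,a^{\star}}$; note $F^{-1}(1-1/K)>0$ is well defined because $F$ has unbounded support and $F(0)\le 1/2<1-1/K$ under Assumption 2. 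I then claim that on $\{n_{t-1,a}<m\}$, conditionally on $\mathcal{H}_{t-1}$, the probability that APE$^{2}$ selects $a$ is at least an absolute constant $p_{0}$. The point is that $\{a_{t}=a\}$ contains the event that $G_{t,a}$ beats every competitor's perturbed estimate; using that the $G_{t,\cdot}$ are i.i.d., that the other sub-optimal arms carry estimate $0$, and that the optimal arm contributes only $\hat r_{t-1,a^\star}+\beta_{t-1,a^{\star}}G_{t,a^{\star}}$, one bounds this probability below by roughly $1-F\bigl(F^{-1}(1-1/K)\bigr)=1/K$ times an $O(1)$ factor, where the conditions $c<\frac{K-1}{K-1+2^{p/(p-1)}}$ and $T\ge\frac{c^{1/(p-1)}(K-1)}{2^{p/(p-1)}}\bigl|F^{-1}(1-1/K)\bigr|^{p/(p-1)}$ are precisely what keep that $O(1)$ factor bounded away from $0$.

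Given the per-round bound, the summation is routine. For a fixed sub-optimal $a$, the process $S_{t}:=\sum_{s\le t}\bigl(\mathbb{I}[a_{s}=a]-p_{0}\bigr)\mathbb{I}[n_{s-1,a}<m]$ is a submartingale with $S_{0}=0$, so $\mathbb{E}[S_{T}]\ge 0$ gives $\mathbb{E}\bigl[\min(n_{T,a},m)\bigr]\ge p_{0}\,\mathbb{E}\bigl[\#\{s\le T:n_{s-1,a}<m\}\bigr]\ge p_{0}m$ once $T\ge m$ (the $m$-th pull of $a$ happens at a round whose count was still $<m$, so there are at least $m$ such rounds). Summing over the $K-1$ sub-optimal arms yields $\mathbb{E}[\mathcal{R}_{T}]\ge (K-1)p_{0}m\Delta$. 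Substituting the value of $m$ makes the right-hand side a constant multiple of $(K-1)\bigl(cF^{-1}(1-1/K)\bigr)^{p/(p-1)}\Delta^{-1/(p-1)}$, which is decreasing in $\Delta$; taking $\Delta$ as small as the budget constraint $(K-1)m\le T$ permits — namely $\Delta$ of order $cF^{-1}(1-1/K)\bigl((K-1)/T\bigr)^{(p-1)/p}$, which is the value that is $\le 1$ exactly under the stated hypothesis on $T$ — gives $\mathbb{E}[\mathcal{R}_{T}]\ge\Omega\bigl(K^{1-1/p}T^{1/p}F^{-1}(1-1/K)\bigr)$.

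The \textbf{main obstacle} is the per-round anti-concentration claim of the second paragraph: one must lower-bound, uniformly over all admissible histories, the chance that the under-explored arm $a$ wins the $\arg\max$, while simultaneously controlling (i) the deterministic $+\Delta$ advantage of the optimal arm, (ii) the perturbations of the other sub-optimal arms, whose scale factors $\beta_{t-1,a'}$ can be just as large as that of $a$ when they too are under-explored, and (iii) the unboundedness of $G$, which forbids any pointwise control of $\beta_{t-1,a^{\star}}G_{t,a^{\star}}$. Balancing these three effects is exactly where the two numerical hypotheses on $c$ and $T$ are consumed; everything else — the estimator-bias correction, the submartingale summation, and the final optimization over $\Delta$ — is bookkeeping.
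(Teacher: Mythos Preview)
Your plan has a genuine gap at exactly the place you flag as the ``main obstacle'': the per-round anti-concentration claim cannot hold with an \emph{absolute} constant $p_{0}$, and with the $1/K$-sized bound you actually sketch the final answer loses the $K^{1-1/p}$ factor.

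Concretely, take any history in which another sub-optimal arm $a'\neq a,a^{\star}$ has $n_{t-1,a'}=1$ while $n_{t-1,a}=m-1$. Then $\beta_{t-1,a'}=c$ whereas $\beta_{t-1,a}=c/(m-1)^{1-1/p}$, so the event $\{a_{t}=a\}$ forces $G_{t,a}/G_{t,a'}>(m-1)^{1-1/p}$, whose probability tends to $0$ as $m\to\infty$. Hence no absolute $p_{0}$ can work. You yourself write that the bound comes out to ``roughly $1-F(F^{-1}(1-1/K))=1/K$ times an $O(1)$ factor'', which already contradicts the earlier assertion that $p_{0}$ is absolute; and even the $c/K$ version is not uniform over histories, as the example shows. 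If one pretends $p_{0}\asymp 1/K$ is valid and reruns your bookkeeping, the sum over arms gives only $\Theta(m\Delta)$ and the optimized lower bound is $\Theta(T^{1/p}F^{-1}(1-1/K))$ with no $K^{1-1/p}$ factor, which is not the claimed rate.

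The paper avoids this entirely by never lower-bounding the selection probability of a \emph{specific} sub-optimal arm. It fixes $\Delta=\tfrac{1}{2}c^{1/p}((K-1)/T)^{1-1/p}F^{-1}(1-1/K)$ up front, introduces the event $E_{t}=\{\sum_{a\neq a^{\star}}n_{t,a}\le cT\}$, and splits into two cases. If $\mathbb{P}(E_{t})\le 1/2$ for some $t$, then on $E_{t}^{c}$ at least $cT$ sub-optimal pulls have already occurred and the bound is immediate. Otherwise $\mathbb{P}(E_{t})>1/2$ for all $t$, which forces $n_{t-1,a^{\star}}\ge(1-c)T$ under $E_{t-1}$; this makes $\beta_{t-1,a^{\star}}$ small enough that the hypothesis on $c$ gives $\mathbb{P}(G_{t,a^{\star}}\le\Delta/\beta_{t-1,a^{\star}})\ge 1-1/K$. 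For the remaining factor one lower-bounds $\mathbb{P}\bigl(\exists\,a\neq a^{\star}:\beta_{t-1,a}G_{t,a}\ge 2\Delta\bigr)$ via H\"older on $\sum_{a\neq a^{\star}}n_{t-1,a}^{1-1/p}$ and the \emph{log-concavity} of $F$ (so that $\prod_{a}F(x_{a})\le F(\bar{x})^{K-1}$), obtaining $1-(1-1/K)^{K-1}\ge 1/2$. The product yields $\mathbb{P}(a_{t}\neq a^{\star})\ge 1/8$ for every $t$, hence $\mathbb{E}[\mathcal{R}_{T}]\ge \Delta T/8$. Both your obstacles (ii) and (iii) are handled by this case split plus the aggregate-over-arms argument; the log-concavity from Assumption~2 is essential and does not appear in your outline.
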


\begin{proof}
We construct a $K$-armed multi-armed bandit problem with deterministic
rewards of which the regret analysis presents the regret bound \eqref{eq:thm2-lower-bound}.
Let the optimal arm $a^{\star}$ give the reward of $\Delta=\frac{1}{2}c^{\frac{1}{p}}\left(\frac{(K-1)}{T}\right)^{1-\frac{1}{p}}F^{-1}\left(1-\frac{1}{K}\right)$
whereas the other arms provide zero rewards. Note that $\Delta\in[0,1]$ for $T\geq\frac{c^{\frac{1}{p-1}}(K-1)}{2^{\frac{p}{p-1}}}\left|F^{-1}\left(1-\frac{1}{K}\right)\right|^{\frac{p}{p-1}}$
and the estimator becomes $\hat{r}_{a}=\Delta\mathbb{I}[a=a^{\star}]$
since there is no noise. Let $E_{t}$ be the set of events which satisfy
\[
\sum_{a\neq a^{\star}}n_{t,a}\leq cT
\]
If $\mathbb{P}(E_{t})\leq1/2$ holds for some $t\in[1,\cdots,T]$, then the regret bound is computed
as follows
\[
\mathbb{E}[\mathcal{R}_{T}]\geq\frac{1}{2}\mathbb{E}[\mathcal{R}_{t}|E_{t}^{c}]\geq\frac{cT}{2}\Delta=\frac{c^{1+\frac{1}{p}}}{4}(K-1)^{1-\frac{1}{p}}T^{\frac{1}{p}}F^{-1}\left(1-\frac{1}{K}\right)
\]
hence it satisfies \eqref{eq:thm2-lower-bound}.
Otherwise, if $\mathbb{P}(E_{t})>1/2$ holds for all $t\in[1,\cdots,T]$, it is sufficient to prove $\mathbb{P}(a_{t}\neq a^{\star})\geq1/8$.
Then, it holds
\[
\mathbb{E}[\mathcal{R}_{T}]=\sum_{t=1}^{T}\Delta\mathbb{P}(a_{t}= a^{\star})\geq\frac{T}{8}\Delta=\frac{c^{\frac{1}{p}}}{16}(K-1)^{1-\frac{1}{p}}T^{\frac{1}{p}}F^{-1}\left(1-\frac{1}{K}\right)
\]
and we get the desired result since $0<c<\frac{K-1}{K-1+2^{\frac{p}{p-1}}}$. 

Now, the remaining part is to prove that $\mathbb{P}(a_{t}\neq a^{\star})\geq1/8$ holds.
First, we observe that
\begin{align*}
\mathbb{P}&(a_{t}\neq a^{\star})=\mathbb{P}\left(\bigcup_{a\neq a^{\star}}\left\{ \hat{r}_{a^{\star}}+\beta_{t,a^{\star}}G_{t,a^{\star}}\leq\hat{r}_{a}+\beta_{t,a}G_{t,a}\right\} \right)\\
 &\geq\mathbb{P}\left(E_{t-1}\right)\mathbb{P}\left(\bigcup_{a\neq a^{\star}}\left\{ \hat{r}_{a^{\star}}+\beta_{t,a^{\star}}G_{t,a^{\star}}\leq2\Delta\leq\hat{r}_{a}+\beta_{t,a}G_{t,a}\right\} \Big|E_{t-1}\right)\\ 
 &\geq\frac{1}{2}\mathbb{E}\left[\mathbb{P}\left(G_{t,a^{\star}}\leq\frac{\Delta}{\beta_{t,a^{\star}}}\Big|\mathcal{H}_{t-1},E_{t-1}\right)\mathbb{P}\left(\bigcup_{a\neq a^{\star}}\left\{ 2\Delta\leq\beta_{t,a}G_{t,a}\right\} \Big|\mathcal{H}_{t-1},E_{t-1}\right)\Bigg|E_{t-1}\right]\\
 &\geq\frac{1}{2}\mathbb{E}\Bigg[\mathbb{P}\left(G_{t,a^{\star}}\leq\frac{\Delta\left((1-c)T\right)^{1-\frac{1}{p}}}{c}\Big|\mathcal{H}_{t-1},E_{t-1}\right)\\
&\times\mathbb{P}\left(\bigcup_{a\neq a^{\star}}\left\{ 2\Delta\leq\beta_{t,a}G_{t,a}\right\} \Big|\mathcal{H}_{t-1},E_{t-1}\right)\Bigg|E_{t-1}\Bigg]
\end{align*}
where the last inequality holds due to $n_{t-1,a^{\star}}\geq(1-c)T$
provided $E_{t-1}$.
Since $c<\frac{K-1}{K-1+2^{\frac{p}{p-1}}}$, we have,
\begin{align*}
\frac{\Delta\left((1-c)T\right)^{1-\frac{1}{p}}}{c} & =\left(\frac{(1-c)(K-1)}{2^{\frac{p}{p-1}}c}\right)^{1-\frac{1}{p}}F^{-1}\left(1-\frac{1}{K}\right)>F^{-1}\left(1-\frac{1}{K}\right).
\end{align*}
Hence, $\mathbb{P}\left(G_{t,a^{\star}}\leq\frac{\Delta\left((1-c)T\right)^{1-\frac{1}{p}}}{c}\Big|\mathcal{H}_{t-1},E_{t-1}\right)\geq1-\frac{1}{K}$
so that
\[
\mathbb{P}(a_{t}\neq a^{\star})\geq\frac{1}{2}\left(1-\frac{1}{K}\right)\mathbb{E}\left[\mathbb{P}\left(\bigcup_{a\neq a^{\star}}\left\{ 2\Delta\leq\beta_{t,a}G_{t,a}\right\} \Big|\mathcal{H}_{t-1},E_{t-1}\right)\Bigg|E_{t-1}\right].
\]
Observe that
\begin{align*}
 & \mathbb{P}\left(\bigcup_{a\neq a^{\star}}\left\{ 2\Delta\leq\beta_{t,a}G_{t,a}\right\} \Big|\mathcal{H}_{t-1},E_{t-1}\right)\\
 & \geq1-\mathbb{P}\left(\bigcap_{a\neq a^{\star}}\left\{ G_{t,a}\leq\frac{2\Delta}{\beta_{t,a}}\right\} \Big|\mathcal{H}_{t-1},E_{t-1}\right)\\
 & \geq1-\prod_{a\neq a^{\star}}F\left(\frac{2\Delta\left(n_{t-1,a}\right)^{1-\frac{1}{p}}}{c}\right)\\
 & \geq1-\left|F\left(2\Delta\frac{\sum_{a\neq a^{\star}}\left(n_{t-1,a}\right)^{1-\frac{1}{p}}}{c(K-1)}\right)\right|^{K-1},
\end{align*}
where the last inequality holds by the log-concavity of $F$. Under
$E_{t-1}$, note that
\[
\sum_{a\neq a^{\star}}\left(n_{t-1,a}\right)^{1-\frac{1}{p}}\leq\left(\sum_{a\neq a^{\star}}1^{p}\right)^{\frac{1}{p}}\left(\sum_{a\neq a^{\star}}n_{t-1,a}\right)^{1-\frac{1}{p}}\leq\left(K-1\right)^{\frac{1}{p}}\left(cT\right)^{1-\frac{1}{p}}
\]
which implies
\begin{align*}
F\left(2\Delta\frac{\sum_{a\neq a^{\star}}\left(n_{t-1,a}\right)^{1-\frac{1}{p}}}{c(K-1)}\right) & \leq F\left(2\Delta c^{-\frac{1}{p}}\left(\frac{T}{(K-1)}\right)^{1-\frac{1}{p}}\right)=1-\frac{1}{K}
\end{align*}
Therefore, we get
\[
\mathbb{P}(a_{t}\neq a^{\star})\geq\frac{1}{2}\left(1-\frac{1}{K}\right)\left(1-\left(1-\frac{1}{K}\right)^{K-1}\right)\geq\frac{1}{8}
\]
since $1-\frac{1}{K}\geq \frac{1}{2}$ and $1-\left(1-\frac{1}{K}\right)^{K-1}\geq \frac{1}{2}$ hold for $K\geq 2$ and the theorem is proved.
\end{proof}

\section{Regret Bounds of Specific Perturbations}
\begin{cor}
Suppose $G$ follows a Weibull distribution with a parameter $k \leq 1$ with $\lambda>1$ with $c>0$.
Then, the problem dependent regret bound is 
$$\mathbb{E}\left[\mathcal{R}_{T}\right] \leq O\left(\sum_{a\neq a^{\star}}\frac{C_{c,p,\nu_{p},F}}{\Delta_{a}^{\frac{1}{p-1}}}+
\left(\frac{(3c\lambda)^{p}}{\Delta_{a}}\right)^{\frac{1}{p-1}}\left[\ln\left(\frac{T\Delta_{a}^{\frac{p}{p-1}}}{c^{\frac{p}{p-1}}}\right)\right]^{\frac{p}{k(p-1)}}+\Delta_{a}\right).$$
The problem independent regret bound is, $\mathbb{E}\left[\mathcal{R}_{T}\right] =\Theta\left(\lambda^{\frac{p}{p-1}}K^{1-\frac{1}{p}}T^{\frac{1}{p}}\ln\left(K\right)^{\frac{1}{k}}\right)$.

The minimum rate is achieved at $k=1$, $\mathbb{E}\left[\mathcal{R}_{T}\right]=\Theta\left(K^{1-\frac{1}{p}}T^{\frac{1}{p}}\ln\left(K\right)\right)$.
\end{cor}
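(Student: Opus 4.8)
The plan is to specialise the general bounds of Theorem~\ref{thm:gen_upper_bnd} and Theorem~\ref{thm:gen_lower_bnd} to the Weibull CDF $F(x)=1-\exp(-(x/\lambda)^{k})$ on $(0,\infty)$, whose inverse is $F^{-1}(u)=\lambda\bigl(\ln\tfrac{1}{1-u}\bigr)^{1/k}$. First I would check that $F$ satisfies Assumption~\ref{asmp:gen_anti_concentration}. The condition $F(0)=0\le 1/2$ is immediate. For log-concavity it suffices that $f/F$ is non-increasing; differentiating $\log(f/F)$ reduces the claim to $(k-1)\bigl(1-e^{-(x/\lambda)^{k}}\bigr)\le k(x/\lambda)^{k}$, which holds for every $x>0$ whenever $k\le 1$ because the left-hand side is non-positive while the right-hand side is positive. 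For the integral condition, $h(x)=\tfrac{k}{\lambda}(x/\lambda)^{k-1}$ and $1-F(x)=e^{-(x/\lambda)^{k}}$, so $\int_{0}^{\infty}\tfrac{h(x)e^{-x}}{1-F(x)}\,dx=\tfrac{k}{\lambda}\int_{0}^{\infty}(x/\lambda)^{k-1}\exp\bigl((x/\lambda)^{k}-x\bigr)\,dx$; for $k<1$ the exponent tends to $-\infty$ and the integrand is integrable at $0$, while for $k=1$ the integral equals $\tfrac{1}{\lambda}\int_{0}^{\infty}e^{-(1-1/\lambda)x}\,dx=\tfrac{1}{\lambda-1}<\infty$, using $\lambda>1$. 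This defines a finite $C_{F}$ (hence a finite $C_{c,p,\nu_{p},F}$) depending only on $k,\lambda,p,\nu_{p},c$; when $k=1$ the hazard rate is constant and the bounded-hazard form of the assumption applies.

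With the assumption verified, I would substitute $F^{-1}$ into Theorem~\ref{thm:gen_upper_bnd}. Since the Weibull support is $(0,\infty)$ we have $F^{-1}(q)>0$ for all $q\in(0,1)$, so the $\bigl[-F^{-1}(\cdot)\bigr]_{+}$ term vanishes. For the surviving term, with $q=\tfrac{c^{p/(p-1)}}{T\Delta_{a}^{p/(p-1)}}$ one gets $F^{-1}(1-q)=\lambda\bigl(\ln(TB_{c,a})\bigr)^{1/k}$ where $B_{c,a}=(\Delta_{a}/c)^{p/(p-1)}$, and raising to the power $p/(p-1)$ and multiplying by $(3c)^{p/(p-1)}\Delta_{a}^{-1/(p-1)}$ produces exactly $A_{c,\lambda,a}\bigl(\ln(B_{c,a}T)\bigr)^{p/(k(p-1))}$ with $A_{c,\lambda,a}=\bigl((3c\lambda)^{p}/\Delta_{a}\bigr)^{1/(p-1)}$. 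Together with the $C_{c,p,\nu_{p},F}/\Delta_{a}^{1/(p-1)}$ and $\Delta_{a}$ terms of Theorem~\ref{thm:gen_upper_bnd}, this is the claimed gap-dependent bound.

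For the gap-independent bound I would run the usual peeling argument: for a threshold $\Delta_{0}$, the arms with $\Delta_{a}\le\Delta_{0}$ contribute at most $\Delta_{0}T$ in total (because $\sum_{a}n_{T,a}=T$), while each arm with $\Delta_{a}>\Delta_{0}$ contributes at most the gap-dependent expression, which on $[\Delta_{0},1]$ is maximised at $\Delta_{0}$ once $\ln(TB_{c,\Delta_{0}})\gtrsim p/(k(p-1))$. Balancing $\Delta_{0}T$ against $(K-1)A_{c,\lambda,\Delta_{0}}\bigl(\ln(TB_{c,\Delta_{0}})\bigr)^{p/(k(p-1))}$ forces $\Delta_{0}\asymp\bigl((3c\lambda)^{p/(p-1)}K(\ln K)^{p/(k(p-1))}/T\bigr)^{(p-1)/p}$, so that $\ln(TB_{c,\Delta_{0}})=O(\ln K)$ with $c,\lambda,p,k$ treated as constants; the crucial point is that this choice collapses the would-be $\ln(T)$ factor to $\ln(K)$. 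Substituting back gives $O\bigl(\lambda^{p/(p-1)}K^{1-1/p}T^{1/p}(\ln K)^{1/k}\bigr)$, valid once $T$ is large enough relative to $K$ for the threshold to lie in $(0,1]$. The lower bound is immediate from Theorem~\ref{thm:gen_lower_bnd}: $F^{-1}(1-1/K)=\lambda(\ln K)^{1/k}$ gives $\mathbb{E}[\mathcal{R}_{T}]\ge\Omega\bigl(K^{1-1/p}T^{1/p}\lambda(\ln K)^{1/k}\bigr)$, so the gap-independent rate is $\Theta(\cdot)$ up to the constant $\lambda$-power. Finally, since $(\ln K)^{1/k}$ is increasing in $1/k$ when $\ln K>1$, the minimum over $k\le 1$ is attained at $k=1$ (with $\lambda>1$ a fixed constant), yielding $\Theta\bigl(K^{1-1/p}T^{1/p}\ln K\bigr)$.

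The part I expect to require the most care is verifying Assumption~\ref{asmp:gen_anti_concentration} over the full range $k\le 1$ — the log-concavity reduction and, above all, the finiteness of the $C_{F}$ integral, which is precisely where the hypotheses $k\le 1$ and $\lambda>1$ are consumed — together with tuning the peeling threshold $\Delta_{0}$ so that the logarithmic factor becomes $\ln(K)$ rather than $\ln(T)$ while tracking the dependence on $c,\lambda,\nu_{p}$.
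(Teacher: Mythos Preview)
Your proposal is correct and follows essentially the same route as the paper: verify Assumption~\ref{asmp:gen_anti_concentration} for the Weibull CDF, observe that the $[-F^{-1}(\cdot)]_{+}$ term vanishes because the support is $(0,\infty)$, plug $F^{-1}(1-q)=\lambda(\ln(1/q))^{1/k}$ into Theorem~\ref{thm:gen_upper_bnd}, choose the threshold $\Delta\asymp c(K/T)^{1-1/p}(\ln K)^{1/k}$ for the gap-independent bound, and invoke Theorem~\ref{thm:gen_lower_bnd} for the matching lower bound. Your treatment is in fact slightly more careful than the paper's in two places: you explicitly check log-concavity (the paper omits this), and your qualitative finiteness argument for $C_{F}$ avoids the paper's step $(z/\lambda)^{k}\le z/\lambda$, which actually fails for $z<\lambda$ when $k<1$ even though the conclusion $C_{F}<\infty$ remains correct.
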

\begin{proof}
The CDF of a Weibull distribution with $k\leq 1$ is given as 
$$
F(x) = 1-\exp\left(-\left(\frac{x}{\lambda}\right)^{k}\right)
$$
Then, its inverse is 
$$
F^{-1}(y) = \lambda\left[\ln\left(\frac{1}{1-y}\right)\right]^{\frac{1}{k}},
$$
Then, 
$$
F^{-1}\left(1-\frac{c^{\frac{p}{p-1}}}{T\Delta_{a}^{\frac{p}{p-1}}}\right)^{\frac{p}{p-1}}= \lambda^{\frac{p}{p-1}}\left[\ln\left(\frac{T\Delta_{a}^{\frac{p}{p-1}}}{c^{\frac{p}{p-1}}}\right)\right]^{\frac{p}{k(p-1)}}.
$$
Thus, we compute $C$ as follows,
\begin{align*}
\bigintssss_{0}^{\infty}\frac{h(z)\exp\left(-z\right)}{1-F\left(z\right)}dz=& \bigintssss_{0}^{\infty}\frac{k}{\lambda}\left(\frac{z}{\lambda}\right)^{k-1}\frac{\exp\left(-\left(\frac{z}{\lambda}\right)^{k}\right)\exp\left(-z\right)}{\exp\left(-2\left(\frac{z}{\lambda}\right)^{k}\right)}dz\\
=& \bigintssss_{0}^{\infty}\frac{k}{\lambda}\left(\frac{z}{\lambda}\right)^{k-1}\exp\left(-z+\left(\frac{z}{\lambda}\right)^{k}\right)dz\\ 
\leq&\bigintssss_{0}^{\infty}\frac{k}{\lambda}\left(\frac{z}{\lambda}\right)^{k-1}\exp\left(-\frac{\lambda-1}{\lambda}z\right)dz\\
=&\frac{k}{(\lambda-1)^{k}}\bigintssss_{0}^{\infty}z^{k-1}\exp\left(-z\right)dz=\frac{k\Gamma\left(k\right)}{(\lambda-1)^{k}}=\frac{\Gamma\left(k+1\right)}{(\lambda-1)^{k}}\\
\leq&\frac{\Gamma\left(2\right)}{(\lambda-1)^{k}}=(\lambda-1)^{-k}.
\end{align*}
For $\frac{(6c)^{\frac{p}{p-1}}}{\Delta_{a}^{\frac{1}{p-1}}}\left[-F^{-1}\left(\frac{c^{\frac{p}{p-1}}}{T\Delta_{a}^{\frac{p}{p-1}}}\right)\right]_{+}^{\frac{p}{p-1}}$,
we have,
$$
\frac{(6c)^{\frac{p}{p-1}}}{\Delta_{a}^{\frac{1}{p-1}}}\left[-F^{-1}\left(\frac{c^{\frac{p}{p-1}}}{T\Delta_{a}^{\frac{p}{p-1}}}\right)\right]_{+}^{\frac{p}{p-1}} = 0
$$
since the support of $x$ is $\left(0,\infty\right)$.
Then, the problem dependent regret bound becomes,
\begin{align}
\mathbb{E}\left[\mathcal{R}_{T}\right] \leq&\sum_{a\neq a^{\star}}\Bigg[\exp\left(\frac{b_{p}\nu_{p}}{2c^{p}}\right)\left\{ C_{1}+\frac{F(0)}{1-F(0)}+2^{\frac{2p-1}{p-1}}+1\right\}\Gamma\left(\frac{2p-1}{p-1}\right)\left(\frac{(3c)^{p}}{\Delta_{a}}\right)^{\frac{1}{p-1}}\\
 &+\frac{(6c)^{\frac{p}{p-1}}}{\Delta_{a}^{\frac{1}{p-1}}}\left[-F^{-1}\left(\frac{c^{\frac{p}{p-1}}}{T\Delta_{a}^{\frac{p}{p-1}}}\right)\right]_{+}^{\frac{p}{p-1}}\\
 &+\left(\frac{(3c)^{p}}{\Delta_{a}}\right)^{\frac{1}{p-1}}\left\{ F^{-1}\left(1-\frac{1}{T}\left(\frac{c}{\Delta_{a}}\right)^{\frac{p}{p-1}}\right)\right\}^{\frac{p}{p-1}}\\
 &+\left(\frac{c^{p}}{\Delta_{a}}\right)^{\frac{1}{p-1}}+\Delta_{a}\Bigg]\\
 \leq&\sum_{a\neq a^{\star}}\Bigg[\exp\left(\frac{b_{p}\nu_{p}}{2c^{p}}\right)\left[ (\lambda-1)^{-k}+2^{\frac{2p-1}{p-1}}+1\right]\Gamma\left(\frac{2p-1}{p-1}\right)\left(\frac{(3c)^{p}}{\Delta_{a}}\right)^{\frac{1}{p-1}}\\
 &+\left(\frac{(3c\lambda)^{p}}{\Delta_{a}}\right)^{\frac{1}{p-1}}\left[ \ln\left(\frac{T\Delta_{a}^{\frac{p}{p-1}}}{c^{\frac{p}{p-1}}}\right)\right]^{\frac{p}{k(p-1)}}+\left(\frac{c^{p}}{\Delta_{a}}\right)^{\frac{1}{p-1}}+\Delta_{a}\Bigg]\\
\leq&O\left(\sum_{a\neq a^{\star}}\frac{C_{c,p,\nu_{p},F}}{\Delta_{a}^{\frac{1}{p-1}}}+
\left(\frac{(3c\lambda)^{p}}{\Delta_{a}}\right)^{\frac{1}{p-1}}\left[\ln\left(\frac{T\Delta_{a}^{\frac{p}{p-1}}}{c^{\frac{p}{p-1}}}\right)\right]^{\frac{p}{k(p-1)}}+\Delta_{a}\right).
\end{align}
The problem independent regret bound can be obtained by choosing the threshold of the minimum gap as $\Delta=c\left(K/T\right)^{1-\frac{1}{p}}\ln(K)^{\frac{1}{k}}$.
\begin{align}
\mathbb{E}\left[\mathcal{R}_{T}\right]\leq&\sum_{a\neq a^{\star}, \Delta_{a} > \Delta}\frac{C_{c,p,\nu_{p},F}}{\Delta_{a}^{\frac{1}{p-1}}}+
\left(\frac{(3c\lambda)^{p}}{\Delta_{a}}\right)^{\frac{1}{p-1}}\left[\ln\left(\frac{T\Delta_{a}^{\frac{p}{p-1}}}{c^{\frac{p}{p-1}}}\right)\right]^{\frac{p}{k(p-1)}}+\Delta T\\
\leq&K\left(\frac{C_{c,p,\nu_{p},F}}{\Delta^{\frac{1}{p-1}}}+
\left(\frac{(3c\lambda)^{p}}{\Delta}\right)^{\frac{1}{p-1}}\left[\ln\left(\frac{T\Delta^{\frac{p}{p-1}}}{c^{\frac{p}{p-1}}}\right)\right]^{\frac{p}{k(p-1)}}\right)+\Delta T\\
\leq&K\frac{C_{c,p,\nu_{p},F}\cdot T^{\frac{1}{p}}}{K^{\frac{1}{p}}\ln\left(K\right)^{\frac{1}{k(p-1)}}}+
K\left(\frac{(3\lambda)^{\frac{p}{p-1}}cT^{\frac{1}{p}}}{K^{\frac{1}{p}}\ln(K)^{\frac{1}{k(p-1)}}}\right)\left[\ln\left(K\ln\left(K\right)^{\frac{p}{k(p-1)}}\right)\right]^{\frac{p}{k(p-1)}}\\
&+c K^{1-\frac{1}{p}}T^{\frac{1}{p}}\ln(K)^{\frac{1}{k}}\\
\leq&\frac{C_{c,p,\nu_{p},F}\cdot K^{1-\frac{1}{p}}T^{\frac{1}{p}}}{\ln\left(K\right)^{\frac{1}{k(p-1)}}}+
c(3\lambda)^{\frac{p}{p-1}}K^{1-\frac{1}{p}}T^{\frac{1}{p}}\left(\frac{\left[\left(1+\frac{p}{k(p-1)}\right)\ln\left(K\right)\right]^{\frac{p}{k(p-1)}}}{\ln(K)^{\frac{1}{k(p-1)}}}\right)\\
&+c K^{1-\frac{1}{p}}T^{\frac{1}{p}}\ln(K)^{\frac{1}{k}}\\
\leq&O\left((c\lambda)^{\frac{p}{p-1}}K^{1-\frac{1}{p}}T^{\frac{1}{p}}\left(\frac{\ln\left(K\right)^{\frac{p}{k(p-1)}}}{\ln(K)^{\frac{1}{k(p-1)}}}\right)\right)=O\left((c\lambda)^{\frac{p}{p-1}}K^{1-\frac{1}{p}}T^{\frac{1}{p}}\ln\left(K\right)^{\frac{1}{k}}\right).
\end{align}

Consequently, the lower bound is simply obtained by Theorem \ref{thm:lower-bound}, so we can conclude that regret bound is tight. The corollary is proved.
\end{proof}

\begin{cor}
Suppose $G$ follows a generalized extreme value distribution with a parameter with $0\leq \zeta < 1$ and $\lambda > 1$.
Then, the problem dependent regret bound is 
$$
\mathbb{E}\left[\mathcal{R}_{T}\right] \leq O\left(\sum_{a\neq a^{\star}} \frac{C_{c,p,\nu_{p},F}}{\Delta_{a}^{\frac{1}{p-1}}} + 2\left(\frac{(6c\lambda)^{p}}{\Delta_{a}}\right)^{\frac{1}{p-1}}\ln_{\zeta}\left(\frac{T\Delta_{a}^{\frac{p}{p-1}}}{c^{\frac{p}{p-1}}}\right)^{\frac{p}{p-1}}+\Delta_{a}\right).
$$
Let $\ln_{\zeta}(x):=\frac{x^\zeta-1}{\zeta}$, then, the problem independent regret bound is
$$
\Omega\left(K^{1-\frac{1}{p}}T^{\frac{1}{p}}\ln_{\zeta}\left(K\right)\right)\leq \mathbb{E}\left[\mathcal{R}_{T}\right] \leq O\left(K^{1-\frac{1}{p}}T^{\frac{1}{p}}\frac{\ln_{\zeta}\left(K^{\frac{2p-1}{p-1}}\right)^{\frac{p}{p-1}}}{\ln_{\zeta}(K)^{\frac{1}{p-1}}}\right).
$$
The minimum rate is achieved at $\zeta=0$,
$\mathbb{E}\left[\mathcal{R}_{T}\right] = \Theta\left(K^{1-\frac{1}{p}}T^{\frac{1}{p}}\ln\left(K\right)\right)$.
\end{cor}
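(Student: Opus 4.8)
The plan is to specialize the general upper bound (Theorem~\ref{thm:gen_upper_bnd}) and lower bound (Theorem~\ref{thm:gen_lower_bnd}) to the GEV distribution, all of whose relevant structure is carried by a closed form for $F^{-1}$, and then to run the threshold argument already used for the Weibull corollary. I would first record that for shape $\zeta\in[0,1)$ and scale $\lambda>1$ the GEV has CDF $F(x)=\exp(-(1+\zeta x/\lambda)^{-1/\zeta})$ (the Gumbel CDF $\exp(-e^{-x/\lambda})$ when $\zeta=0$), so $F(0)=e^{-1}<\tfrac12$ and, by inversion, $F^{-1}(y)=\lambda\,\ln_{\zeta}\!\big(1/\ln(1/y)\big)$. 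Then I would check Assumption~\ref{asmp:gen_anti_concentration}: the hazard rate of the GEV is bounded (it vanishes at both ends of the support and is continuous between), so it suffices to bound $\int_0^\infty e^{-z}/(1-F(z))\,dz$; since $1-e^{-t}\ge t/2$ for $t\in(0,1]$ gives $1-F(z)\ge\tfrac12(1+\zeta z/\lambda)^{-1/\zeta}$ for $z\ge0$, this integral is $\le 2\int_0^\infty e^{-z}(1+\zeta z/\lambda)^{1/\zeta}\,dz<\infty$ when $\zeta>0$ and $\le 2\int_0^\infty e^{-(1-1/\lambda)z}\,dz=2\lambda/(\lambda-1)<\infty$ when $\zeta=0$ --- this is exactly where $\lambda>1$ is used. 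Hence $M_F,C_F$ and the constant $C_{c,p,\nu_p,F}$ of Theorem~\ref{thm:gen_upper_bnd} are finite. (Log-concavity of $F$ is needed only for the lower bound; see below.)

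For the gap-dependent bound I would substitute $F^{-1}$ into the three nontrivial terms of Theorem~\ref{thm:gen_upper_bnd} with $u:=T\Delta_a^{p/(p-1)}/c^{p/(p-1)}$. The only elementary input is $t\le -\ln(1-t)\le t/(1-t)$ on $[0,1)$: with $t=1/u$ and monotonicity of $\ln_{\zeta}$ it yields $F^{-1}(1-1/u)\le\lambda\ln_{\zeta}(u)$, so the overestimation term is $\le(3c\lambda)^{p/(p-1)}\Delta_a^{-1/(p-1)}\ln_{\zeta}(u)^{p/(p-1)}$. For the underestimation term, $-F^{-1}(1/u)=\lambda(1-(\ln u)^{-\zeta})/\zeta\le\lambda/\zeta$ is a $T$-independent constant for $\zeta>0$ and is absorbed into $C_{c,p,\nu_p,F}$, whereas for $\zeta=0$ it equals $\lambda\ln\ln u\le\lambda\ln u=\lambda\ln_{\zeta}(u)$, which the overestimation term dominates; merging the two reproduces the $2((6c\lambda)^p/\Delta_a)^{1/(p-1)}\ln_{\zeta}(u)^{p/(p-1)}$ in the statement, and adding the constant and $\Delta_a$ terms finishes it. For the lower bound I would apply Theorem~\ref{thm:gen_lower_bnd} once $F^{-1}(1-1/K)=\Omega(\ln_{\zeta}(K))$ is in hand: from $-\ln(1-1/K)\le1/(K-1)$ we get $F^{-1}(1-1/K)\ge\lambda\ln_{\zeta}(K-1)\ge\tfrac12\lambda\ln_{\zeta}(K)$ for $K$ large, so $\mathbb{E}[\mathcal{R}_T]=\Omega(K^{1-1/p}T^{1/p}\ln_{\zeta}(K))$.

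The gap-independent bound follows the Weibull scheme: split $\sum_{a\neq a^\star}$ at a minimum-gap threshold $\Delta$ of order $c(K/T)^{1-1/p}$ times a $\ln_{\zeta}(K)$-type factor, bound the arms with $\Delta_a\le\Delta$ by $\Delta T$ in total, bound the rest by $K$ times the gap-dependent bound evaluated at $\Delta_a=\Delta$, and optimize $\Delta$. Substituting $\Delta$ makes the argument of $\ln_{\zeta}$ equal to a power of $K$ times a $\ln_{\zeta}(K)$-factor, which one collapses back to $\ln_{\zeta}(K^{(2p-1)/(p-1)})^{p/(p-1)}/\ln_{\zeta}(K)^{1/(p-1)}$; together with the matching lower bound this gives the stated gap-independent bound. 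Finally, comparing this $K$-factor over $\zeta\in[0,1)$ --- using that $\zeta\mapsto\ln_{\zeta}(w)$ is increasing for $w>1$ --- shows it is smallest at $\zeta=0$, where it is $\Theta(\ln K)$ and coincides with the lower bound, giving the optimal rate $\Theta(K^{1-1/p}T^{1/p}\ln K)$ at $\zeta=0$.

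The step I expect to be hardest is the gap-independent one. Since $\ln_{\zeta}$ obeys only the twisted rule $\ln_{\zeta}(ab)=\ln_{\zeta}(a)+a^{\zeta}\ln_{\zeta}(b)$ rather than additivity, substituting the $\ln_{\zeta}(K)$-dependent threshold into $\ln_{\zeta}(T\Delta^{p/(p-1)}/c^{p/(p-1)})$ and recovering the clean $K$-factor of the statement requires tracking these cross terms carefully; furthermore the per-gap summand $\delta\mapsto\delta^{-1/(p-1)}\ln_{\zeta}(T\delta^{p/(p-1)}/c^{p/(p-1)})^{p/(p-1)}$ is decreasing on the split range only once $T$ is large relative to $\zeta$ --- because for $\zeta>0$ the factor $\ln_{\zeta}$ grows polynomially rather than logarithmically as in the Weibull case --- so this monotonicity must be established before replacing the sum by $K$ times its endpoint value. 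A secondary subtlety is that for $\zeta>0$ the GEV is log-concave only on a left portion of its support, so when applying Theorem~\ref{thm:gen_lower_bnd} one must keep the arguments of $F$ appearing in the Jensen step inside that region, or exploit the symmetry of the deterministic hard instance.
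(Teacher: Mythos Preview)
Your plan is correct and matches the paper's: verify Assumption~\ref{asmp:gen_anti_concentration} for the GEV (the paper gets $\sup h\le1/\lambda$ and $M_F\le(2\lambda-1)/(\lambda-1)$ via $(1+\zeta z/\lambda)^{1/\zeta}\le e^{z/\lambda}$, uniformly in $\zeta$), insert $F^{-1}$ into Theorems~\ref{thm:gen_upper_bnd}--\ref{thm:gen_lower_bnd}, and run the threshold $\Delta=c(K/T)^{1-1/p}\ln_\zeta(K)$; the only real difference is that the paper controls the underestimation term by $\lambda\ln_\zeta(u)$ through the chain $-\ln_\zeta(1/\ln x)\le\ln_\zeta(\ln x)\le\ln_\zeta(x)$ instead of splitting into $\zeta>0$ and $\zeta=0$ as you do. Your log-concavity worry is unnecessary---since $(\ln F)''(x)=-\tfrac{1+\zeta}{\lambda^2}(1+\zeta x/\lambda)^{-1/\zeta-2}<0$ for every $\zeta\ge0$, the GEV CDF is log-concave on its entire support and Theorem~\ref{thm:gen_lower_bnd} applies without restriction---while your monotonicity concern for the gap-independent step is legitimate and is glossed over in the paper as well (it simply writes $\ln_\zeta(x\ln_\zeta(x)^{p/(p-1)})\le\ln_\zeta(x^{(2p-1)/(p-1)})$ and substitutes).
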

\begin{proof}
The CDF of a generalized extreme value distribution with $0\leq \zeta < 1$ is given as 
$$
F(x) = \exp\left(-\left(1+\zeta \frac{x}{\lambda}\right)^{-1/\zeta}\right).
$$
Then, its inverse is 
$$
F^{-1}(y) = \lambda\frac{\left[\ln(1/y)\right]^{-\zeta}-1}{\zeta} \leq \lambda\frac{\left[1-y\right]^{-\zeta}-1}{\zeta},
$$
and
$$
\lambda\frac{\left[\ln(1/y)\right]^{-\zeta}-1}{\zeta} \geq \lambda\frac{\left[\frac{y}{1-y}\right]^{\zeta}-1}{\zeta}
$$
where $\ln(x) \leq x - 1$ is used.
Then, 
$$\left[F^{-1}\left(1-\frac{c^{\frac{p}{p-1}}}{T\Delta_{a}^{\frac{p}{p-1}}}\right)\right]^\frac{p}{p-1}
\leq\lambda^{\frac{p}{p-1}}\left[\frac{\left(T\Delta_{a}^{\frac{p}{p-1}}/c^{\frac{p}{p-1}}\right)^{\zeta}-1}{\zeta}\right]^{\frac{p}{p-1}}.
$$
We compute the $\sup h$ can be obtained as follows,
\begin{align*}
\sup h =& \sup_{x\in[0,\infty]} \frac{\left(1+\zeta \frac{x}{\lambda}\right)^{-1/\zeta-1}\exp\left(-\left(1+\zeta \frac{x}{\lambda}\right)^{-1/\zeta}\right)}{\lambda\left(1-\exp\left(-\left(1+\zeta \frac{x}{\lambda}\right)^{-1/\zeta}\right)\right)}\\
=& \sup_{t\in[0,1]}\frac{t^{\zeta+1}\exp(-t)}{\lambda(1-\exp(-t))}\leq \sup_{t\in[0,1]}\frac{t\exp(-t)}{\lambda(1-\exp(-t))} = \frac{1}{\lambda}.
\end{align*}
$M$ can be obtained as,
\begin{align*}
\bigintssss_{0}^{\infty}\frac{\exp\left(-z\right)}{1-F\left(z\right)}dz=& \bigintssss_{0}^{\infty}\frac{\exp\left(-z\right)}{1-\exp\left(-\left(1+\zeta \frac{z}{\lambda}\right)^{-1/\zeta}\right)}dz\\
\leq& \bigintssss_{0}^{\infty}\left(1+\left(1+\zeta \frac{z}{\lambda}\right)^{1/\zeta}\right)\exp\left(-z\right)dz\\
=&  1 + \bigintssss_{0}^{\infty}\left(1+\zeta \frac{z}{\lambda}\right)^{1/\zeta}\exp\left(-z\right)dz\\
\leq&  1 + \bigintssss_{0}^{\infty}\exp\left(-z + \frac{\ln(1+\zeta \frac{z}{\lambda})}{\zeta}\right)dz\\
\leq&  1 + \bigintssss_{0}^{\infty}\exp\left(-z + \frac{z}{\lambda}\right)dz\\
=&1+\frac{\lambda}{\lambda-1}\;\because\;\lambda > 1\\
=&\frac{2\lambda-1}{\lambda-1}=:M_{1}.
\end{align*}
Hence, $\sup h\cdot M_{1} \leq \frac{2\lambda-1}{\lambda(\lambda-1)}\leq \frac{2}{\lambda-1}$.

For $\frac{(6c)^{\frac{p}{p-1}}}{\Delta_{a}^{\frac{1}{p-1}}}\left[-F^{-1}\left(\frac{c^{\frac{p}{p-1}}}{T\Delta_{a}^{\frac{p}{p-1}}}\right)\right]_{+}^{\frac{p}{p-1}}$,
we have,
\begin{align*}
\frac{(6c)^{\frac{p}{p-1}}}{\Delta_{a}^{\frac{1}{p-1}}}\left[-F^{-1}\left(\frac{c^{\frac{p}{p-1}}}{T\Delta_{a}^{\frac{p}{p-1}}}\right)\right]_{+}^{\frac{p}{p-1}}&= \frac{(6c\lambda)^{\frac{p}{p-1}}}{\Delta_{a}^{\frac{1}{p-1}}}\left[\frac{1-\ln\left(\frac{T\Delta_{a}^{\frac{p}{p-1}}}{c^{\frac{p}{p-1}}}\right)^{-\zeta}}{\zeta}\right]^{\frac{p}{p-1}}\\
&\leq\frac{(6c\lambda)^{\frac{p}{p-1}}}{\Delta_{a}^{\frac{1}{p-1}}}\left[\frac{\ln\left(\frac{T\Delta_{a}^{\frac{p}{p-1}}}{c^{\frac{p}{p-1}}}\right)^{\zeta}-1}{\zeta}\right]^{\frac{p}{p-1}}\\
&\leq\frac{(6c\lambda)^{\frac{p}{p-1}}}{\Delta_{a}^{\frac{1}{p-1}}}\left[\frac{\left(\frac{T\Delta_{a}^{\frac{p}{p-1}}}{c^{\frac{p}{p-1}}}\right)^{\zeta}-1}{\zeta}\right]^{\frac{p}{p-1}}\\
&\leq \frac{(6c\lambda)^{\frac{p}{p-1}}}{\Delta_{a}^{\frac{1}{p-1}}}\ln_{\zeta}\left(\frac{T\Delta_{a}^{\frac{p}{p-1}}}{c^{\frac{p}{p-1}}}\right)^{\frac{p}{p-1}},
\end{align*}
where $-\ln_{\zeta}(1/\ln(x)) \leq \ln_{\zeta}(\ln(x)) \leq \ln_{\zeta}(x)$ is used.

Then, the problem dependent regret bound becomes,
\begingroup
\allowdisplaybreaks
\begin{align}
\mathbb{E}\left[\mathcal{R}_{T}\right] \leq&\sum_{a\neq a^{\star}}\Bigg[\exp\left(\frac{b_{p}\nu_{p}}{2c^{p}}\right)\left\{ \|h\|_{\infty}M+\frac{F(0)}{1-F(0)}+2^{\frac{2p-1}{p-1}}+1\right\}\Gamma\left(\frac{2p-1}{p-1}\right)\left(\frac{(3c)^{p}}{\Delta_{a}}\right)^{\frac{1}{p-1}}\\
 &+\frac{(6c)^{\frac{p}{p-1}}}{\Delta_{a}^{\frac{1}{p-1}}}\left[-F^{-1}\left(\frac{c^{\frac{p}{p-1}}}{T\Delta_{a}^{\frac{p}{p-1}}}\right)\right]_{+}^{\frac{p}{p-1}}\\
 &+\left(\frac{(3c)^{p}}{\Delta_{a}}\right)^{\frac{1}{p-1}}\left[ F^{-1}\left(1-\frac{1}{T}\left(\frac{c}{\Delta_{a}}\right)^{\frac{p}{p-1}}\right)\right]_{+}^{\frac{p}{p-1}}\\
 &+\left(\frac{c^{p}}{\Delta_{a}}\right)^{\frac{1}{p-1}}+\Delta_{a}\Bigg]\\
\leq&\sum_{a\neq a^{\star}}\Bigg[\exp\left(\frac{b_{p}\nu_{p}}{2c^{p}}\right)\left[\frac{2}{\lambda-1}+\frac{e}{e-1}+2^{\frac{2p-1}{p-1}}+1\right]\Gamma\left(\frac{2p-1}{p-1}\right)\left(\frac{(3c)^{p}}{\Delta_{a}}\right)^{\frac{1}{p-1}}\\
 &+\frac{(6c\lambda)^{\frac{p}{p-1}}}{\Delta_{a}^{\frac{1}{p-1}}}\ln_{\zeta}\left(\frac{T\Delta_{a}^{\frac{p}{p-1}}}{c^{\frac{p}{p-1}}}\right)^{\frac{p}{p-1}}+\left(\frac{(3c\lambda)^{p}}{\Delta_{a}}\right)^{\frac{1}{p-1}}\ln_{\zeta}\left(\frac{T\Delta_{a}^{\frac{p}{p-1}}}{c^{\frac{p}{p-1}}}\right)^{\frac{p}{p-1}}\\
 &+\left(\frac{c^{p}}{\Delta_{a}}\right)^{\frac{1}{p-1}}+\Delta_{a}\Bigg]\\
\leq& O\left(\sum_{a\neq a^{\star}} \frac{C_{c,p,\nu_{p},F}}{\Delta_{a}^{\frac{1}{p-1}}} + 2\left(\frac{(6c\lambda)^{p}}{\Delta_{a}}\right)^{\frac{1}{p-1}}\ln_{\zeta}\left(\frac{T\Delta_{a}^{\frac{p}{p-1}}}{c^{\frac{p}{p-1}}}\right)^{\frac{p}{p-1}}+\Delta_{a}\right),
\end{align}
\endgroup
where $\ln_{\zeta}(x):=\frac{x^\zeta-1}{\zeta}$.

The problem independent regret bound can be obtained by choosing the threshold of the minimum gap as $\Delta=c\left(\frac{K}{T}\right)^{1-\frac{1}{p}}\ln_{\zeta}(K)$
Note that $\lim_{\zeta\rightarrow0}\frac{x^\zeta -1}{\zeta }=\ln(x)$
\begingroup
\allowdisplaybreaks
\begin{align}
\mathbb{E}\left[\mathcal{R}_{T}\right] \leq&\sum_{\Delta_{a}>\Delta}\Bigg[\exp\left(\frac{b_{p}\nu_{p}}{2c^{p}}\right)\left[\frac{\lambda+1}{\lambda-1}+\frac{e}{e-1}+2^{\frac{2p-1}{p-1}}\right]\Gamma\left(\frac{2p-1}{p-1}\right)\left(\frac{(3c)^{p}}{\Delta_{a}}\right)^{\frac{1}{p-1}}\\
 &+2\left(\frac{(6c\lambda)^{p}}{\Delta_{a}}\right)^{\frac{1}{p-1}}\ln_{\zeta}\left(\frac{T\Delta_{a}^{\frac{p}{p-1}}}{c^{\frac{p}{p-1}}}\right)^{\frac{p}{p-1}}\\
 &+\left(\frac{c^{p}}{\Delta_{a}}\right)^{\frac{1}{p-1}}\Bigg]+\Delta T\\
 \leq&K\Bigg[\exp\left(\frac{b_{p}\nu_{p}}{2c^{p}}\right)\left[\frac{\lambda+1}{\lambda-1}+\frac{e}{e-1}+2^{\frac{2p-1}{p-1}}\right]\Gamma\left(\frac{2p-1}{p-1}\right)\left(\frac{(3c)^{p}}{\Delta}\right)^{\frac{1}{p-1}}\\
 &+2\left(\frac{(6c\lambda)^{p}}{\Delta}\right)^{\frac{1}{p-1}}\ln_{\zeta}\left(\frac{T\Delta^{\frac{p}{p-1}}}{c^{\frac{p}{p-1}}}\right)^{\frac{p}{p-1}}\\
 &+\left(\frac{c^{p}}{\Delta}\right)^{\frac{1}{p-1}}\Bigg]+\Delta T\\
 \leq&\exp\left(\frac{b_{p}\nu_{p}}{2c^{p}}\right)\left[\frac{\lambda+1}{\lambda-1}+\frac{e}{e-1}+2^{\frac{2p-1}{p-1}}\right]\Gamma\left(\frac{2p-1}{p-1}\right)(3\lambda)^{\frac{p}{p-1}}\frac{cK^{1-\frac{1}{p}}T^{\frac{1}{p}}}{\ln_{\zeta}(K)^{\frac{1}{p-1}}}\\
 &+2(6\lambda)^{\frac{p}{p-1}}cK^{1-\frac{1}{p}}T^{\frac{1}{p}}\left(\frac{\ln_{\zeta}\left(K\ln_{\zeta}(K)^{\frac{p}{p-1}}\right)^{\frac{p}{p-1}}}{\ln_{\zeta}(K)^{\frac{1}{p-1}}}\right)\\
 &+c\frac{K^{1-\frac{1}{p}}T^{\frac{1}{p}}}{\ln_{\zeta}(K)^{\frac{1}{p-1}}}+cK^{1-\frac{1}{p}}T^{\frac{1}{p}}\ln_{\zeta}(K)\\
 \leq&\exp\left(\frac{b_{p}\nu_{p}}{2c^{p}}\right)\left[\frac{\lambda+1}{\lambda-1}+\frac{e}{e-1}+2^{\frac{2p-1}{p-1}}\right]\Gamma\left(\frac{2p-1}{p-1}\right)(3\lambda)^{\frac{p}{p-1}}\frac{cK^{1-\frac{1}{p}}T^{\frac{1}{p}}}{\ln_{\zeta}(K)^{\frac{1}{p-1}}}\\
 &+2(6\lambda)^{\frac{p}{p-1}}cK^{1-\frac{1}{p}}T^{\frac{1}{p}}\left(\frac{\ln_{\zeta}\left(K^{^{\frac{2p-1}{p-1}}}\right)^{\frac{p}{p-1}}}{\ln_{\zeta}(K)^{\frac{1}{p-1}}}\right)\\
 &+c\frac{K^{1-\frac{1}{p}}T^{\frac{1}{p}}}{\ln_{\zeta}(K)^{\frac{1}{p-1}}}+cK^{1-\frac{1}{p}}T^{\frac{1}{p}}\ln_{\zeta}(K)\\
&\because \ln_{\zeta}(x\ln_{\zeta}(x)^{\frac{p}{p-1}}) \leq \ln_{\zeta}\left(x^{1+\frac{p}{p-1}}\right) \textnormal{ for $x>2$}\\
\leq& O\left(K^{1-\frac{1}{p}}T^{\frac{1}{p}}\frac{\ln_{\zeta}\left(K^{\frac{2p-1}{p-1}}\right)^{\frac{p}{p-1}}}{\ln_{\zeta}(K)^{\frac{1}{p-1}}}\right).
\end{align}
\endgroup
For the lower bound,
$$
\lambda \frac{\left[\ln\left(\frac{1}{1-\frac{1}{K}}\right)\right]^{-\zeta}-1}{\zeta}\geq \lambda \frac{\left[K-1\right]^{\zeta}-1}{\zeta}=\lambda\ln_{\zeta}\left(K-1\right).
$$

Consequently, the lower bound is simply obtained by Theorem \ref{thm:lower-bound}. The corollary is proved.
\end{proof}

\begin{cor}
Suppose $G$ follows a Gamma distribution with a parameter $\alpha \geq 1$ and $\lambda \geq 1$.
Then, the problem dependent regret bound is 
\begin{align}
\mathbb{E}\left[\mathcal{R}_{T}\right] \leq O\left(\sum_{a\neq a^{\star}}\frac{C_{c,p,\nu_{p},F}}{\Delta_{a}^{\frac{1}{p-1}}}+ \left(\frac{(3\lambda\alpha c)^{p}}{\Delta_{a}}\right)^{\frac{1}{p-1}}\ln\left(\frac{\alpha T\Delta_{a}^{\frac{p}{p-1}}}{c^{\frac{p}{p-1}}}\right)^{\frac{p}{p-1}}+\Delta_{a}\right).
\end{align}
The problem independent regret bound is
\begin{align}
\Omega\left(\lambda K^{1-\frac{1}{p}}T^{\frac{1}{p}}\ln(K)\right)\leq \mathbb{E}\left[\mathcal{R}_{T}\right] \leq& O\left(\left(\lambda\alpha\right)^{\frac{1}{p-1}}cK^{1-\frac{1}{p}}T^{\frac{1}{p}}\frac{\ln\left(\alpha K^{1+\frac{p}{p-1}}\right)^{\frac{p}{p-1}}}{\ln(K)^{\frac{1}{p-1}}}\right).
\end{align}
The minimum rate is achieved at $\alpha=1$, $\mathbb{E}\left[\mathcal{R}_{T}\right] = \Theta\left(K^{1-\frac{1}{p}}T^{\frac{1}{p}}\ln(K)\right)$.
\end{cor}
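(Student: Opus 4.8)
The plan is to follow the template already used for the Weibull and GEV corollaries: verify that the Gamma CDF satisfies Assumption 2, extract an explicit upper bound on the quantile $F^{-1}(1-\epsilon)$, feed this into Theorem \ref{thm:gen_upper_bnd}, and finally invoke Theorem \ref{thm:lower-bound} for the matching lower bound. Writing the Gamma$(\alpha,\lambda)$ density as $f(x)=\frac{1}{\Gamma(\alpha)\lambda^{\alpha}}x^{\alpha-1}e^{-x/\lambda}$ and $F(x)=\gamma(\alpha,x/\lambda)/\Gamma(\alpha)$, I would first check Assumption 2. Log-concavity holds because $(\log f)''(x)=-(\alpha-1)/x^{2}\le 0$ for $\alpha\ge 1$, and $F(0)=0\le 1/2$. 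The hazard rate $h=f/(1-F)$ of a Gamma with $\alpha\ge 1$ is increasing and bounded with $\sup_{x}h(x)=1/\lambda$, so it suffices to control $M_{F}=\int_{0}^{\infty}\frac{e^{-x}}{1-F(x)}\,dx$. Using the elementary lower bound $\Gamma(\alpha,y)=\int_{y}^{\infty}t^{\alpha-1}e^{-t}\,dt\ge y^{\alpha-1}e^{-y}$ for $\alpha\ge 1$, one gets $1-F(x)\ge (x/\lambda)^{\alpha-1}e^{-x/\lambda}/\Gamma(\alpha)$, hence $\frac{e^{-x}}{1-F(x)}\le \Gamma(\alpha)\lambda^{\alpha-1}x^{-(\alpha-1)}e^{-x(1-1/\lambda)}$; splitting the integral at a fixed $x_{0}$ (where $1-F$ is bounded below by a constant near $0$) and using $\lambda\ge 1$ for decay at infinity gives $M_{F}<\infty$, so $C_{F}\le \sup h\cdot M_{F}<\infty$. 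Since $G$ is supported on $(0,\infty)$, the underestimation term $[-F^{-1}(\cdot)]_{+}$ in Theorem \ref{thm:gen_upper_bnd} vanishes.

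The heart of the argument is an explicit bound on $F^{-1}(1-\epsilon)$. I would prove $F^{-1}(1-\epsilon)\le C\lambda\alpha\ln(\alpha/\epsilon)$ for small $\epsilon$ by a Chernoff / incomplete-gamma tail estimate: for $x/\lambda\ge 2(\alpha-1)$ the map $t\mapsto t^{\alpha-1}e^{-t/2}$ is decreasing, so $\Gamma(\alpha,x/\lambda)\le 2(x/\lambda)^{\alpha-1}e^{-x/\lambda}$, i.e. $1-F(x)\le 2(x/\lambda)^{\alpha-1}e^{-x/\lambda}/\Gamma(\alpha)$; bounding $(x/\lambda)^{\alpha-1}\le e^{x/2\lambda}$ for $x$ large relative to $\alpha$ and solving $2e^{-x/2\lambda}/\Gamma(\alpha)\le\epsilon$ yields, after inverting, $F^{-1}(1-\epsilon)\le O(\lambda\alpha\ln(\alpha/\epsilon))$. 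Setting $\epsilon=c^{p/(p-1)}/(T\Delta_{a}^{p/(p-1)})$, raising to the power $p/(p-1)$, and absorbing the factor $(\lambda\alpha)^{p/(p-1)}$ into the prefactor $\bigl((3c)^{p}/\Delta_{a}\bigr)^{1/(p-1)}$ produces precisely the term $\bigl((3\lambda\alpha c)^{p}/\Delta_{a}\bigr)^{1/(p-1)}\ln\bigl(\alpha T\Delta_{a}^{p/(p-1)}/c^{p/(p-1)}\bigr)^{p/(p-1)}$; together with the constant term (coming from $C_{F}$, $F(0)/(1-F(0))$, and the $\Gamma(\cdot)$ factors produced by the $k$-sums in Lemmas \ref{lem:first-sub-gaussian}--\ref{lem:third-sub-gaussian}) this is the stated gap-dependent bound.

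For the gap-independent bound I would split the arms at a threshold $\Delta\asymp c(\lambda\alpha)^{1/(p-1)}(K/T)^{1-1/p}\ln K$, bounding $\mathbb{E}[\mathcal{R}_{T}]\le\sum_{a:\Delta_{a}>\Delta}(\text{gap-dependent terms at }\Delta)+\Delta T$, then using the identity $(1-1/p)\cdot\frac{p}{p-1}=1$ to simplify $\Delta^{p/(p-1)}\propto (K/T)\ln(K)^{p/(p-1)}$; this turns the argument of the logarithm into $\alpha K\ln(K)^{p/(p-1)}\le \alpha K^{(2p-1)/(p-1)}$ and, after collecting powers of $c$, $\lambda$, $\alpha$, and $\ln K$, gives the claimed $O\bigl((\lambda\alpha)^{1/(p-1)}cK^{1-1/p}T^{1/p}\ln(\alpha K^{(2p-1)/(p-1)})^{p/(p-1)}/\ln(K)^{1/(p-1)}\bigr)$. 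The lower bound follows directly from Theorem \ref{thm:lower-bound}: since the $(1-1/K)$-quantile of a Gamma with $\alpha\ge 1$ satisfies $F^{-1}(1-1/K)\ge\lambda\ln(K-1)=\Omega(\lambda\ln K)$, we get $\mathbb{E}[\mathcal{R}_{T}]\ge\Omega(\lambda K^{1-1/p}T^{1/p}\ln K)$, which matches the upper bound up to constants when $\alpha=1$ (where the Gamma collapses to the $\lambda$-Exponential), yielding the minimax rate $\Theta(K^{1-1/p}T^{1/p}\ln K)$.

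\textbf{Main obstacle.} The delicate point is the two-sided control of the incomplete-gamma tail $1-F(x)=\Gamma(\alpha,x/\lambda)/\Gamma(\alpha)$: the lower bound is needed for finiteness of $M_{F}$ in Assumption 2, and the upper bound is needed to invert $F^{-1}(1-\epsilon)$ with the correct linear-in-$\alpha$ dependence; keeping the $\alpha$- and $\lambda$-powers consistent through the substitution and the threshold optimization — so that $\alpha=1$ is genuinely optimal — is the only truly fiddly part of the computation.
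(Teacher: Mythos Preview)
Your proposal is correct and follows the same template as the paper: verify Assumption~\ref{asmp:gen_anti_concentration}, bound $F^{-1}(1-\epsilon)$, plug into Theorem~\ref{thm:gen_upper_bnd}, then invoke Theorem~\ref{thm:lower-bound}. The Gamma-specific analytics differ, however. The paper relies on Alzer's two-sided bound $\bigl(1-e^{-x/(\lambda\Gamma(1+\alpha)^{1/\alpha})}\bigr)^{\alpha}\le F(x)\le(1-e^{-x/\lambda})^{\alpha}$, which inverts in one line to $\lambda\ln\bigl(1/(1-y^{1/\alpha})\bigr)\le F^{-1}(y)\le\lambda\Gamma(1+\alpha)^{1/\alpha}\ln\bigl(1/(1-y^{1/\alpha})\bigr)$; combining $\Gamma(1+\alpha)^{1/\alpha}\le\alpha$ and $\ln(1/(1-y^{1/\alpha}))\le\ln(\alpha/(1-y))$ then gives both the quantile upper bound $\lambda\alpha\ln(\alpha/\epsilon)$ and the lower bound $\lambda\ln(y/(1-y))$, and the same Alzer upper bound feeds the $C_{F}$ computation via $(1-F(z))^{2}\ge e^{-2z/\lambda}$. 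Your elementary route (integration by parts for the tail lower bound, a Chernoff-type estimate for the upper bound, stochastic dominance over $\mathrm{Exp}(\lambda)$ for the quantile lower bound) reaches the same conclusions but needs a bit more care: in particular, ``solving $2e^{-x/2\lambda}/\Gamma(\alpha)\le\epsilon$ yields $O(\lambda\alpha\ln(\alpha/\epsilon))$'' hides that the $\Gamma(\alpha)$ factor actually helps and the $\lambda\alpha$ prefactor really comes from the threshold $x/\lambda\gtrsim\alpha\ln\alpha$ needed to enter your Chernoff regime, so that step deserves one more line. Finally, the paper takes the simpler threshold $\Delta=c(K/T)^{1-1/p}\ln K$ (without your extra $(\lambda\alpha)^{1/(p-1)}$), which changes the $(\lambda\alpha)$-exponent bookkeeping slightly but not the final $\Theta(K^{1-1/p}T^{1/p}\ln K)$.
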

\begin{proof}
The CDF of a Gamma distribution is given as 
$$
F(x) = \frac{\gamma(x;\alpha,\lambda)}{\Gamma(\alpha)},
$$
where $\Gamma(\alpha)$ is a (complete) Gamma function and $\gamma(x;\alpha,\lambda)$ is an incomplete Gamma function defined as 
$$
\gamma(x;\alpha,\lambda):=\int_{0}^{x}\frac{z^{\alpha-1}\exp\left(-\frac{z}{\lambda}\right)}{\lambda^{\alpha}}dz.
$$
Before finding a lower and upper bound of $F^{-1}$,
we introduce a lower and upper bound of a Gamma distribution.
In \cite{Alzer97a}, the bounds of $F(x)$ is provided as follows, for $\alpha>1$
$$
\left(1-\exp\left(-\frac{x}{\lambda\Gamma(1+\alpha)^{\frac{1}{\alpha}}}\right)\right)^{\alpha}\leq F(x) \leq \left(1-\exp\left(-\frac{x}{\lambda}\right)\right)^{\alpha}.
$$
From these bounds, we have,
$$
\lambda \ln\left(\frac{1}{1-y^{\frac{1}{\alpha}}}\right)\leq F^{-1}(y) \leq \lambda\Gamma(1+\alpha)^{\frac{1}{\alpha}}\ln\left(\frac{1}{1-y^{\frac{1}{\alpha}}}\right).
$$
Note that the following inequality holds: for $\alpha>1$,
$$
\Gamma(\alpha+1)=\alpha(\alpha-1)\cdots(\alpha-\lfloor\alpha\rfloor+1)\Gamma\left(\alpha-\lfloor\alpha\rfloor+1\right)\leq \alpha^{\lfloor\alpha\rfloor}\Gamma\left(1\right) \leq \alpha^{\alpha}.
$$
We have a simpler upper bound as
$$
F^{-1}(y)\leq \lambda\Gamma(1+\alpha)^{\frac{1}{\alpha}}\ln\left(\frac{1}{1-y^{\frac{1}{\alpha}}}\right)\leq \lambda\alpha\ln\left(\frac{\alpha}{1-y}\right).
$$
Then, 
$$
\left[F^{-1}\left(1-\frac{1}{T}\left(\frac{c}{\Delta_{a}}\right)^{\frac{p}{p-1}}\right)\right]^{\frac{p}{p-1}} \leq \lambda^{\frac{p}{p-1}}\alpha^{\frac{p}{p-1}}\ln\left(\frac{\alpha T\Delta_{a}^{\frac{p}{p-1}}}{c^{\frac{p}{p-1}}}\right)^{\frac{p}{p-1}}.
$$
$C$ can be obtained as,
\begin{align*}
\bigintssss_{0}^{\infty}\frac{h(z)\exp\left(-z\right)}{1-F\left(z\right)}dz&=\bigintssss_{0}^{\infty}\frac{z^{\alpha-1}\exp\left(-\frac{z}{\lambda}-z\right)}{\lambda^{\alpha}\Gamma(\alpha)\left(1-\left(1-\exp\left(-\frac{z}{\lambda}\right)\right)^{\alpha}\right)^{2}}dz\\
&\leq\bigintssss_{0}^{\infty}\frac{z^{\alpha-1}\exp\left(-\frac{z}{\lambda}-z\right)}{\lambda^{\alpha}\Gamma(\alpha)\exp\left(-2\frac{z}{\lambda}\right)}dz\\
&=\bigintssss_{0}^{\infty}\frac{z^{\alpha-1}\exp\left(-z+\frac{z}{\lambda}\right)}{\lambda^{\alpha}\Gamma(\alpha)}dz=\bigintssss_{0}^{\infty}\frac{t^{\alpha-1}\exp\left(-t\right)}{(\lambda-1)^{\alpha}\Gamma(\alpha)}dt\\
&= \frac{1}{(\lambda-1)^{\alpha}}.
\end{align*}
For $\frac{(6c)^{\frac{p}{p-1}}}{\Delta_{a}^{\frac{1}{p-1}}}\left[-F^{-1}\left(\frac{c^{\frac{p}{p-1}}}{T\Delta_{a}^{\frac{p}{p-1}}}\right)\right]_{+}^{\frac{p}{p-1}}$,
we have,
$$
\frac{(6c)^{\frac{p}{p-1}}}{\Delta_{a}^{\frac{1}{p-1}}}\left[-F^{-1}\left(\frac{c^{\frac{p}{p-1}}}{T\Delta_{a}^{\frac{p}{p-1}}}\right)\right]_{+}^{\frac{p}{p-1}} = 0.
$$
since $x\in(0,\infty)$.
Then, the problem dependent regret bound becomes,
\begin{align}
\mathbb{E}\left[\mathcal{R}_{T}\right] \leq&\sum_{a\neq a^{\star}}\Bigg[\exp\left(\frac{b_{p}\nu_{p}}{2c^{p}}\right)\left\{ C+\frac{F(0)}{1-F(0)}+2^{\frac{2p-1}{p-1}}+1\right\}\Gamma\left(\frac{2p-1}{p-1}\right)\left(\frac{(3c)^{p}}{\Delta_{a}}\right)^{\frac{1}{p-1}}\\
 &+\frac{(6c)^{\frac{p}{p-1}}}{\Delta_{a}^{\frac{1}{p-1}}}\left[-F^{-1}\left(\frac{c^{\frac{p}{p-1}}}{T\Delta_{a}^{\frac{p}{p-1}}}\right)\right]_{+}^{\frac{p}{p-1}}+\left(\frac{(3c)^{p}}{\Delta_{a}}\right)^{\frac{1}{p-1}}\left[ F^{-1}\left(1-\frac{1}{T}\left(\frac{c}{\Delta_{a}}\right)^{\frac{p}{p-1}}\right)\right]_{+}^{\frac{p}{p-1}}\\
 &+\left(\frac{c^{p}}{\Delta_{a}}\right)^{\frac{1}{p-1}}+\Delta_{a}\Bigg]\\
\leq&\sum_{a\neq a^{\star}}\Bigg[\exp\left(\frac{b_{p}\nu_{p}}{2c^{p}}\right)\left[(\lambda-1)^{-\alpha}+2^{\frac{2p-1}{p-1}}+1\right]\Gamma\left(\frac{2p-1}{p-1}\right)\left(\frac{(3c)^{p}}{\Delta_{a}}\right)^{\frac{1}{p-1}}\\
 &+\left(\frac{(3\lambda\alpha c)^{p}}{\Delta_{a}}\right)^{\frac{1}{p-1}}\ln\left(\frac{\alpha T\Delta_{a}^{\frac{p}{p-1}}}{c^{\frac{p}{p-1}}}\right)^{\frac{p}{p-1}}+\left(\frac{c^{p}}{\Delta_{a}}\right)^{\frac{1}{p-1}}+\Delta_{a}\Bigg]\\
\leq& O\left(\sum_{a\neq a^{\star}}\frac{C_{c,p,\nu_{p},F}}{\Delta_{a}^{\frac{1}{p-1}}}+ \left(\frac{(3\lambda\alpha c)^{p}}{\Delta_{a}}\right)^{\frac{1}{p-1}}\ln\left(\frac{\alpha T\Delta_{a}^{\frac{p}{p-1}}}{c^{\frac{p}{p-1}}}\right)^{\frac{p}{p-1}}+\Delta_{a}\right).
\end{align}
The problem independent regret bound can be obtained by choosing the threshold of the minimum gap as $\Delta=c\left(K/T\right)^{1-\frac{1}{p}}\ln(K)$.
\begingroup
\allowdisplaybreaks
\begin{align}
\mathbb{E}\left[\mathcal{R}_{T}\right] \leq&\sum_{\Delta_{a}>\Delta }\Bigg[\exp\left(\frac{b_{p}\nu_{p}}{2c^{p}}\right)\left[(\lambda-1)^{-\alpha}+2^{\frac{2p-1}{p-1}}+1\right]\Gamma\left(\frac{2p-1}{p-1}\right)\left(\frac{(3c)^{p}}{\Delta_{a}}\right)^{\frac{1}{p-1}}\\
 &+\left(\frac{(3\lambda\alpha c)^{p}}{\Delta_{a}}\right)^{\frac{1}{p-1}}\ln\left(\frac{\alpha T\Delta_{a}^{\frac{p}{p-1}}}{c^{\frac{p}{p-1}}}\right)^{\frac{p}{p-1}}+\left(\frac{c^{p}}{\Delta_{a}}\right)^{\frac{1}{p-1}}\Bigg]+\Delta T\\
 \leq& K\Bigg[\exp\left(\frac{b_{p}\nu_{p}}{2c^{p}}\right)\left[(\lambda-1)^{-\alpha}+2^{\frac{2p-1}{p-1}}+1\right]\Gamma\left(\frac{2p-1}{p-1}\right)\left(\frac{(3c)^{p}}{\Delta}\right)^{\frac{1}{p-1}}\\
 &+\left(\frac{(3\lambda\alpha c)^{p}}{\Delta}\right)^{\frac{1}{p-1}}\ln\left(\frac{\alpha T\Delta^{\frac{p}{p-1}}}{c^{\frac{p}{p-1}}}\right)^{\frac{p}{p-1}}+\left(\frac{c^{p}}{\Delta}\right)^{\frac{1}{p-1}}\Bigg]+\Delta T\\
 \leq& \exp\left(\frac{b_{p}\nu_{p}}{2c^{p}}\right)\left[(\lambda-1)^{-\alpha}+2^{\frac{2p-1}{p-1}}+1\right]\Gamma\left(\frac{2p-1}{p-1}\right)3^{\frac{p}{p-1}}cK^{1-\frac{1}{p}}T^{\frac{1}{p}}\ln(K)^{-\frac{1}{p-1}}\\
 &+\left(3\lambda\alpha\right)^{\frac{1}{p-1}}cK^{1-\frac{1}{p}}T^{\frac{1}{p}}\frac{\ln\left(\alpha K\ln(K)^{\frac{p}{p-1}}\right)^{\frac{p}{p-1}}}{\ln(K)^{\frac{1}{p-1}}}\\
 &+cK^{1-\frac{1}{p}}T^{\frac{1}{p}}\ln(K)^{-\frac{1}{p-1}}+cK^{1-\frac{1}{p}}T^{\frac{1}{p}}\ln(K)\\
 \leq&\exp\left(\frac{b_{p}\nu_{p}}{2c^{p}}\right)\left[(\lambda-1)^{-\alpha}+2^{\frac{2p-1}{p-1}}+1\right]\Gamma\left(\frac{2p-1}{p-1}\right)3^{\frac{p}{p-1}}cK^{1-\frac{1}{p}}T^{\frac{1}{p}}\ln(K)^{-\frac{1}{p-1}}\\
 &+\left(3\lambda\alpha\right)^{\frac{1}{p-1}}cK^{1-\frac{1}{p}}T^{\frac{1}{p}}\frac{\ln\left(\alpha K^{1+\frac{p}{p-1}}\right)^{\frac{p}{p-1}}}{\ln(K)^{\frac{1}{p-1}}}\\
 &+cK^{1-\frac{1}{p}}T^{\frac{1}{p}}\ln(K)^{-\frac{1}{p-1}}+cK^{1-\frac{1}{p}}T^{\frac{1}{p}}\ln(K)\\
 \leq&O\left(\left(\lambda\alpha\right)^{\frac{1}{p-1}}cK^{1-\frac{1}{p}}T^{\frac{1}{p}}\frac{\ln\left(\alpha K^{1+\frac{p}{p-1}}\right)^{\frac{p}{p-1}}}{\ln(K)^{\frac{1}{p-1}}}\right)
\end{align}
\endgroup
For the lower bound, we use,
$$
F^{-1}(y)\geq \lambda \ln\left(\frac{1}{1-y^{\frac{1}{\alpha}}}\right)\geq \lambda \ln\left(\frac{y}{1-y}\right).
$$
Thus, the lower bound becomes
$$
\Omega\left(\lambda K^{1-\frac{1}{p}}T^{\frac{1}{p}}\ln(K)\right).
$$
\end{proof}

\begin{cor}
Suppose $G$ follows a Pareto distribution with a parameter $\alpha > \frac{p^{2}}{p-1}$ and $\lambda \geq \alpha$.
Then, the problem dependent regret bound is 
\begin{align}
\mathbb{E}\left[\mathcal{R}_{T}\right] \leq O\left(\sum_{a\neq a^{\star}}\frac{C_{c,p,\nu_{p},F}}{\Delta_{a}^{\frac{1}{p-1}}}+\left(\frac{(3\lambda c)^{p}}{\Delta_{a}}\right)^{\frac{1}{p-1}}\left[\frac{T\Delta_{a}^{\frac{p}{p-1}}}{c^{\frac{p}{p-1}}}\right]^{\frac{p}{\alpha(p-1)}}+\Delta_{a}\right).
\end{align}
For $\lambda=\alpha$, the problem independent regret bound is
\begin{align}
\Omega\left(\alpha K^{1-\frac{1}{p}+\frac{1}{\alpha}}T^{\frac{1}{p}}\right)\leq \mathbb{E}\left[\mathcal{R}_{T}\right] \leq& O\left(\alpha^{1+\frac{p^{2}}{\alpha(p-1)^{2}}}K^{1-\frac{1}{p}+\frac{1}{\alpha(p-1)}}T^{\frac{1}{p}}\right).
\end{align}
For $K>\exp\left(\frac{p^{2}}{p-1}\right)$, the minimum rate is achieved at $\alpha=\ln(K)$, $\mathbb{E}\left[\mathcal{R}_{T}\right] = \Theta\left(K^{1-\frac{1}{p}}T^{\frac{1}{p}}\ln(K)\right)$.
\end{cor}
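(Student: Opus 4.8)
The plan is to instantiate the general analysis developed in the previous sections: first extract the Pareto CDF and its inverse, then verify Assumption~2 with explicit, $T$-independent constants, substitute the resulting quantities into the general gap-dependent bound, tune a gap threshold to obtain the gap-independent bound, invoke the general lower bound (Theorem~\ref{thm:lower-bound}) for the matching lower estimate, and finally optimize over the tail index $\alpha$.

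First I would take the Pareto law on $[\lambda,\infty)$, i.e.\ $F(x)=1-(\lambda/x)^{\alpha}$ for $x\ge\lambda$ and $F\equiv 0$ on $[0,\lambda)$, so that $F^{-1}(y)=\lambda(1-y)^{-1/\alpha}$ and $F(0)=0\le 1/2$. The hazard rate is $h(x)=\alpha/x$ on $[\lambda,\infty)$ (and $0$ below), which is decreasing with $\sup h=\alpha/\lambda\le 1$ since $\lambda\ge\alpha$; hence the reduced form of Assumption~2 applies and it suffices to bound $M_{F}:=\int_{0}^{\infty}e^{-z}/(1-F(z))\,dz\le\int_{0}^{\lambda}e^{-z}\,dz+\lambda^{-\alpha}\int_{0}^{\infty}e^{-z}z^{\alpha}\,dz\le 1+\Gamma(\alpha+1)/\lambda^{\alpha}\le 2$, where the last step uses the estimate $\Gamma(\alpha+1)\le\alpha^{\alpha}$ already employed in the Gamma corollary together with $\lambda\ge\alpha$. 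Log-concavity of $F$ follows by checking that $F'/F=\alpha\lambda^{\alpha}/\bigl(x(x^{\alpha}-\lambda^{\alpha})\bigr)$ is decreasing on $[\lambda,\infty)$. Thus $C_{F}\le\sup h\cdot M_{F}\le 2$, so the constant $C_{c,p,\nu_{p},F}$ appearing in the general bound is genuinely independent of $T$.

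Next I would plug these into the general gap-dependent bound. Since the support of $G$ is contained in $(0,\infty)$, the negative-part term $[-F^{-1}(c^{p/(p-1)}/(T\Delta_{a}^{p/(p-1)}))]_{+}$ vanishes, while $[F^{-1}(1-c^{p/(p-1)}/(T\Delta_{a}^{p/(p-1)}))]^{p/(p-1)}=\lambda^{p/(p-1)}\bigl(T\Delta_{a}^{p/(p-1)}/c^{p/(p-1)}\bigr)^{p/(\alpha(p-1))}$; collecting the $\lambda$-dependence into $(3\lambda c)^{p}$ yields the claimed gap-dependent bound. For the gap-independent bound I would, as in the Weibull/Gamma corollaries, split the arm sum at a threshold $\Delta$, bound the arms with $\Delta_{a}>\Delta$ by $K$ times the per-arm bound at $\Delta_{a}=\Delta$ and the remaining arms by $\Delta T$, and choose $\Delta$ to balance the dominant polynomial term $K\,(\tfrac{(3\lambda c)^{p}}{\Delta})^{1/(p-1)}(\tfrac{T\Delta^{p/(p-1)}}{c^{p/(p-1)}})^{p/(\alpha(p-1))}$ against $\Delta T$; simplifying with $\lambda=\alpha$ and $\Gamma(\alpha+1)\le\alpha^{\alpha}$ gives $O\bigl(\alpha^{1+p^{2}/(\alpha(p-1)^{2})}K^{1-1/p+1/(\alpha(p-1))}T^{1/p}\bigr)$. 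The matching lower bound is immediate from Theorem~\ref{thm:lower-bound}: with $\lambda=\alpha$ one has $F^{-1}(1-1/K)=\lambda K^{1/\alpha}=\alpha K^{1/\alpha}$, so $\mathbb{E}[\mathcal{R}_{T}]\ge\Omega\bigl(\alpha K^{1-1/p+1/\alpha}T^{1/p}\bigr)$.

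Finally, to read off the minimax rate I would set $\alpha=\ln K$, which is admissible precisely when $K>\exp(p^{2}/(p-1))$ (ensuring $\alpha>p^{2}/(p-1)$) and $\lambda=\alpha$ meets $\lambda\ge\alpha$. Then $K^{1/(\alpha(p-1))}=e^{1/(p-1)}$ and $K^{1/\alpha}=e$ are constants, and $\alpha^{1+p^{2}/(\alpha(p-1)^{2})}=(\ln K)\cdot(\ln K)^{O(1/\ln K)}=O(\ln K)$, so the upper bound collapses to $O(K^{1-1/p}T^{1/p}\ln K)$ while the lower bound becomes $\Omega(K^{1-1/p}T^{1/p}\ln K)$, giving $\Theta(K^{1-1/p}T^{1/p}\ln K)$. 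The main obstacle is the bookkeeping in the gap-independent step: one must track the combined $\alpha$- and $\lambda$-dependence of the $(\alpha c)^{p/(p-1)}$ prefactor, the $c^{-p^{2}/(\alpha(p-1)^{2})}$ factor, and the $T^{p/(\alpha(p-1))}$ factor through the threshold optimization precisely enough to land on the exponent $1+p^{2}/(\alpha(p-1)^{2})$ of $\alpha$; by contrast, the verification of Assumption~2 and the computation of $F^{-1}$ are routine here, since, unlike the Weibull, GEV, and Gamma cases, the Pareto inverse CDF is elementary and free of logarithms.
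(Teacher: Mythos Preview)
Your proposal follows essentially the same route as the paper's proof: verify Assumption~2 for the Pareto CDF, compute $F^{-1}$, plug into the general gap-dependent bound, apply a threshold to obtain the gap-independent bound, invoke Theorem~\ref{thm:lower-bound} for the lower bound, and set $\alpha=\ln K$ for the optimal rate. The only differences are cosmetic: the paper bounds $C_{F}$ directly via $\int_{0}^{\infty}h(z)e^{-z}/(1-F(z))\,dz=\Gamma(\alpha+1)/\lambda^{\alpha}\le 1$ rather than going through $\sup h\cdot M_{F}\le 2$, and it specifies the explicit threshold $\Delta=c(K/T)^{1-1/p}\alpha$ together with the remark that the per-arm bound is decreasing in $\Delta_{a}$ exactly because $\alpha>p^{2}/(p-1)$---this is where that hypothesis enters, and you leave it implicit when replacing each arm's contribution by its value at $\Delta_{a}=\Delta$.
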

\begin{proof}
The CDF of a Pareto distribution is given as 
$$
F(x) = 1-\frac{1}{(x/\lambda)^{\alpha}}
$$
Then, its inverse is 
$$
F^{-1}(y) = \lambda\left(1-y\right)^{-\frac{1}{\alpha}},
$$
Then, 
$$
\left[F^{-1}\left(1-\frac{1}{T}\left(\frac{c}{\Delta_{a}}\right)^{\frac{p}{p-1}}\right)\right]^{\frac{p}{p-1}} = \lambda^{\frac{p}{p-1}}\left[\frac{T\Delta_{a}^{\frac{p}{p-1}}}{c^{\frac{p}{p-1}}}\right]^{\frac{p}{\alpha(p-1)}}.
$$

$C$ can be obtained as,
\begin{align*}
\bigintssss_{0}^{\infty}\frac{h(z)\exp\left(-z\right)}{1-F\left(z\right)}dz&=\bigintssss_{0}^{\infty}\frac{\alpha \lambda^{\alpha}z^{-\alpha-1}\exp\left(-z\right)}{(z/\lambda)^{-2\alpha}}dz\\
&=\bigintssss_{0}^{\infty}\frac{\alpha z^{\alpha-1}\exp\left(-z\right)}{\lambda^{\alpha}}dz\\
&=\frac{\alpha\Gamma(\alpha)}{\lambda^{\alpha}}=\frac{\Gamma(\alpha+1)}{\lambda^{\alpha}}\\
&\leq 1 \;\;\because\;\lambda \geq \alpha.
\end{align*}
For $\frac{(6c)^{\frac{p}{p-1}}}{\Delta_{a}^{\frac{1}{p-1}}}\left[-F^{-1}\left(\frac{c^{\frac{p}{p-1}}}{T\Delta_{a}^{\frac{p}{p-1}}}\right)\right]_{+}^{\frac{p}{p-1}}$,
we have,
$$
\frac{(6c)^{\frac{p}{p-1}}}{\Delta_{a}^{\frac{1}{p-1}}}\left[-F^{-1}\left(\frac{c^{\frac{p}{p-1}}}{T\Delta_{a}^{\frac{p}{p-1}}}\right)\right]_{+}^{\frac{p}{p-1}} = 0
$$
where $-F^{-1}(y)$ is always negative since the support of $x$ is $\left(\lambda,\infty\right)$.
Then, the problem dependent regret bound becomes,
\begin{align}
\mathbb{E}\left[\mathcal{R}_{T}\right] \leq&\sum_{a\neq a^{\star}}\Bigg[\exp\left(\frac{b_{p}\nu_{p}}{2c^{p}}\right)\left\{ C+\frac{F(0)}{1-F(0)}+2^{\frac{2p-1}{p-1}}+1\right\}\Gamma\left(\frac{2p-1}{p-1}\right)\left(\frac{(3c)^{p}}{\Delta_{a}}\right)^{\frac{1}{p-1}}\\
 &+\frac{(6c)^{\frac{p}{p-1}}}{\Delta_{a}^{\frac{1}{p-1}}}\left[-F^{-1}\left(\frac{c^{\frac{p}{p-1}}}{T\Delta_{a}^{\frac{p}{p-1}}}\right)\right]_{+}^{\frac{p}{p-1}}+\left(\frac{(3c)^{p}}{\Delta_{a}}\right)^{\frac{1}{p-1}}\left[ F^{-1}\left(1-\frac{1}{T}\left(\frac{c}{\Delta_{a}}\right)^{\frac{p}{p-1}}\right)\right]_{+}^{\frac{p}{p-1}}\\
 &+\left(\frac{c^{p}}{\Delta_{a}}\right)^{\frac{1}{p-1}}+\Delta_{a}\Bigg]\\
\leq&\sum_{a\neq a^{\star}}\Bigg[\exp\left(\frac{b_{p}\nu_{p}}{2c^{p}}\right)\left[2^{\frac{2p-1}{p-1}}+2\right]\Gamma\left(\frac{2p-1}{p-1}\right)\left(\frac{(3c)^{p}}{\Delta_{a}}\right)^{\frac{1}{p-1}}\\
 &+\left(\frac{(3\lambda c)^{p}}{\Delta_{a}}\right)^{\frac{1}{p-1}}\left[\frac{T\Delta_{a}^{\frac{p}{p-1}}}{c^{\frac{p}{p-1}}}\right]^{\frac{p}{\alpha(p-1)}}+\left(\frac{c^{p}}{\Delta_{a}}\right)^{\frac{1}{p-1}}+\Delta_{a}\Bigg]\\
\leq& O\left(\sum_{a\neq a^{\star}}\frac{C_{c,p,\nu_{p},F}}{\Delta_{a}^{\frac{1}{p-1}}}+ \left(\frac{(3\lambda c)^{p}}{\Delta_{a}}\right)^{\frac{1}{p-1}}\left[\frac{T\Delta_{a}^{\frac{p}{p-1}}}{c^{\frac{p}{p-1}}}\right]^{\frac{p}{\alpha(p-1)}}+\Delta_{a}\right).
\end{align}
The problem independent regret bound can be obtained by choosing the threshold of the minimum gap as $\Delta=c\left(K/T\right)^{1-\frac{1}{p}}\alpha$.
\begingroup
\allowdisplaybreaks
\begin{align}
\mathbb{E}\left[\mathcal{R}_{T}\right] \leq&\sum_{\Delta_{a}>\Delta }\Bigg[\exp\left(\frac{b_{p}\nu_{p}}{2c^{p}}\right)\left[2^{\frac{2p-1}{p-1}}+2\right]\Gamma\left(\frac{2p-1}{p-1}\right)\left(\frac{(3c)^{p}}{\Delta_{a}}\right)^{\frac{1}{p-1}}\\
 &+\left(\frac{(3\lambda c)^{p}}{\Delta_{a}}\right)^{\frac{1}{p-1}}\left[\frac{T\Delta_{a}^{\frac{p}{p-1}}}{c^{\frac{p}{p-1}}}\right]^{\frac{p}{\alpha(p-1)}}+\left(\frac{c^{p}}{\Delta_{a}}\right)^{\frac{1}{p-1}}\Bigg]+\Delta T\\
 \leq& K\Bigg[\exp\left(\frac{b_{p}\nu_{p}}{2c^{p}}\right)\left[2^{\frac{2p-1}{p-1}}+2\right]\Gamma\left(\frac{2p-1}{p-1}\right)\left(\frac{(3c)^{p}}{\Delta}\right)^{\frac{1}{p-1}}\\
 &+\left(\frac{(3\lambda c)^{p}}{\Delta}\right)^{\frac{1}{p-1}}\left[\frac{T\Delta^{\frac{p}{p-1}}}{c^{\frac{p}{p-1}}}\right]^{\frac{p}{\alpha(p-1)}}+\left(\frac{c^{p}}{\Delta}\right)^{\frac{1}{p-1}}\Bigg]+\Delta T\\
 &\because\;x^{\frac{p^{2}}{\alpha(p-1)^{2}}-\frac{1}{p-1}} \textnormal{ is decreasing for $\alpha>\frac{p^{2}}{p-1}$}\\
\leq&\exp\left(\frac{b_{p}\nu_{p}}{2c^{p}}\right)\left[2^{\frac{2p-1}{p-1}}+2\right]\Gamma\left(\frac{2p-1}{p-1}\right)3^{\frac{p}{p-1}}cK^{1-\frac{1}{p}}T^{\frac{1}{p}}\alpha^{-\frac{1}{p-1}}\\
 &+(3\lambda)^{\frac{p}{p-1}}cK^{1-\frac{1}{p}+\frac{p^{2}}{\alpha(p-1)^{2}}}T^{\frac{1}{p}}\alpha^{\frac{p^{2}}{\alpha(p-1)^{2}}-\frac{1}{p-1}}+c\alpha^{\frac{1}{p-1}}K^{1-\frac{1}{p}}T^{\frac{1}{p}}+c\alpha K^{1-\frac{1}{p}}T^{\frac{1}{p}}\\
\leq&\exp\left(\frac{b_{p}\nu_{p}}{2c^{p}}\right)\left[2^{\frac{2p-1}{p-1}}+2\right]\Gamma\left(\frac{2p-1}{p-1}\right)3^{\frac{p}{p-1}}cK^{1-\frac{1}{p}}T^{\frac{1}{p}}\alpha^{-\frac{1}{p-1}}\\
 &+3^{\frac{p}{p-1}}cK^{1-\frac{1}{p}+\frac{p^{2}}{\alpha(p-1)^{2}}}T^{\frac{1}{p}}\alpha^{1+\frac{p^{2}}{\alpha(p-1)^{2}}}+c\alpha^{\frac{1}{p-1}}K^{1-\frac{1}{p}}T^{\frac{1}{p}}+c\alpha K^{1-\frac{1}{p}}T^{\frac{1}{p}}\\
&\;\because\;\lambda=\alpha\\
\leq& O\left(c\alpha^{1+\frac{p^{2}}{\alpha(p-1)^{2}}}K^{1-\frac{1}{p}+\frac{2p}{\alpha(p-1)}}T^{\frac{1}{p}}\right).
\end{align}
\endgroup
For the minimum rate, we set $\alpha=\ln(K)$, then,
$$
O\left(\ln(K)^{1+\frac{p^{2}}{\ln(K)(p-1)^{2}}}K^{1-\frac{1}{p}+\frac{2p}{\ln(K)(p-1)}}T^{\frac{1}{p}}\right) \leq O\left(K^{1-\frac{1}{p}}T^{\frac{1}{p}}\ln(K)\right)
$$
where $\ln(K)^{1+\frac{p^{2}}{\ln(K)(p-1)^{2}}} \leq e^{\frac{p^{2}}{e(p-1)^{2}}}\ln(K)$.
For the lower bound,
$$
\Omega\left(K^{1-\frac{1}{p}}T^{\frac{1}{p}}F^{-1}\left(1-\frac{1}{K}\right)\right)=\Omega\left(\lambda K^{1-\frac{1}{p}+\frac{1}{\alpha}}T^{\frac{1}{p}}\right)\geq \Omega\left(\alpha K^{1-\frac{1}{p}+\frac{1}{\alpha}}T^{\frac{1}{p}}\right)
$$
The corollary is proved.
\end{proof}

\begin{cor}
Suppose $G$ follows a Fr\'echet distribution with a parameter with $\alpha>\frac{p^{2}}{p-1}$ and $\lambda\geq \alpha$.
Then, the problem dependent regret bound is 
\begin{align}
\mathbb{E}\left[\mathcal{R}_{T}\right] \leq O\left(\sum_{a\neq a^{\star}} \frac{C_{c,p,\nu_{p},F}}{\Delta_{a}^{\frac{1}{p-1}}} + \left(\frac{(3c\lambda )^{p}}{\Delta_{a}}\right)^{\frac{1}{p-1}}\left[ \frac{T\Delta_{a}^{\frac{p}{p-1}}}{c^{\frac{p}{p-1}}}\right]^{\frac{p}{\alpha(p-1)}}+\Delta_{a}\right).
\end{align}
For $\lambda = \alpha$, the problem independent regret bound is
\begin{align}
\Omega\left(\alpha K^{1-\frac{1}{p}+\frac{1}{\alpha}}T^{\frac{1}{p}}\right)\leq \mathbb{E}\left[\mathcal{R}_{T}\right] \leq O\left(\alpha^{1+\frac{p^{2}}{\alpha(p-1)^{2}}} K^{1-\frac{1}{p}+\frac{p^{2}}{\alpha(p-1)^{2}}}T^{\frac{1}{p}}\right).
\end{align}
For $K>\exp\left(\frac{p^{2}}{p-1}\right)$, the minimum rate is achieved at $\alpha=\ln(K)$, $\mathbb{E}\left[\mathcal{R}_{T}\right] = \Theta\left(K^{1-\frac{1}{p}}T^{\frac{1}{p}}\ln(K)\right)$.
\end{cor}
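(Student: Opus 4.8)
The plan is to follow the template of the Pareto corollary, since the Fr\'echet quantile function obeys the same two-sided estimates. First I would record the Fr\'echet CDF $F(x)=\exp\!\big(-(x/\lambda)^{-\alpha}\big)$ on $(0,\infty)$ and its inverse $F^{-1}(y)=\lambda\big(\ln(1/y)\big)^{-1/\alpha}$. Using $\ln\tfrac{1}{1-\epsilon}\ge\epsilon$ I get
\[
\Big[F^{-1}\Big(1-\tfrac1T\big(\tfrac{c}{\Delta_a}\big)^{\frac{p}{p-1}}\Big)\Big]^{\frac{p}{p-1}}\le\lambda^{\frac{p}{p-1}}\Big[\tfrac{T\Delta_a^{p/(p-1)}}{c^{p/(p-1)}}\Big]^{\frac{p}{\alpha(p-1)}},
\]
which is exactly the bound used in the Pareto case; and since $\mathrm{supp}(G)=(0,\infty)$ the under-estimation term $\big[-F^{-1}(\cdot)\big]_+^{p/(p-1)}$ is zero.

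Next I would check that $F$ meets Assumption~2. Log-concavity of $F$ is immediate because $\log F(x)=-(x/\lambda)^{-\alpha}$ has $(\log F)''(x)=-\tfrac{\alpha(\alpha+1)}{x^2}(x/\lambda)^{-\alpha}<0$, and $F(0)=0\le\tfrac12$. For the integral condition I would substitute $t=(x/\lambda)^{-\alpha}$, giving $h(x)=\tfrac{\alpha}{\lambda}\cdot\tfrac{t^{1+1/\alpha}e^{-t}}{1-e^{-t}}$. On $\{x\ge\lambda\}$ (so $t\le1$), $e^{t}-1\ge t$ gives $h(x)\le\tfrac{\alpha}{\lambda}t^{1/\alpha}\le\tfrac{\alpha}{\lambda}$; on $\{x\le\lambda\}$ (so $t\ge1$), $\tfrac{1}{1-e^{-t}}\le\tfrac{1}{1-e^{-1}}$ and $t^{1+1/\alpha}e^{-t}\le\sup_{s\ge0}s^2e^{-s}=4e^{-2}$ (using $\alpha\ge1$), so again $h(x)\le\tfrac{\alpha}{\lambda}$. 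Hence $h$ is bounded, and I use the reduced form of Assumption~2: $\tfrac{1}{1-F(z)}=\tfrac{1}{1-e^{-(z/\lambda)^{-\alpha}}}\le1+(z/\lambda)^{\alpha}$, so $\int_0^\infty\tfrac{e^{-z}}{1-F(z)}\,dz\le1+\Gamma(\alpha+1)/\lambda^{\alpha}\le2$, the last step because $\lambda\ge\alpha$ forces $\Gamma(\alpha+1)\le\alpha^{\alpha}\le\lambda^{\alpha}$. Thus $C_F$ is a constant of order $\alpha/\lambda\le1$, and substituting these estimates into the general upper regret bound (Theorem~3 in the main paper) produces the stated gap-dependent bound.

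Finally, for the gap-independent statement I would use the standard cutoff argument: split the arms at a gap level $\Delta$, bound the contribution of arms with $\Delta_a\le\Delta$ by $\Delta T$, apply the gap-dependent bound to the $\le K$ remaining arms, and optimize. With $\lambda=\alpha$ and $\Delta=c\,\alpha\,(K/T)^{1-1/p}$, the $c$- and $T$-powers collapse to $c$ and $T^{1/p}$, and bounding the leftover $K$-exponent with $p\le2$ yields $O\!\big(\alpha^{1+\frac{p^{2}}{\alpha(p-1)^{2}}}K^{1-\frac1p+\frac{p^{2}}{\alpha(p-1)^{2}}}T^{1/p}\big)$; the matching lower bound follows from Theorem~\ref{thm:lower-bound} together with $F^{-1}(1-\tfrac1K)=\lambda\big(\ln\tfrac{K}{K-1}\big)^{-1/\alpha}\ge\tfrac12\alpha K^{1/\alpha}$ (from $\ln\tfrac{K}{K-1}\le\tfrac1{K-1}$), i.e. $\Omega\!\big(\alpha K^{1-1/p+1/\alpha}T^{1/p}\big)$. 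Choosing $\alpha=\ln K$ --- legitimate exactly when $\ln K>\tfrac{p^2}{p-1}$, i.e. $K>e^{p^2/(p-1)}$ --- turns $K^{p^2/(\alpha(p-1)^2)}=e^{p^2/(p-1)^2}$ and $K^{1/\alpha}=e$ into constants while $\alpha^{1+p^2/(\alpha(p-1)^2)}=\Theta(\ln K)$, so both bounds become $\Theta(K^{1-1/p}T^{1/p}\ln K)$. I expect the only non-mechanical point to be the hazard-rate bound near the origin, where the crude estimate $h(x)\le\alpha/x$ diverges and one must instead exploit the $e^{-t}$ decay in the regime $t\ge1$ as above; the remainder is bookkeeping that parallels the Pareto proof.
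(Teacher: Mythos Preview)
Your proposal is correct and follows essentially the same route as the paper: bound $F^{-1}(1-\epsilon)$ via $\ln\tfrac{1}{1-\epsilon}\ge\epsilon$, note the under-estimation term vanishes on support $(0,\infty)$, verify Assumption~2 via a bounded hazard rate plus the integral $\int_0^\infty e^{-z}/(1-F(z))\,dz\le 1+\Gamma(\alpha+1)/\lambda^\alpha\le 2$, plug into Theorem~3, and then use the threshold $\Delta=c\alpha(K/T)^{1-1/p}$ together with Theorem~\ref{thm:lower-bound} for the gap-independent bounds. The only substantive difference is that you give a self-contained two-regime argument for $\sup h\le\alpha/\lambda$ (in fact slightly sharper), whereas the paper simply quotes $\sup h\le 2\alpha/\lambda$ from \cite{abernethy15fighting}; your direct computation also makes the log-concavity check explicit, which the paper omits.
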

\begin{proof}
The CDF of a Fr\'echet distribution is given as 
$$
F(x) = \exp\left(-\left(\frac{x}{\lambda}\right)^{-\alpha}\right)
$$
Then, its inverse is 
$$
F^{-1}(y) = \lambda\ln(1/y)^{-1/\alpha} \leq \left(1-y\right)^{-1/\alpha}
$$
and
$$
\lambda\ln(1/y)^{-1/\alpha} \geq \lambda \left(\frac{y}{1-y}\right)^{\frac{1}{\alpha}}
$$
where $\ln(x) \leq x - 1$ is used.
Then, 
$$
\left[F^{-1}\left(1-\frac{c^2}{T\Delta_{a}^2}\right)\right]^2 \leq \lambda^{2}\left[\frac{T\Delta_{a}^2}{c^2}\right]^{2/\alpha}.
$$
In \cite{abernethy15fighting}, we have
$\sup h \leq 2\frac{\alpha}{\lambda}\leq 2$ due to $\lambda \geq \alpha$,
and
$M$ can be obtained,
\begin{align*}
\bigintssss_{0}^{\infty}\frac{\exp\left(-z\right)}{\left(1-\exp\left(-\left(\frac{z}{\lambda}\right)^{-\alpha}\right)\right)}dz
\leq& \bigintssss_{0}^{\infty}\left(1+\left(\frac{z}{\lambda}\right)^{\alpha}\right)\exp\left(-z\right)dz\\
&\;\;\because\; 1/(1-\exp(-x^{-1})) \leq 1+x\\
=& 1 + \bigintssss_{0}^{\infty}\left(\frac{z}{\lambda}\right)^{\alpha}\exp\left(-z\right)dz\\
=& 1 + \frac{\Gamma\left(\alpha+1\right)}{\lambda^{\alpha}}\\
\leq& 1 + \frac{\Gamma\left(\alpha+1\right)}{\lambda^{\alpha}} \leq 2.
\end{align*}
Thus,
$$
(\sup h)M \leq 4.
$$

For $\frac{(6c)^{\frac{p}{p-1}}}{\Delta_{a}^{\frac{1}{p-1}}}\left[-F^{-1}\left(\frac{c^{\frac{p}{p-1}}}{T\Delta_{a}^{\frac{p}{p-1}}}\right)\right]_{+}^{\frac{p}{p-1}}$,
the summation is zero,
$$
\frac{(6c)^{\frac{p}{p-1}}}{\Delta_{a}^{\frac{1}{p-1}}}\left[-F^{-1}\left(\frac{c^{\frac{p}{p-1}}}{T\Delta_{a}^{\frac{p}{p-1}}}\right)\right]_{+}^{\frac{p}{p-1}} = 0,
$$
since its support is $(0,\infty)$.
Then, the problem dependent regret bound becomes,
\begin{align}
\mathbb{E}\left[\mathcal{R}_{T}\right] \leq&\sum_{a\neq a^{\star}}\Bigg[\exp\left(\frac{b_{p}\nu_{p}}{2c^{p}}\right)\left\{ \|h\|_{\infty}M_{1}+\frac{F(0)}{1-F(0)}+2^{\frac{2p-1}{p-1}}+1\right\}\Gamma\left(\frac{2p-1}{p-1}\right)\left(\frac{(3c)^{p}}{\Delta_{a}}\right)^{\frac{1}{p-1}}\\
 &+\frac{(6c)^{\frac{p}{p-1}}}{\Delta_{a}^{\frac{1}{p-1}}}\left[-F^{-1}\left(\frac{c^{\frac{p}{p-1}}}{T\Delta_{a}^{\frac{p}{p-1}}}\right)\right]_{+}^{\frac{p}{p-1}}\\
 &+\left(\frac{(3c)^{p}}{\Delta_{a}}\right)^{\frac{1}{p-1}}\left\{ F^{-1}\left(1-\frac{1}{T}\left(\frac{c}{\Delta_{a}}\right)^{\frac{p}{p-1}}\right)\right\}^{\frac{p}{p-1}}\\
 &+\left(\frac{c^{p}}{\Delta_{a}}\right)^{\frac{1}{p-1}}+\Delta_{a}\Bigg]\\
  \leq&\sum_{a\neq a^{\star}}\Bigg[\exp\left(\frac{b_{p}\nu_{p}}{2c^{p}}\right)\left\{4+2^{\frac{2p-1}{p-1}}+1\right\}\Gamma\left(\frac{2p-1}{p-1}\right)\left(\frac{(3c)^{p}}{\Delta_{a}}\right)^{\frac{1}{p-1}}\\
 &+\left(\frac{(3c\lambda )^{p}}{\Delta_{a}}\right)^{\frac{1}{p-1}}\left[ \frac{T\Delta_{a}^{\frac{p}{p-1}}}{c^{\frac{p}{p-1}}}\right]^{\frac{p}{\alpha(p-1)}}+\left(\frac{c^{p}}{\Delta_{a}}\right)^{\frac{1}{p-1}}+\Delta_{a}\Bigg]\\
\leq& O\left(\sum_{a\neq a^{\star}} \frac{C_{c,p,\nu_{p},F}}{\Delta_{a}^{\frac{1}{p-1}}} + \left(\frac{(3c\lambda )^{p}}{\Delta_{a}}\right)^{\frac{1}{p-1}}\left[ \frac{T\Delta_{a}^{\frac{p}{p-1}}}{c^{\frac{p}{p-1}}}\right]^{\frac{p}{\alpha(p-1)}}+\Delta_{a}\right).
\end{align}
The problem independent regret bound can be obtained by choosing the threshold of the minimum gap as $\Delta=c\left(K/T\right)^{1-\frac{1}{p}}\alpha$.
\begingroup
\allowdisplaybreaks
\begin{align}
\mathbb{E}\left[\mathcal{R}_{T}\right] \leq&\sum_{\Delta_a >\Delta}\Bigg[\exp\left(\frac{b_{p}\nu_{p}}{2c^{p}}\right)\left[5+2^{\frac{2p-1}{p-1}}\right]\Gamma\left(\frac{2p-1}{p-1}\right)\left(\frac{(3c)^{p}}{\Delta_{a}}\right)^{\frac{1}{p-1}}\\
 &+\left(\frac{(3c\lambda )^{p}}{\Delta_{a}}\right)^{\frac{1}{p-1}}\left[ \frac{T\Delta_{a}^{\frac{p}{p-1}}}{c^{\frac{p}{p-1}}}\right]^{\frac{p}{\alpha(p-1)}}+\left(\frac{c^{p}}{\Delta_{a}}\right)^{\frac{1}{p-1}}\Bigg]+\Delta T\\
\leq&K\Bigg[\exp\left(\frac{b_{p}\nu_{p}}{2c^{p}}\right)\left[5+2^{\frac{2p-1}{p-1}}\right]\Gamma\left(\frac{2p-1}{p-1}\right)\left(\frac{(3c)^{p}}{\Delta}\right)^{\frac{1}{p-1}}\\
 &+\left(\frac{(3c\lambda)^{p}}{\Delta}\right)^{\frac{1}{p-1}}\left[ \frac{T\Delta^{\frac{p}{p-1}}}{c^{\frac{p}{p-1}}}\right]^{\frac{p}{\alpha(p-1)}}+\left(\frac{c^{p}}{\Delta}\right)^{\frac{1}{p-1}}\Bigg]+\Delta T\\
\leq&\exp\left(\frac{b_{p}\nu_{p}}{2c^{p}}\right)\left[5+2^{\frac{2p-1}{p-1}}\right]\Gamma\left(\frac{2p-1}{p-1}\right)3^{\frac{p}{p-1}}cK^{1-\frac{1}{p}}T^{\frac{1}{p}}\alpha^{-\frac{1}{p-1}}\\
 &+3^{\frac{p}{p-1}}c\lambda^{\frac{p}{p-1}}K^{1-\frac{1}{p}+\frac{p^{2}}{\alpha(p-1)^{2}}}T^{\frac{1}{p}}\alpha^{\frac{p^{2}}{\alpha(p-1)^{2}}-\frac{1}{p-1}}+c\alpha^{\frac{1}{p-1}}K^{1-\frac{1}{p}}T^{\frac{1}{p}}+c\alpha K^{1-\frac{1}{p}}T^{\frac{1}{p}}\\
\leq&\exp\left(\frac{b_{p}\nu_{p}}{2c^{p}}\right)\left[5+2^{\frac{2p-1}{p-1}}\right]\Gamma\left(\frac{2p-1}{p-1}\right)3^{\frac{p}{p-1}}cK^{1-\frac{1}{p}}T^{\frac{1}{p}}\alpha^{-\frac{1}{p-1}}\\
 &+3^{\frac{p}{p-1}}cK^{1-\frac{1}{p}+\frac{p^{2}}{\alpha(p-1)^{2}}}T^{\frac{1}{p}}\alpha^{1+\frac{p^{2}}{\alpha(p-1)^{2}}-\frac{1}{p-1}}+c\alpha^{\frac{1}{p-1}}K^{1-\frac{1}{p}}T^{\frac{1}{p}}+c\alpha K^{1-\frac{1}{p}}T^{\frac{1}{p}}\\
\leq& O\left(\alpha^{1+\frac{p^{2}}{\alpha(p-1)^{2}}} K^{1-\frac{1}{p}+\frac{p^{2}}{\alpha(p-1)^{2}}}T^{\frac{1}{p}}\right).
\end{align}
\endgroup
The optimal rate is obtained by setting $\alpha=\ln(K)$,
$$
O\left(\ln(K)^{1+\frac{p^{2}}{\ln(K)(p-1)^{2}}} K^{1-\frac{1}{p}+\frac{2p}{\ln(K)(p-1)}}T^{\frac{1}{p}}\right) \leq O\left( K^{1-\frac{1}{p}}T^{\frac{1}{p}}\ln(K)\right),
$$
where $\ln(K)^{\frac{p^{2}}{\ln(K)(p-1)^{2}}} \leq e^{\frac{p^{2}}{e(p-1)^{2}}}$.
Before proving the lower bound, note that 
$$
F^{-1}\left(1-\frac{1}{K}\right)=\lambda\ln\left(\frac{1}{1-\frac{1}{K}}\right)^{-1/\alpha} \geq \alpha\left(K-1\right)^{1/\alpha}
$$
Consequently, the lower bound is simply obtained by Theorem \ref{thm:lower-bound}. The corollary is proved.
\end{proof}

\section{Experimental Settings}

\paragraph{Convergence of Estimator}
We compare the $p$-robust estimator with other estimators including truncated mean, median of mean, and sample mean.
To make a heavy-tailed noise, we employ a Pareto distribution as follows,
$$z_{t}\sim\textnormal{Pareto}(\alpha_{\epsilon},\lambda_{\epsilon})$$ where $\alpha_{\epsilon}$ is a shape parameter and $\lambda_{\epsilon}$ is a scale parameter.
Then, a noise is defined as $\epsilon_{t}:= z_{t} - \mathbb{E}[z_{t}]$ to make the mean of the noise zero.
In simulation, we set a true mean $y=1$ and $Y_{t}=y+\epsilon_{t}$ is observed.
The $p$-th moment of $Y_{t}$ is computed as follows,
\begin{align}
\mathbb{E}|Y_{t}|^{p}=\mathbb{E}|y+z_{t}-\mathbb{E}\left[z_{t}\right]|^{p}\leq \left(|y-\mathbb{E}\left[z_{t}\right]|+\left(\mathbb{E}|z_{t}|^{p}\right)^{1/p}\right)^{p}    
\end{align}
where the triangular inequality is used.
Since $z_{t}$ is a Pareto random variable with $\alpha_{\epsilon}$ and $\lambda_{\epsilon}$, we have, for $\alpha_{\epsilon} > p$,
$$
\mathbb{E}\left[z_{t}\right] = \frac{\alpha_{\epsilon}\lambda_{\epsilon}}{\alpha_{\epsilon}-1}
$$
and
$$
\mathbb{E}|z_{t}|^{p} = \frac{\alpha_{\epsilon}\lambda_{\epsilon}^{p}}{\alpha_{\epsilon}-p}.
$$
Hence, the upper bound of the $p$-th moment is given as 
$$
\nu_{p} := \left(\left|1-\frac{\alpha_{\epsilon}\lambda_{\epsilon}}{\alpha_{\epsilon}-1}\right|+\frac{\alpha_{\epsilon}^{1/p}\lambda_{\epsilon}}{(\alpha_{\epsilon}-p)^{1/p}}\right)^{p}.
$$
While the proposed method does not require $\nu_{p}$,
truncated mean or median of mean estimator requires $\nu_{p}$.

\paragraph{Multi-Armed Bandits with Heavy-Tailed Rewards}
Entire experimental results are shown in Figure \ref{fig:mab}.
For robust UCB \cite{bubeck2013bandits}, we modify the confidence bound as
$$
c \nu_{p}^{1/p}\left(\frac{\eta\ln(1/\delta)}{n}\right)^{1-1/p},
$$
where $c>0$.
Since the original confidence bound makes convergence slow,
we scale down the confidence bound.
This modification shows much better performance than the original robust UCB and we optimize $c$ by using the grid search over $[0.001,5.0]$.
We make the grid by dividing $[0.1,5.0]$ into $50$ parts, $[0.01,0.1]$ into $10$ parts.
Furthermore, $0.005$ and $0.001$ are also tested.
Total $62$ trials are conducted for the grid search and the best parameter is selected.
For the proposed method and DSEE \cite{vakili2013deterministic}, the best parameter is chosen by the same way.
Unlikely to other methods, 
the hyperpamrameter $q$ of GSR \cite{KagrechaNJ19} is within $[0.0,1.0]$.
Thus, we make the grid by dividing $[0.02,1.0]$ into $50$ parts, $(0.002,0.02]$ into $10$ parts and finally, $0.005$ and $0.0001$ are searched.
Total $62$ trials are conducted for the grid search and the best parameter is selected.

From a practical perspective, reducing the number of tuning parameters makes the algorithm more robust. 
In particular, the perturbations do not depend on both bound and moment. 
So, the exploration tendency is not much sensitive to the mismatch of the moment parameter. 
To verify this, we add simple simulations by mismatching the moment parameter where all other settings are the same as the experiments in the manuscript. 
As shown in the above $\mathcal{R}_{t}/t$ plot in Figure \ref{fig:practical}, (a) APE$^{2}$ with Frechet perturbation shows a robust performance while (b) the robust UCB is sensitive depending on the choice of $q$, the moment parameter for the algorithm (here $p=1.5$ is the true moment).

\begin{figure}[t]
\centering
\subfigure[$p=1.5,\Delta=0.8$]{
  \centering
  \includegraphics[width=.4\textwidth]{fig/5.png}  
  \label{fig:mab1.5-2}
  }
\subfigure[$p=1.5,\Delta=0.3$]{
  \centering
  \includegraphics[width=.4\textwidth]{fig/6.png}  
  \label{fig:mab1.5}
  }
\subfigure[$p=1.5,\Delta=0.1$]{
  \centering
  \includegraphics[width=.4\textwidth]{fig/7.png}  
  \label{fig:mab1.1-2}
  }
\subfigure[$p=1.1,\Delta=0.8$]{
  \centering
  \includegraphics[width=.4\textwidth]{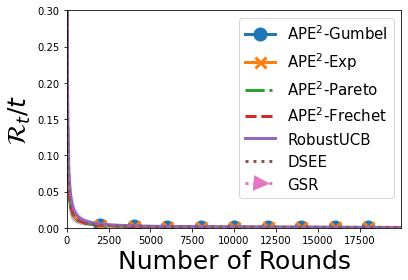}  
  \label{fig:mab1.1}
  }
\subfigure[$p=1.1,\Delta=0.3$]{
  \centering
  \includegraphics[width=.4\textwidth]{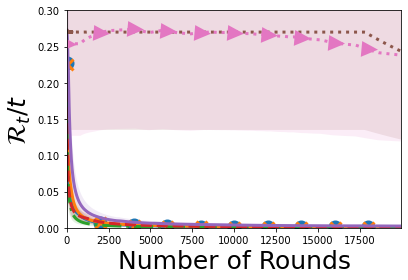}  
  \label{fig:mab1.1}
  }
\subfigure[$p=1.1,\Delta=0.1$]{
  \centering
  \includegraphics[width=.4\textwidth]{fig/8.png}  
  \label{fig:mab1.1}
  }
\caption{Time-Averaged Cumulative Regret. $p$ is the maximum order of the bounded moment of noises. $\Delta$ is the gap between the maximum and second best reward. For $p=1.5$, $\lambda_{\epsilon}=1.0$ and for $p=1.1$, $\lambda_{\epsilon}=0.1$. The solid line is an averaged error over $40$ runs and a shaded region shows a quarter standard deviation.}
\label{fig:mab}
\end{figure}

\begin{figure}[t!]
\centering
\subfigure[APE$^{2}$ with Frechet]{
  \centering
  \includegraphics[width=0.4\textwidth]{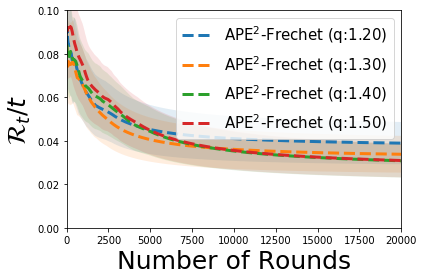}  
  \label{fig:mab2}
  }
\subfigure[Robust UCB]{
  \centering
  \includegraphics[width=0.4\textwidth]{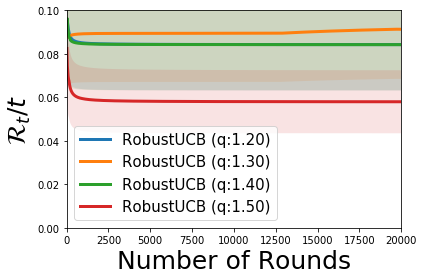}  
  \label{fig:mab2-3}
  }
\caption{$\mathcal{R}_{t}/t$ plot with $p=1.5, \Delta=0.1$. (a) APE$^{2}$ with Frechet perturbation shows a robust performance while (b) the robust UCB is sensitive depending on the choice of $q$, the moment parameter for the algorithm. Other perturbations show similar tendency. }
    \label{fig:practical}
\end{figure}

\end{document}